\numberwithin{equation}{section}
\newtheorem{mytheorem}{Theorem}[section] 
\newtheorem{myassumption}{Assumption}[section] 
\newtheorem{mylemma}{Lemma}[section]
\newtheorem{myprop}{Proposition}[section]
\newtheorem{mycorollary}{Corollary}[section]
\newtheorem{myremark}{Remark}[section]
\setlist[itemize]{itemsep=0pt,parsep=2pt,topsep=2pt}
\setlist[enumerate]{itemsep=0pt,parsep=2pt,topsep=2pt}
\def\E{\mathbb{E}}
\def\R{\mathbb{R}}
\def\I{\mathcal{I}}
\def\F{\mathcal{F}}
\def\={\mathop{\overset{\text{\rm \tiny def}}{=}}}
\def\qed{\hfill\BlackBox}
\newenvironment{proofof}[1]{\par\noindent{\bf Proof of #1\ }}{\hfill\BlackBox\\[2mm]}
\def\old#1{}
\def\Rs{\mathcal{R}}
\def\Rst{\mathcal{R}^*} 
\def\Kst{K^*}
\def\Qo{\mathcal{Q}}
\def\Qs{\mathcal{Q}_s}
\def\S{\mathcal{S}}
\def\A{\mathcal{A}}
\def\0{\mathbf{0}}
\def\1{\mathbf{1}}
\DeclareMathAlphabet{\mathbbb}{U}{bbold}{m}{n}
\newcommand{\ind}{\mathbbb{1}}
\newcommand{\mdp}{\calM}
\newcommand{\trans}{p}
\newcommand{\sspace}{\mathcal{S}}
\newcommand{\aspace}{\mathcal{A}}
\newcommand{\rspace}{\mathcal{R}}
\newcommand{\optimalr}{r_*}
\newcommand{\ExtRVIQsolutionq}{\mathcal{Q}_\infty}
\newcommand{\otrans}{\hat p}
\newcommand{\ospace}{\mathcal{O}}
\newcommand{\lspace}{\mathcal{L}}
\newcommand{\orspace}{\mathcal{\hat R}}
\newcommand{\optionr}{\hat r}
\newcommand{\ooptimalr}{\optionr_*}
\newcommand{\ooptimalitydiffsolutionq}{\mathcal{\hat Q}_{\infty}}
\newcommand{\smdpoptimalitygeneralsolutionq}{\mathcal{Q}_{s}}
\newcommand{\smdpoptimalitygeneralsolutionv}{\mathcal{V}_{s}}
\newcommand{\smdpoptimalitysolutionq}{\mathcal{Q}}
\newcommand{\smdpoptimalitysolutionv}{\mathcal{V}}
\newcommand{\ispace}{\mathcal{I}}
\newcommand{\GRVIQsolutionq}{\mathcal{Q}_\#}
\newcommand{\GRVIQsolutionrbar}{r_\#}
\DeclareMathOperator*{\argmin}{\arg\!\min}
\DeclareMathOperator*{\argmax}{\arg\!\max}
\def\zerovec{{\mathbf{0}}}
\def\onevec{{\mathbf{1}}}
\newcommand{\bbE}{\mathbb{E}}
\newcommand{\bbR}{\mathbb{R}}
\newcommand{\calS}{\mathcal{S}}
\newcommand{\calA}{\mathcal{A}}
\newcommand{\calF}{\mathcal{F}}
\newcommand{\calM}{\mathcal{M}}
\newcommand{\calX}{\mathcal{X}}
\newcommand{\calQ}{\mathcal{Q}}
\newcommand{\calV}{\mathcal{V}}
\newcommand{\V}{\mathcal{V}}
\newcommand{\norm}[1]{\left\lVert#1\right\rVert}
\newcommand{\abs}[1]{\left\lvert#1\right\rvert}
\newcommand{\cardS}{{\abs{\calS}}}
\newcommand{\cardA}{{\abs{\calA}}}
\newcommand\cites[1]{\citeauthor{#1}'s\ (\citeyear{#1})}
\begin{document}
\title{On Convergence of Average-Reward Q-Learning \\in Weakly Communicating Markov Decision Processes}

\author{\name Yi Wan\thanks{Yi Wan and Huizhen Yu contributed equally to this work.} \email wan6@ualberta.ca\\
\addr University of Alberta, Canada and Meta AI, USA\\
        \name Huizhen Yu\footnotemark[1] \email huizhen@ualberta.ca \\
        \addr University of Alberta, Canada\\
        \name Richard S. Sutton \email rsutton@ualberta.ca\\
       \addr University of Alberta and Alberta Machine Intelligence Institute (Amii), Canada
}
\editor{My editor}

\maketitle
\begin{abstract}
This paper analyzes reinforcement learning (RL) algorithms for Markov decision processes (MDPs) under the average-reward criterion. We focus on Q-learning algorithms based on relative value iteration (RVI), which are model-free stochastic analogues of the classical RVI method for average-reward MDPs. These algorithms have low per-iteration complexity, making them well-suited for large state space problems. We extend the almost-sure convergence analysis of RVI Q-learning algorithms developed by Abounadi, Bertsekas, and Borkar \citeyearpar{abounadi2001learning} from unichain to weakly communicating MDPs. This extension is important both practically and theoretically: weakly communicating MDPs cover a much broader range of applications compared to unichain MDPs, and their optimality equations have a richer solution structure (with multiple degrees of freedom), introducing additional complexity in proving algorithmic convergence. We also characterize the sets to which RVI Q-learning algorithms converge, showing that they are compact, connected, potentially nonconvex, and comprised of solutions to the average-reward optimality equation, with exactly one less degree of freedom than the general solution set of this equation. Furthermore, we extend our analysis to two RVI-based hierarchical average-reward RL algorithms using the options framework, proving their almost-sure convergence and characterizing their sets of convergence under the assumption that the underlying semi-Markov decision process is weakly communicating.
\end{abstract}

\begin{keywords}
reinforcement learning, average-reward criterion, Markov and semi-Markov decision processes, relative value iteration, asynchronous stochastic approximation
\end{keywords}

\section{Introduction}
\label{sec: Introduction}
This paper concerns continuing reinforcement learning (RL) with the average-reward criterion. In this setting, an agent interacts continually in discrete time with an environment modeled as a finite-space Markov decision process (MDP), taking actions and receiving states and reward signals. The goal is for the agent to select actions to maximize the long-term average of the expected rewards over time, known as the \emph{reward rate}. The average-reward criterion is well-suited for systems that need to sustain performance and reliability over extended periods without operational resets. For example, average-reward RL has been applied in airline revenue management \citep{gatti2022outsmarting}, mobile health intervention \citep{liao2022batch}, recommender systems \citep{warlop2018fighting}, and network service delegation \citep{bakhshi2023multi}.

Theoretical research on average-reward RL has explored a variety of approaches, each with distinct research objectives and challenges. Before elaborating on the approach that is the focus of this paper, let us briefly mention several alternative methods. For instance, some methods tackle average-reward problems indirectly, approximating them through discounted-reward problems with sufficiently large discount factors \citep[e.g.,][]{wei2020model, hong2024provably,dong2022simple} or through undiscounted finite-horizon problems with sufficiently long horizons \citep[e.g.,][]{wei2021learning}. While these approximations can introduce numerical stability issues, they can, in theory, bypass the difficulties that direct approaches face with complex state communication structures—structures relating to accessibility among different parts of the state space, a key factor in the difficulty of the average-reward problem \citep[cf.][Chapters 8 and 9]{puterman2014markov}. Among the direct approaches, some model-based methods \citep[e.g.,][]{bartlett2012regal,ouyang2017learning} aim not only to solve average-reward problems but to do so in a sample-efficient manner, albeit at the cost of higher computational demands and memory usage compared to model-free methods. In the model-free category, actor-critic methods \citep[e.g.,][]{KT2003,abbasi2019politex} remain the most practical and widely applied, particularly in robotics, although, as policy-gradient methods, they have more restrictive MDP model conditions, such as ergodic MDPs, to ensure differentiability and other regularities required by the methods. For a more comprehensive review of average-reward algorithms, readers may refer to \citet{wan2023learning}.

In this paper, we study a family of model-free average-reward RL algorithms based on the relative value iteration (RVI) approach---also known as the successive approximation method---to solving average-reward MDPs. The core idea of this approach is exemplified by the classical RVI algorithms of \citet{white1963dynamic} and \citet{schweitzer1971iterative} (cf.\ \cref{sec: classical rvi}). Grounded in the understanding of the asymptotic behavior of undiscounted value iteration in MDPs \citep{ScF77}, these RVI algorithms can be viewed as reformulations of undiscounted value iteration, designed to successively approximate the optimal reward rate and state values (representing, in some sense, the relative ``advantages'' of starting from particular states), with the ultimate goal of solving the average-reward optimality equation and deriving an optimal policy. RVI-based model-free RL algorithms share this objective and operate analogously, but differ in their stochastic and asynchronous nature. These algorithms iteratively and incrementally estimate the optimal reward rate and state-action values (or $Q$-values) using random state transition and reward data from stochastic environments, without requiring model knowledge or simultaneous updates across all state-action pairs. Due to their stochasticity and asynchrony, it was initially unclear how convergence could be ensured. The first algorithms in this family with convergence guarantees were introduced by \citet{abounadi2001learning}, who coined the term ``RVI Q-learning.'' In this paper, we broadly use this name to refer to RVI-based Q-learning algorithms, including the Differential Q-learning algorithm and further generalized formulations introduced recently by \citet{wan2021learning}.

To place RVI Q-learning in the broader context of average-reward RL, this approach is distinct from the aforementioned methods, offering its own advantages and challenges. Each iteration of RVI Q-learning has a low computational cost and minimal memory requirements: it is similar to Q-learning for discounted problems, with the only key difference being the subtraction of a scalar estimate of the optimal reward rate from the reward at each iteration. Compared with model-based tabular methods, this makes RVI Q-learning more appealing for large state space problems with computational resource constraints. Unlike indirect methods, using the RVI approach avoids potential numerical instabilities associated with large discount factors or long horizons used to approximate average-reward problems. Additionally, unlike actor-critic methods, RVI Q-learning can be applied beyond ergodic MDPs and allows for more flexible data generation, such as data gathered from off-policy RL scenarios or based on human experts' policies. However, although outside the scope of this paper, it is worth noting that incorporating function approximation into RVI Q-learning is more challenging than in actor-critic methods, as it may compromise convergence guarantees. Improving sample efficiency and online learning performance through careful data generation remains another open challenge for RVI Q-learning.

Turning now to the main focus of this paper, we address one critical aspect of the theoretical foundation of RVI Q-learning: establishing convergence guarantees under much broader MDP model conditions than previously known. Specifically, we extend the almost sure convergence analysis of RVI Q-learning developed by \citet{abounadi2001learning} from unichain to weakly communicating MDPs. This extension is important both practically and theoretically. 

As will be elaborated in \cref{sec: wc mdps}, weakly communicating MDPs comprise all MDPs where, aside from transient states eventually not encountered under any policy, every state can be reached from every other state under \emph{some} policy. This structure not only ensures, as in unichain MDPs, that sufficient information can be gathered to discover an optimal policy in RL applications where the agent learns through a continuous stream of agent-environment interactions. But, more importantly, it also allows for scenarios common in practice where some stationary policies (possibly optimal ones) can induce Markov chains with multiple recurrent classes---distinct groups of states where the process gets ``trapped'' under the policy---an outcome not permitted under the unichain model. Thus, weakly communicating MDPs cover a much broader range of applications compared to unichain MDPs. 

Theoretically, a key distinction between weakly communicating MDPs and unichain MDPs lies in the solution structure of their average-reward optimality equations. While solutions in unichain MDPs are always unique up to an additive constant, solutions in weakly communicating MDPs can possess multiple degrees of freedom (\citealp{ScF78}; see also \cref{sec: wc mdps}), introducing additional complexity in the convergence analysis of RVI Q-learning. 

As the first main contribution of this paper, we establish, for weakly communicating MDPs, the almost sure convergence of RVI Q-learning to a subset of solutions of the average-reward optimality equation (\cref{thm: Extended RVI Q-learning}), with this subset being compact, connected, and potentially nonconvex (\cref{thm: MDP characterize Q}) and possessing exactly one fewer degree of freedom than solutions of the average-reward optimality equation (\cref{thm-dim-Qs}). These results entail the earlier findings of RVI Q-learning converging to a single point in unichain MDPs \citep{abounadi2001learning,wan2021learning} as a special case, where the corresponding solution subset has zero degrees of freedom and reduces to a singleton.

Our second set of results extends the scope of the convergence analysis from RVI Q-learning to RVI-based Q-learning algorithms for \emph{hierarchical} average-reward RL. Specifically, we study two such algorithms introduced by \citet{wan2021average}. In hierarchical RL problems, instead of directly choosing from actions, the agent selects from a set of temporally abstracted actions, or \emph{options} \citep{sutton1999between}, with the objective of maximizing the average-reward rate. This hierarchical RL problem formulation is suitable for applications involving vast action spaces and long sequences of actions for task completion. For instance, for a device-assembly robot, where each action involves applying specific forces to its joints, thousands of actions might be needed just to position a single component accurately. Without a hierarchical formulation, managing such a vast action space can be impractical and inefficient. However, by employing a hierarchical approach with options like grasping, moving, and placing objects, the problem becomes more manageable and efficient to solve. While the algorithms studied in this paper assume predefined options, it is worth noting the important and active research area of automatic construction of these options. Readers interested in option discovery can refer to works such as \citep{bacon2017option,wan2022toward,sutton2023reward} and the references therein.

The underlying decision processes of hierarchical RL problems are semi-Markov decision processes (SMDPs), which generalize MDPs by allowing state transitions to occur over varying time durations. To address hierarchical RL problems, two main classes of algorithms are typically used: \emph{inter-option} algorithms, which directly operate on the underlying SMDPs by treating each option as an action in the SMDP, and \emph{intra-option} algorithms, which exploit the structures within options for greater efficiency. The two options algorithms proposed in \citet{wan2021average} and studied in this paper belong to these respective categories.

We prove the almost sure convergence of these two options algorithms, assuming that the SMDP arising from the hierarchical formulation is weakly communicating (Theorems\ \ref{thm: inter-option Differential Q-learning}, \ref{thm: intra-option Differential Q-learning}). Previous convergence analyses \citep{wan2021average} of these two algorithms require the SMDP to be unichain; additionally, these analyses  
have gaps in stability analysis (see \cref{remark: sa result compare} for a detailed discussion). Similar to RVI Q-learning, we also characterize the sets to which these options algorithms converge (\cref{prop: c2 inter = intra equations} and Theorems\ \ref{thm: SMDP characterize Q}, \ref{thm-dim-Qs}).

Our convergence analyses of RVI Q-learning and its options extensions employ a unified framework, treating these algorithms as specific instances of an abstract stochastic RVI algorithm (\cref{sec: c1 general RVI Q}), which we analyze using the ordinary differential equation (ODE)-based proof approach from stochastic approximation (SA) theory. This analysis builds on a stability proof method for SA algorithms introduced by \citet{BoM00} and the line of argument introduced by \citet{abounadi2001learning} to analyze the solution properties of the ODEs associated with RVI Q-learning in unichain MDPs. To address the more general weakly communicating MDPs or SMDPs and the more general options algorithms, we make two important extensions to these previous analyses. First, for the inter-option algorithm for solving the underlying SMDP, the noise conditions in \citet{BoM00} are too restrictive, so we extend their result to accommodate more general noise conditions. This extension is non-trivial and requires modification of critical parts of their proof. We state our result in this paper and refer interested readers to another paper for detailed proofs \citep{yu2023note}. Secondly, unlike the case studied in \citet{abounadi2001learning}, where the ODE associated with RVI Q-learning possesses a unique equilibrium, for weakly communicating MDPs/SMDPs, the ODEs associated with our algorithms generally possess multiple equilibrium points. We extend the line of analysis of \citet{abounadi2001learning} to tackle this situation by leveraging the solution structure in the average-reward optimality equations of weakly communicating MDPs and SMDPs. 

The paper is organized as follows. \cref{sec: background} provides background information on average-reward MDPs, weakly communicating MDPs, and the classical RVI algorithm. \cref{sec: action algorithms} introduces RVI Q-learning and presents our results on its associated solution set and convergence properties. \cref{sec: options algorithms} covers hierarchical average-reward RL: we first present the preliminaries on average-reward SMDPs (\cref{sec: smdp}), the background on options and their resulting SMDPs (\cref{sec: options background}), followed by our convergence results for the two average-reward options algorithms and the properties of their corresponding solution set (Sections \ref{sec: inter-option algorithms}, \ref{sec: intra-option algorithms}). The subsequent three sections provide proofs for the properties of the solution sets associated with the algorithms (\cref{sec: solution set}), the convergence theorems (\cref{sec: convergence proofs}), and the characterization of the degrees of freedom of those solution sets (\cref{sec: solution set dim}). We conclude the paper by discussing future directions in \cref{sec: conclusions}.

\section{Background}\label{sec: background}

In this section, we start by introducing average-reward MDPs and weakly communicating MDPs. We then discuss the solution structures of average-reward optimality equations in weakly communicating MDPs, and the classical RVI approach to solving these equations. The book by \citet{puterman2014markov} and the book chapter by \citet{Kal02} on finite-space MDPs serve as primary references for the majority of the background materials discussed here. Additional references will be provided for specific results. 

\subsection{MDPs with the Average-Reward Criterion} \label{MDPs with the Average-Reward Criterion}

We consider a finite state and action MDP defined by a tuple $\mdp = (\sspace, \aspace, \rspace, \trans)$. Here $\sspace$ ($\aspace$) denotes a finite set of states (actions), and $\rspace \subset \R$ is a finite
\footnote{We consider a finite reward space $\rspace$ for notational convenience only. All results presented in this paper apply to the general case where $\rspace = \R$ and the one-stage random rewards have finite variances.}
set including all possible one-stage rewards. We use $\Delta(\calX)$ to denote the probability simplex over a finite space $\calX$. The transition function $p: \sspace \times \aspace \to \Delta(\sspace \times \rspace)$ specifies state transitions and reward generation in the MDP. Specifically, when action $a \in \aspace$ is taken at state $s \in \sspace$, the system transitions to state $s' \in \sspace$ and yields a reward $r \in \rspace$ with probability $p(s', r \mid s, a)$.

A \emph{history-dependent policy} is a collection of possibly randomized decision rules, one for each time step $n$. These rules specify which action to take at a given time step, conditioned on the history of states, actions, and rewards, $s_0, a_0, r_1, s_1, \ldots, a_{n-1}, r_{n}, s_n$, realized up to that point. If all these rules are nonrandomized, the policy is called \emph{deterministic}. When they do not vary with the time step $n$ and depend only on the current state $s_n$, the policy is called \emph{stationary} and can be represented by a function that maps each state $s \in \sspace$ to a probability distribution in $\Delta(\aspace)$. Specifically, a deterministic stationary policy can be represented by a function that maps $\sspace$ into $\aspace$.

For a given initial state $S_0 = s$, applying a policy $\pi$ in the MDP induces a random process $\{S_n, A_n, R_{n+1}\}_{n \geq 0}$ of states, actions, and rewards. Let $\E_\pi [ \cdots \mid S_0=s]$ denote the corresponding expectation operator. 
The \emph{average-reward criterion} measures the \emph{reward rate of $\pi$} for each initial state $s$ according to
\begin{equation} \label{eq: r-pi mdp}
r(\pi, s) \= \, \liminf_{n \to \infty} \frac{1}{n} \sum_{k = 1}^{n} \E_\pi [R_k \mid S_0=s], \qquad \forall \, s \in \sspace.
\end{equation}
If $\pi$ is stationary, the ``$\liminf$'' in the above definition can be replaced by ``$\lim$'' based on finite-state Markov chain theory.
A policy is called \emph{optimal} if for \emph{all} initial states $s \in \sspace$, it achieves the \emph{optimal reward rate} $\optimalr(s) \= \sup_{\pi} r(\pi, s)$, where the supremum is taken over all history-dependent policies $\pi$. 

It is well-established that there exists a deterministic optimal policy in the class $\Pi$ of stationary policies. Moreover, the stationary optimal policies $\pi_*$, the set of which we denote by $\Pi_*$, enjoy a stronger sense of optimality. This is expressed by the following inequality: for every history-dependent policy $\pi$:
\begin{equation} \label{eq: mdp strong opt}
    \lim_{n \to \infty} \frac{1}{n} \sum_{k = 1}^n \E_{\pi_*}[R_k \mid S_0=s] \geq \limsup_{n \to \infty} \frac{1}{n} \sum_{k = 1}^n \E_\pi [R_k \mid S_0=s], \qquad \forall\, s \in \sspace.
\end{equation}

Before \cref{sec: options algorithms}, our focus will primarily be on stationary policies and stationary optimal policies. For brevity, we will refer to them simply as policies and optimal policies.

\subsection{Weakly Communicating MDPs: Optimality Equations \& Solution Structures} \label{sec: wc mdps}

In general, the optimal reward rate $\optimalr(s)$ may vary with the initial state $s$. In this paper, we shall focus on a class of MDPs known as weakly communicating MDPs, wherein $\optimalr(\cdot)$ remains constant. These MDPs are characterized by the communicating structure among their states, as described below.

A set $D$ of states in an MDP forms a \textit{communicating class} if for every pair of states $s, s' \in D$, there exists a policy that can reach state $s'$ from state $s$ with positive probability. If from any state within $D$, the system cannot leave $D$ regardless of the policy employed, then  $D$ is considered \emph{closed}. A state is labeled \emph{transient} under a policy if starting from this state, almost surely it will only be revisited a finite number of times. 

\begin{definition} \rm \label{def: w.c. MDP}
An MDP is classified as \emph{weakly communicating} if it possesses a unique closed communicating class of states, with all other states being transient under all policies. When the entire state space $\sspace$ is a communicating class, the MDP is called \emph{communicating}.
\end{definition}

The concepts of communicating and weakly communicating MDPs were introduced by \citet{Bat73} and \citet{platzman1977improved}, respectively. Determining whether an MDP is communicating is straightforward. Simply consider a randomized stationary policy that assigns positive probability to every action at every state. The MDP is communicating if and only if, under this policy, the resulting Markov chain $\{S_n\}$ has a single recurrent class
\footnote{A \emph{recurrent class} corresponds to a closed communicating class, as defined above, when treating the finite-state Markov chain as an uncontrolled MDP with a single dummy policy. States in these classes are called \emph{recurrent} for the Markov chain; they will almost surely be revisited infinitely often when starting from any state within their associated class.}
and no transient states. For the MDP to be weakly communicating, in addition to having a single recurrent class, the transient states of this Markov chain $\{S_n\}$ need to remain transient in the MDP under all policies. (An efficient algorithm for classifying an MDP based on its state transition dynamics is available; see \citet[Chap.\ 8.3.2]{puterman2014markov} for details.) 

In MDP and RL applications, \emph{unichain} MDPs are frequently employed to model problems. These are a subclass of weakly communicating MDPs where, under any policy, the induced Markov chain $\{S_n\}$ has a single recurrent class, together with a (possibly empty) set of transient states. As we shall discuss shortly, this subclass is much more restrictive and less general than the broader class of weakly communicating MDPs, leading to a more limited scope of applicability.

When all states have the same optimal reward rate $\optimalr$ (which is henceforth treated as a scalar), an optimal policy can be determined from a solution of the \emph{average-reward optimality equation}, given below in two equivalent forms:
\begin{align}
v(s) & = \max_{a \in \aspace} \left\{ r_{sa} - \bar r +  \sum_{s' \in \sspace} p_{ss'}^a  v(s') \right\},\qquad \forall \, s \in \calS,  \label{eq: state-value optimality equation}  \\
q(s, a) & =  r_{sa} - \bar r +  \sum_{s' \in \sspace} p_{ss'}^a \max_{a' \in \aspace} q(s', a'),\qquad \ \, \forall \, s \in \calS, \, a \in \aspace,  \label{eq: action-value optimality equation}
\end{align}    
where $r_{sa}$ and $p_{ss'}^a$ are the expected one-stage reward and the state transition probability, respectively, given by
$r_{sa} \= \sum_{s' \in \sspace} \sum_{r \in \rspace} r \cdot \trans(s', r \!\mid\! s, a)$ and $p_{ss'}^a \= \sum_{r \in \rspace} \trans(s', r \!\mid\! s, a)$.
In the first (resp.\ second) equation, referred to as the \emph{state-value} (resp.\ \emph{action-value}) \emph{optimality equation}, we solve for $(\bar r, v) \in \R \times \R^{|\sspace|}$ (resp.\ $(\bar r,  q) \in \R \times \R^{|\sspace| \times |\aspace|}$). 
We will use both of these equations in this paper, as the RL algorithms we study aim to solve the second one, while for analysis, it is sometimes convenient to use the first one. 

It is well-established that these optimality equations have solutions. Moreover, for any solution, its $\bar r$-component always coincides with $\optimalr$, and if a policy solves the corresponding maximization problems in the right-hand side (r.h.s.)\ of either equation, the policy is optimal.

Let $\calV$ denote the set of solutions for $v$ in \eqref{eq: state-value optimality equation}, and let $\calQ$ denote the set of solutions for $q$ in \eqref{eq: action-value optimality equation}. It is notable that adding a constant to any solution of $v$ or $q$ yields another solution. Prior studies \citep{abounadi2001learning,wan2021learning} on average-reward Q-learning focused on cases where these solutions are unique up to an additive constant. Specifically, \citet{abounadi2001learning} considered unichain MDPs,
\footnote{In \citep{abounadi2001learning}, these MDPs are also required to possess a common state that is recurrent under all policies, which is unnecessarily restrictive, as discussed above.}
while \citet{wan2021learning} considered weakly communicating MDPs with this uniqueness solution property. The rationale presented in these studies can also be applied to non-weakly-communicating MDPs with a constant optimal reward rate, provided their optimality equations exhibit this uniqueness solution property. 

In weakly communicating (or communicating) MDPs, the solution structure of optimality equations is typically more complex. A fundamental work by \citet{ScF78}
\footnote{Later, we will often use the alias \citepalias{ScF78} to refer to this work for brevity.}
reveals that solutions in $\calV$ and $\calQ$ can exhibit multiple degrees of freedom, quantified by a number $n^*$. This number, along with a parametrization of the solution sets using $n^*$ parameters, can be precisely determined based on the recurrence structures of the Markov chains $\{S_n\}$ induced by optimal policies. (In fact, \citet{ScF78} characterized the solution structure for the entire family of finite-space MDPs, where the optimal reward rate may vary with the initial state. We provide an overview of their key findings in \cref{sec: degree of freedom Q}.) 
Furthermore, they categorized these $n^*$ parameters into two types, globally independent vs.\ locally independent, based on the transience/recurrence structure induced by optimal policies in the MDP. Roughly speaking, the globally independent parameters can take arbitrary values in their space. These determine the ranges within which the values of the locally independent parameters can be selected. For weakly communicating MDPs, if $n^* >1$, then all $n^*$ parameters are locally independent in the sense introduced by \citepalias{ScF78} (although this fact will not be directly utilized in our results).

We defer a detailed explanation of some of their results to \cref{sec: degree of freedom Q} for interested readers. Here, let us first demonstrate with examples when $n^*$ can equal or exceed $1$ in weakly communicating MDPs, before discussing the latter case. (Note that $n^* = 1$ indicates optimality equations have unique solutions up to an additive constant.)

\begin{example} \label{example: different MDPs} \rm
Shown in \cref{fig:different mdps} are three communicating MDPs with two states and two actions. Let $s$ and $d$ stand for actions \texttt{solid} and \texttt{dashed}, respectively.

In \cref{fig:different mdps}(a), the MDP is unichain. It has $\optimalr = 1$ and 
$$\calQ = \{q \in \R^{3} \mid q(1, d) = c-1,\, q(2, s) = c,\, q(2, d) = c - 2,\, c \in \bbR \}.$$
Solutions in $\calQ$ differ only by an additive constant.

In \cref{fig:different mdps}(b), the MDP is not unichain, since the policy that takes action \texttt{solid} at both states induces two recurrent classes, $\{\emph{1}\}$ and $\{\emph{2}\}$. In this MDP, $\optimalr = 0$ and 
    $$\calQ = \{q \in \R^{2 \times 2} \mid q(1, s) = c-1,\, q(1, d) = c,\, q(2, s) = c,\, q(2, d) = c,\,  c \in \R \}.$$ 
Like the first unichain MDP, solutions in $\calQ$ are also unique up to an additive constant.

Finally, consider the MDP in \cref{fig:different mdps}(c). It has $\optimalr = 1$ and
$$\calQ = \{q \in \R^{2 \times 2} \mid q(2, s) - 1 \leq q(1, s) \leq q(2, s) + 1;\, q(1, d) = q(2, s) - 1,\, q(2, d) = q(1, s) - 1\}.$$  
Thus, solutions in $\calQ$ do not necessarily differ by a constant vector. 

This MDP also illustrates the degrees of freedom discussed above for the solutions in $\calQ$. Here, these solutions possess two degrees of freedom that are locally, rather than globally, independent: $(q(1, s),\, q(2, s))$ can be chosen from the $2$-dimensional convex polyhedron defined by the inequality constraints $q(2, s) - 1 \leq q(1, s) \leq q(2, s) + 1$, while the values $q(1, d)$ and $q(2, d)$ are determined by $(q(1, s),\, q(2, s))$.\qed
\end{example}

\begin{figure}
    \centering
    \includegraphics[width=\textwidth]{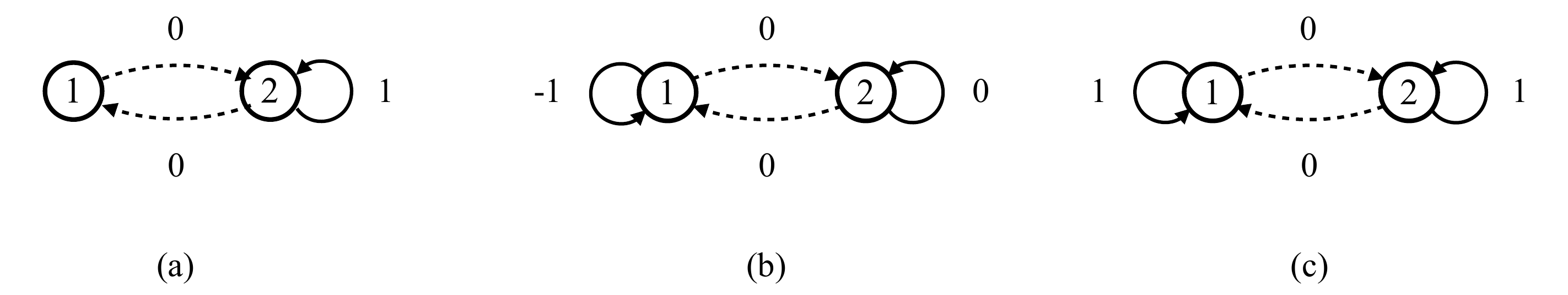}
\caption{Three examples of communicating MDPs with or without the uniqueness solution property. All these MDPs have two states $\{\emph{1, 2}\}$ and two actions $\{\texttt{solid}, \texttt{dashed}\}$ with deterministic effects. The directed solid and dashed curves between states depict deterministic state transitions corresponding to actions \texttt{solid} and \texttt{dashed}, respectively, with associated rewards indicated by numbers. Subfigure (a): a unichain MDP; (b): an MDP that is not unichain but has unique solutions in $\calQ$ (up to an additive constant); (c): an MDP without the uniqueness solution property.
}
\label{fig:different mdps}
\end{figure}

We have just provided an example where $n^* > 1$, indicating that the solutions in $\calV$ and $\calQ$ are not unique up to an additive constant.
\footnote{The sets $\calV$ and $\calQ$, being homeomorphic to each other, share the same number $n^*$ of degrees of freedom; see \cref{sec: degree of freedom Q} for details.}
More generally, based on the theory of \citet{ScF78}, we can deduce that \emph{for a weakly communicating MDP, $n^* > 1$ occurs precisely in the following situation: There exist at least two disjoint subsets of states, both forming recurrent classes under some optimal policy. However, ``traversing'' between these subsets incurs significant costs, rendering any (stationary) policy that visits both subsets infinitely often non-optimal.}

Notably, the scenario just described is quite common in real-world applications. The preceding discussion thus demonstrates that, both theoretically and practically, the class of weakly communicating MDPs is much broader and more versatile than its subfamilies with the uniqueness solution property.

\subsection{Relative Value Iteration} \label{sec: classical rvi}

\emph{Relative value iteration} (RVI), also known as \emph{successive approximations}, is a classical approach to solving average-reward optimality equations when the optimal reward rate remains constant across initial states. In this subsection, we will discuss Schweitzer's RVI algorithm \citep{schweitzer1971iterative}, which is a generalization over the first RVI algorithm proposed by \citet{white1963dynamic}. Schweitzer's algorithm was designed for solving SMDPs, a more general class of problems including MDPs (we will introduce SMDPs later in \cref{sec: smdp}). It targets the state-value optimality equation \eqref{eq: state-value optimality equation}, a focal point in the MDP/SMDP research field. 

Given our focus on stochastic RVI algorithms, which operate on state-action values, we will describe a specialized version of Schweitzer's RVI algorithm tailored to solving action-value optimality equations \eqref{eq: action-value optimality equation} in MDPs. This specialized algorithm will help elucidate the connections and differences between classical RVI and its stochastic counterpart, which will be our main focus for the rest of this paper.

This RVI algorithm operates in the space $\R^{|\sspace| \times |\aspace|}$ of state-action values. Let $\alpha \in (0,1)$ be a step-size parameter, and let $Q_0$ be the initial vector. The algorithm iteratively updates $Q_{n+1}$ for $n \geq 0$ according to the following rule: for all $(s,a) \in \sspace \times \aspace$,
\begin{equation} \label{eq: S-RVI}
Q_{n+1}(s,a) = Q_n(s,a) + \alpha \left( r_{sa} - f(Q_n) + \sum_{s' \in \sspace} p_{ss'}^a \max_{a' \in \aspace} Q_{n}(s', a')  - Q_n(s,a)\right),
\end{equation}
where $f(Q_n)$ is defined, for some fixed state-action pair $(\bar s, \bar a) \in \sspace \times \aspace$, as
$$ f(Q_n) =  r_{\bar{s}\bar{a}} + \sum_{s' \in \sspace} p_{\bar ss'}^{\bar a} \max_{a' \in \aspace} Q_{n}(s', a')  - Q_n(\bar s, \bar a).$$
It is worth noting that in this algorithm, all the iterates $Q_n$ maintain their $(\bar s, \bar a)$-component unchanged throughout the iterations by design; however, this feature is not critical. Alternative forms of functions $f$ can also be employed, leveraging fundamental results on the asymptotic behavior of undiscounted value iteration (see \citet{ScF77} for more details, though beyond the scope of this paper). 

This algorithm is proven to converge whenever the optimal reward rate is constant, particularly in a weakly communicating MDP \citep{platzman1977improved}, with $f(Q_n)$ converging to $\optimalr$ and $\{Q_n\}$ converging to a solution of the optimality equation (2.2). (See \citet[Theorem 1]{platzman1977improved} for further details, including error bounds, as well as performance bounds for the resulting policies.)

We will now delve into average-reward Q-learning in the following section, which can be viewed as the stochastic counterpart of the classical RVI algorithm.

\section{Convergence of RVI Q-Learning}\label{sec: action algorithms}

This section presents our new convergence result for a family of RVI Q-learning algorithms and our characterization of their associated solution sets in weakly communicating MDPs. We will introduce the algorithmic framework in Section~\ref{sec-3.1} and present our main results in Section~\ref{sec-3.2}, followed by a numerical demonstration in Section~\ref{sec-3.3}.

\subsection{Algorithmic Framework} \label{sec-3.1}

We consider a family of average-reward Q-learning algorithms rooted in the RVI approach. These algorithms operate without knowledge of the MDP model parameters, relying instead on random state transitions and rewards generated in the MDP to solve the action-value optimality equation \eqref{eq: action-value optimality equation}. In contrast to the classical RVI algorithm \eqref{eq: S-RVI}, these algorithms employ an \emph{asynchronous} update scheme. Here, updates are performed only for a subset of state-action pairs at each iteration, depending on the available data. Their stochastic and asynchronous nature poses challenges in ensuring desirable behavior, necessitating specific conditions that must be imposed on parameters such as step sizes, asynchronous update schedules, and the type of function $f$ employed in the algorithms. The algorithmic framework we present here was originally formulated by \citet{abounadi2001learning} and recently extended by \citet{wan2021learning}, with further details to be discussed later.

Let $\{\alpha_n\}$ be a sequence of diminishing step sizes, and let $Q_0$ be an arbitrary initial vector of state-action values in $\R^{|\sspace| \times |\aspace|}$. At time step $n \geq 0$, a nonempty subset $Y_n$ of state-action pairs is randomly selected. For each pair $(s,a) \in Y_n$, we observe a random transition and reward according to the transition function $p$ in the MDP, denoted by
$$\left(S_{n+1}^{sa},\, R_{n+1}^{sa}\right) \sim p(\cdot, \cdot \mid s, a)$$ 
(where the notation $X \sim d(\cdot)$ indicates that a random variable $X$ is distributed according to a probability distribution $d$). 
Using these transition and reward data, the algorithm updates the state-action values for those state-action pairs in $Y_n$, while keeping the other components unchanged: 
\begin{align}
    & \text{for $(s,a) \not\in Y_n$: \ \  $Q_{n+1}(s,a) = Q_n(s,a)$;} \notag \\  
& \text{for $(s,a) \in Y_n$:} \notag \\
 & \ \  Q_{n+1}(s, a) = Q_{n}(s, a) + \alpha_{\nu_n(s, a)} \left(R_{n+1}^{sa} - f(Q_n) + \max_{a' \in \aspace} Q_n(S_{n+1}^{sa}, a') - Q_n(s, a) \right).  \label{eq: Extended RVI Q-learning}
\end{align}
Here, $\nu_n(s, a)$ counts the number of updates to the $(s, a)$-component at time step $n$: $\nu_n(s, a) \= \sum_{k = 0}^n \ind\{(s, a) \in Y_k\}$, where $\ind\{\cdot\}$ denotes the indicator function; and $f : \sspace \times \aspace \to \R$ is a Lipschitz continuous function with additional properties to be given shortly. 

Regarding the selection of the set $Y_n$, in a typical RL setting, where the agent follows some policy (possibly history-dependent), known as the \emph{behavior policy}, to generate a sequence of random states, actions, and rewards $S_0, A_0, R_1, S_1, A_1, R_2, \ldots$ in the MDP, $Y_n$ can simply consist of the state-action pair $(S_n, A_n)$ encountered at time step $n$. The update \eqref{eq: Extended RVI Q-learning} then becomes
\begin{equation} \label{eq: alg in RL context}
Q_{n+1}(S_n, A_n) \! =  Q_n (S_n, A_n) + \alpha_{\nu_n(S_n, A_n)}\! \left(\! R_{n+1} - f(Q_n)\! + \max_{a' \in \aspace} Q_n(S_{n+1}, a') - Q_n (S_n, A_n)\!\right)\!.
\end{equation}

The algorithm is subject to a set of conditions. Let us enumerate them first, before a detailed commentary on each one. 

Denote $\ispace = \sspace \times \aspace$. Throughout the paper, let $\zerovec$ and $\onevec$ stand for the vector of all zeros and ones in $\R^d$, respectively, where the dimension $d$ depends on the context.

{\samepage
\begin{myassumption}[conditions on function $f$]\label{assu: f} \hfill
\begin{itemize}[leftmargin=0.7cm,labelwidth=!]
\item[\rm (i)] The function $f$ is Lipschitz continuous; i.e., there is a constant $L \geq 0$ such that $|f(x) - f(y)| \leq L \norm{x - y}$ for all $x, y \in \bbR^{|\ispace|}$.
\item[\rm (ii)] There exists a scalar $u > 0$ such that $f(x + c\onevec) = f(x) + cu$ for all $c \in \bbR$ and $x \in \R^{|\ispace|}$.
\item[\rm (iii)] For all $c \geq 0$ and $x \in \R^{|\ispace|}$, $f(cx) - f(\zerovec) = c(f(x) - f(\zerovec))$.
\end{itemize}
\end{myassumption}
}

\begin{myassumption}[conditions on step sizes $\alpha_n$]\label{assu: stepsize} \hfill
\begin{itemize}[leftmargin=0.7cm,labelwidth=!]
\item[\rm (i)] We have $\sum_{n = 0}^\infty \alpha_n = \infty$ and $\sum_{n = 0}^\infty \alpha_n^2 < \infty$. In addition, $\alpha_n > 0$ for all $n \geq 0$, and $\alpha_{n+1} \leq \alpha_n$ for all $n$ sufficiently large.
\item[\rm (ii)] For $x \in (0, 1)$, 
\begin{align*}
    \sup_n \frac{\alpha_{[xn]}}{\alpha_n} < \infty
\end{align*}
where $[\cdot]$ denote the integer part of $(\cdot)$, and as $n \to \infty$,
\begin{align*}
    \frac{\sum_{k=0}^{[yn]} \alpha_k}{\sum_{k=0}^n \alpha_k} \to 1 \qquad \text{uniformly in $y \in [x, 1]$.}
\end{align*} 
\end{itemize}
\end{myassumption}

\begin{myassumption}[conditions on asynchrony]
\label{assu: update} The following statements hold:
\begin{itemize}[leftmargin=0.7cm,labelwidth=!]
    \item[\rm (i)] There exists a deterministic $\Delta > 0$ such that
\begin{align*}
    \liminf_{n \to \infty} \frac{\nu_n( i)}{n} \geq \Delta \qquad \text{a.s., for all $i \in \ispace$.}
\end{align*}
\item[\rm (ii)] For each $x > 0$, defining $N(n, x) \= \min \left \{m > n: \sum_{k = n}^m \alpha_k \geq x \right\}$, the limit 
\begin{align*}
    \lim_{n \to \infty} \frac{\sum_{k = \nu_n( i)}^{\nu_{N(n, x)}(i)} \alpha_k}{\sum_{k = \nu_n( i')}^{\nu_{N(n, x)}(i')} \alpha_{k}} \qquad \text{exists a.s. for all $i, i' \in \ispace$.}
\end{align*}
\end{itemize}
\end{myassumption}

Let us now discuss these algorithmic assumptions one by one. 

\begin{myremark} \rm 
Assumption~\ref{assu: f} concerning the function $f$ was introduced by \citet{abounadi2001learning} with $u = 1$ in Assumption~\ref{assu: f}(ii). The extension to the more general case $u > 0$ was due to \citet{wan2021learning}. 

In the original formulation by \citet{abounadi2001learning}, $f(cx) = cf(x)$ was required, which differs from Assumption~\ref{assu: f}(iii) where $f(\zerovec)$ need not be zero. However, for analytical purposes, these two conditions are equivalent. If the iterates $\{Q_n\}$ are generated with a function $f$ satisfying Assumption~\ref{assu: f}(iii), they can be viewed as iterates generated employing the function $\hat f(x) = f(x) - f(\zerovec)$, which satisfies $\hat f(c x) = c \hat f(x)$, in an MDP where all rewards are shifted by the constant $f(\zerovec)$. Despite this equivalence, we prefer stating this condition of $f$ in the form given in Assumption~\ref{assu: f}(iii) to clarify the range of functions applicable in practice.\qed
\end{myremark}

Here are two examples of functions that satisfy \cref{assu: f}: 
\begin{align*}
f(x)  = \nu^\top x + b, \qquad   & \text{where} \  \nu \in \R^{|\ispace|} \ \text{with} \ \nu^\top \onevec > 0, \ b \in \R; \\
f(x)  = \beta  \max_{(s,a) \in \sspace \times \aspace} x(s, a) + b, \quad & \text{where}\  \beta > 0, \ b \in \R.
\end{align*}
In particular, \cref{assu: f}(ii) is satisfied with $u = \nu^\top \onevec$ and $u= \beta$, respectively.

For some choices of $f$, the algorithm can take on a different form. A particular example of this is the following algorithm, which was, indeed, the original motivation behind the extension from $u=1$ to $u > 0$.

\begin{example}[Differential Q-learning \citep{wan2021learning}] \label{ex: diff Q} \rm \hfill \\
The Differential Q-Learning algorithm maintains a scalar estimate $\bar R_n$ of the optimal reward rate and updates both $Q_n$ and $\bar R_n$ using the temporal-difference (TD) error. At time step $n$, 
the TD error for each $(s,a) \in Y_n$ is computed as
$$ \delta_n(s, a) =  R_{n+1}^{sa} - \bar R_n + \max_{a' \in \aspace} Q_n(S_{n+1}^{sa}, a') - Q_n (s, a). $$
Then $Q_{n+1}$ and $\bar R_{n+1}$ are updated as follows:
\begin{align}
    Q_{n+1}(s, a) & =  Q_n (s, a) + \alpha_{\nu_n(s, a)} \delta_n(s, a) \ind\{(s, a) \in Y_n\}, \qquad\forall \, s \in \sspace, \ a \in \aspace, \label{eq: diff Q-1} \\
    \bar R_{n+1} & = \bar R_n + \eta \sum_{(s,a) \in Y_n} \alpha_{\nu_n(s, a)} \delta_n(s, a), \label{eq: diff Q-2}
 \end{align}
where $\eta > 0$ is a parameter of the algorithm.

Given that the change from $\bar R_{n}$ to $\bar R_{n+1}$ is precisely $\eta$ times the total changes from  $Q_{n}$ to $Q_{n+1}$, we can express the Differential Q-learning algorithm equivalently in the form of algorithm \eqref{eq: Extended RVI Q-learning}, by writing $\bar R_n = f(Q_n)$ and defining the function $f$ as
\begin{equation} \label{eq: f for diff Q}
f(q) \= \eta \sum_{s \in \sspace, a \in \aspace} q(s, a) - \eta \sum_{s \in \sspace, a \in \aspace} Q_0(s, a) + \bar R_0.
\end{equation}
This function corresponds to the first example of $f$ discussed, with $\nu = \eta \onevec$ and $b$ determined by $\eta$, the initial $Q_0$, and $\bar R_0$.\qed
\end{example}

Let us now discuss the conditions regarding step sizes and asynchrony, which appear to be quite intricate.

First, notice that the step size in each component update follows the specific form $\alpha_{\nu_n(s,a)}$, where $\nu_n(s,a)$ acts as a ``local clock'' for the $(s,a)$-component. Meanwhile, a common deterministic step-size sequence $\{\alpha_k\}$ is employed for all components. 

For comparison, when tackling discounted-reward MDPs or total-reward MDPs of the stochastic shortest path type, the Q-learning algorithm enjoys much greater flexibility in selecting step sizes and asynchronous update schedules while still maintaining convergence guarantees \citep{Tsi94,YuB13}. In those problems, a separate random step-size sequence $\{\beta_{k, sa}\}$ can be used for each component, provided that $\sum_k \beta_{k,sa} = \infty$ and $\sum_k \beta^2_{k, sa} < \infty$ a.s.\ (i.e., only the first half of Assumption~\ref{assu: stepsize}(i) needs to hold). Furthermore, any update schedules ensuring each component is updated infinitely often can be employed. This stands in contrast to the collection of intricate conditions stipulated by Assumption~\ref{assu: update} on the average-reward Q-learning algorithm \eqref{eq: Extended RVI Q-learning}. 

To grasp the purposes of Assumptions~\ref{assu: stepsize} and~\ref{assu: update} and their necessity, it is important to recognize a fundamental distinction between the average-reward case and the discounted- or total-reward scenarios just mentioned: In the average-reward case, the mapping underlying the RVI approach is generally neither a contraction nor a nonexpansive mapping. Coupled with the presence of asynchrony and stochasticity, this presents significant challenges in ensuring desirable convergent algorithmic behavior.

Assumptions~\ref{assu: stepsize} and~\ref{assu: update}, with slight variations in Assumption~\ref{assu: update}(ii), were originally introduced in the broader context of asynchronous SA by \citet{Bor98,Bor00}, and later adopted in average-reward Q-learning by \citet{abounadi2001learning}. These conditions aim to establish partial asynchrony, aligning the asymptotic behavior of the asynchronous algorithm, on average, with that of a synchronous one, facilitating analysis. While a comprehensive understanding of this point requires delving into the details of SA analysis [\citep{Bor98,Bor00}; also see \citep[Chap.\ 7]{Bor09} and \cite{yu2023note}], which is beyond our scope here, we can offer some intuition about these assumptions and demonstrate their satisfaction with examples.

Assumption~\ref{assu: stepsize} requires the step-size sequence $\{\alpha_n\}$ to decrease to $0$ in an appropriate manner. As noted in \citet{Bor98}, some commonly used step-size sequences such as $1/ n$, $1 / (n \log n)$, or $\log n / n$, all satisfy this assumption.

Assumption \ref{assu: update} requires that all components undergo updates \emph{comparably often} in an \emph{evenly distributed} manner. Specifically, Assumption~\ref{assu: update}(i) requires that each component be updated infinitely often. However, it also forbids the relative frequencies of updating any two components from diverging to infinity. Assumption~\ref{assu: update}(ii) represents the most intricate aspect of the conditions governing permissible asynchronous update schedules. This condition is formulated in terms of the deterministic step-size sequence and the random update counts $\{\nu_n(s, a)\}$ for each component, with the purpose of ensuring an even distribution of updates across all components. As a reflection of this point, it is noteworthy that in the presence of both Assumptions~\ref{assu: stepsize} and \ref{assu: update}, the limits whose existence is dictated by this condition must all equal $1$ \citep{Bor98,Bor00}.

Let us illustrate with an example how Assumption \ref{assu: update} can be satisfied in a typical off-policy learning scenario.

\begin{example}\label{example: step size} \rm
Consider a step-size sequence of the form $\alpha_n = c/(n + d)$, where $c > 0$ and $d$ is a positive integer. Such a sequence satisfies Assumption~\ref{assu: stepsize}. Assume that, almost surely, for all $i \in \ispace$, $\lim_{n \to \infty} \nu_n(i)/n$ exists and is nonzero (thus fulfilling Assumption~\ref{assu: update}(i)). Note that the requirement for the existence of these limits is naturally met in scenarios where the behavior policy eventually stops changing with time and matches some stationary policy.

To verify that Assumption~\ref{assu: update}(ii) also holds in this case, we now show that for any given $x > 0$ and $i \in \ispace$, we have $\sum_{k = \nu_n(i)}^{\nu_{N(n,x)}(i)} \alpha_k \to x$ a.s., as $n \to \infty$. To simplify notation, we write $m_n = \nu_n(i)$ and $m_n^x = \nu_{N(n,x)}(i)$. In the derivation below, we will omit the term ``a.s.'' Our assumption implies 
\begin{equation} \label{ex1-eq1}
   \lim_{n \to \infty} m_n = \lim_{n \to \infty} m_n^x = \infty,  \qquad \lim_{n \to \infty} \tfrac{m_n}{n} = \lim_{n \to \infty} \tfrac{m_n^x}{N(n,x)} > 0.
\end{equation} 
Denote by $\epsilon(n)$ a generic term that depends on $n$ and tends to $0$ as $n \to \infty$; the specific expression of $\epsilon(n)$ may vary depending on the context.
Recall that $\sum_{k=1}^n 1/k = \log n - \gamma + \epsilon(n)$, where $\gamma$ is Euler's constant ($\gamma \approx 0.5772$). Using this relation, a direct calculation shows that
\begin{align}
 \textstyle{\sum_{k=m_n}^{m_n^x} \alpha_k = \sum_{k=m_n}^{m_n^x} \tfrac{c}{k+d}} & = c \log \tfrac{m_n^x + d}{m_n +d - 1} + \epsilon(n)  \notag \\
     & = c \log \tfrac{m_n^x + d}{N(n,x)} - c \log \tfrac{m_n + d - 1}{n} + c \log \tfrac{N(n,x)}{n} + \epsilon(n). \label{ex1-eq2}
\end{align}  
By \eqref{ex1-eq1}, $c \log \frac{m_n^x + d}{N(n,x)} - c \log \frac{m_n + d - 1}{n} \to 0$ as $n \to \infty$.
For the term $c \log \tfrac{N(n,x)}{n}$, since
$$ \textstyle{\sum_{k=n}^{N(n,x)} \alpha_k} = c  \log \tfrac{N(n,x) +d }{n +d -1} + \epsilon(n) = c  \log \tfrac{N(n,x) }{n} + \epsilon(n)$$
and $\lim_{n \to \infty} \sum_{k=n}^{N(n,x)} \alpha_k = x$ by the definition of $N(n,x)$, we have 
$\lim_{n \to \infty} c  \log \frac{N(n,x)}{n} = x$. Then by \eqref{ex1-eq2}, $\lim_{n \to \infty} \sum_{k=m_n}^{m_n^x} \alpha_k = x$. 

Hence, Assumption~\ref{assu: update}(ii) holds with $\lim_{n \to \infty} \frac{\sum_{k = \nu_n(i)}^{\nu_{N(n,x)}(i)} \alpha_k}{\sum_{k = \nu_n(i')}^{\nu_{N(n,x)}(i')} \alpha_k} = 1$ a.s.\ for all $i, i' \in \ispace$. Indeed, under Assumptions~\ref{assu: stepsize} and~\ref{assu: update}, it is necessary for these limits to equal $1$ [cf. the proof of \cite[Theorem 3.2]{Bor98} and \cite{Bor00}], as mentioned above.\qed
\end{example}

\subsection{Main Results} \label{sec-3.2}

Recall that $\calQ$ is the set of solutions to the action-value optimality equation \eqref{eq: action-value optimality equation}, and $\optimalr$ is the optimal reward rate.
We will show that in a weakly communicating MDP, the sequence $\{Q_n\}$ generated by algorithm \eqref{eq: Extended RVI Q-learning} converges a.s.\ to the subset of $\calQ$ constrained by $f(q) = \optimalr$:
\begin{align}\label{eq: mdp sol set}
    \ExtRVIQsolutionq \= \{q \in \calQ : f(q) = \optimalr\}.
\end{align}

First, let us characterize this solution set $\ExtRVIQsolutionq$ for the algorithm. Based on the theory of \citepalias{ScF78}, the set $\calQ$ is nonempty, closed, unbounded, connected, and possibly nonconvex. Further, as discussed in \cref{sec: wc mdps}, for a weakly communicating MDP, the solutions in $\calQ$ need not be unique up to an additive constant. With this understanding of the structure of $\calQ$, we can characterize the set $\ExtRVIQsolutionq$ as follows (the proof of which will be given in \cref{sec: solution set} in the broader context of SMDPs):

\begin{mytheorem} \label{thm: MDP characterize Q}
If the MDP is weakly communicating and \cref{assu: f} holds, then $\ExtRVIQsolutionq$ is nonempty, compact, connected, and possibly nonconvex.
\end{mytheorem}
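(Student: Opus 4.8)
Here I write $\ExtRVIQsolutionq = \calQ \cap \{q : f(q) = \optimalr\}$ and plan to obtain its properties by viewing it as a ``cross-section'' of $\calQ$, transferring the known structure of $\calQ$ (nonempty, closed, unbounded, connected, possibly nonconvex, and invariant under $q \mapsto q + c\onevec$, all from \citetalias{ScF78}) through the slice $\{f = \optimalr\}$. Nonemptiness is immediate: for any $q_0 \in \calQ$, \cref{assu: f}(ii) makes $c \mapsto f(q_0 + c\onevec) = f(q_0) + cu$ an affine bijection of $\bbR$ onto $\bbR$ since $u > 0$, so there is a (unique) $c_*$ with $f(q_0 + c_*\onevec) = \optimalr$; by translation-invariance $q_0 + c_*\onevec \in \calQ$, hence $q_0 + c_*\onevec \in \ExtRVIQsolutionq$. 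Closedness is immediate too: $f$ is continuous by \cref{assu: f}(i), so $\{f = \optimalr\}$ is closed, and $\ExtRVIQsolutionq$ is its intersection with the closed set $\calQ$.

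For connectedness, the plan is to exhibit $\ExtRVIQsolutionq$ as a continuous image of the connected set $\calQ$. Define $\rho : \calQ \to \calQ$ by $\rho(q) = q + \tfrac{\optimalr - f(q)}{u}\onevec$; then $\rho(q) \in \calQ$ by translation-invariance, $f(\rho(q)) = f(q) + \tfrac{\optimalr - f(q)}{u} u = \optimalr$ by \cref{assu: f}(ii), so $\rho$ maps into $\ExtRVIQsolutionq$, and $\rho(q) = q$ whenever $q \in \ExtRVIQsolutionq$; hence $\rho(\calQ) = \ExtRVIQsolutionq$. Since $\rho$ is continuous (as $f$ is) and $\calQ$ is connected, $\ExtRVIQsolutionq$ is connected.

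The main obstacle is boundedness of $\ExtRVIQsolutionq$ (which, with closedness, yields compactness), since $\calQ$ itself is unbounded. I would first show that $\calQ$ has bounded span, $M := \sup_{q \in \calQ}\big(\max_{i \in \ispace} q(i) - \min_{i \in \ispace} q(i)\big) < \infty$. If not, take $q_k \in \calQ$ with span $t_k \to \infty$ and normalize $w_k := (q_k - (\min_i q_k(i))\onevec)/t_k$, so $w_k \in [0,1]^{|\ispace|}$ with $\min_i w_k(i) = 0$ and $\max_i w_k(i) = 1$; along a subsequence $w_k \to w$ with $w \notin \bbR\onevec$. Substituting $q_k = t_k w_k + (\min_i q_k(i))\onevec$ into the action-value optimality equation \eqref{eq: action-value optimality equation}, the additive constant cancels; dividing by $t_k$ and letting $k \to \infty$ (using $r_{sa}/t_k \to 0$ and continuity of $\max$) gives $w(s,a) = \sum_{s'} p^a_{ss'} \max_{a'} w(s',a')$ for all $(s,a)$, so $\omega(s) := \max_a w(s,a)$ satisfies $\omega = \vop_0 \omega$, where $\vop_0$ is the undiscounted Bellman optimality operator of the MDP with all rewards set to zero. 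Such a fixed point is constant in a weakly communicating MDP: if $m = \min_s \omega(s)$ is attained on $S_m$, then for $s \in S_m$ every action $a$ has $\sum_{s'} p^a_{ss'}\omega(s') = m$ (each such sum is $\ge m$ and their max is $m$), forcing $p^a_{ss'} > 0 \Rightarrow s' \in S_m$; thus $S_m$ is closed under every policy, so $\calS \setminus S_m$ consists only of states transient under all policies (\cref{def: w.c. MDP}), and a standard finite-state argument then gives $\sup_\pi \mathbb{P}_\pi(S_n \notin S_m \mid S_0 = s) \to 0$; iterating $\omega(s) = \sup_\pi \E_\pi[\omega(S_n) \mid S_0 = s]$ and using $\omega \equiv m$ on $S_m$ forces $\omega \equiv m$, hence $w = m\onevec$ — contradicting $w \notin \bbR\onevec$. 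So $M < \infty$. Finally, for $q \in \ExtRVIQsolutionq$, decompose $q = \tilde q + \bar q\onevec$ with $\bar q$ the mean of the entries of $q$: then $\|\tilde q\| \le \sqrt{|\ispace|}\, M$ (its entries sum to zero and have spread at most $M$), while $\optimalr = f(q) = f(\tilde q) + \bar q u$ by \cref{assu: f}(ii), and Lipschitz continuity of $f$ bounds $|\bar q| = |\optimalr - f(\tilde q)|/u \le \big(|\optimalr| + |f(\zerovec)| + L\sqrt{|\ispace|}\,M\big)/u$; hence $\|q\|$ is bounded uniformly over $\ExtRVIQsolutionq$. (Alternatively, $M < \infty$ can be read off from the description of $\calQ$ developed in \cref{sec: solution set} via the \citetalias{ScF78} theory.)

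It remains to record that nonconvexity genuinely occurs for some data. This is an existence claim, and I would simply point to an explicit instance from the numerical demonstration of \cref{sec-3.3}: since $\calQ$ is in general only a connected finite union of polyhedral pieces (arising from the $\max$ operations in \eqref{eq: action-value optimality equation}), for a suitable $f$ the hyperplane $\{f = \optimalr\}$ meets more than one such piece, and $\ExtRVIQsolutionq$ is then nonconvex. Combining everything, $\ExtRVIQsolutionq$ is nonempty, closed and bounded (hence compact), connected, and nonconvex for some choices of $f$ and the MDP, as claimed. The one step I expect to be delicate is the boundedness argument — specifically the rigidity lemma that a $\vop_0$-fixed point is constant in a weakly communicating MDP — since this is where the weakly-communicating hypothesis is genuinely used; everything else is soft topology plus \cref{assu: f}.
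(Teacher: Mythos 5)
Your proposal is essentially correct and follows the same skeleton as the paper's proof (given in \cref{sec: solution set} via \cref{thm: smdp properties}): nonemptiness and connectedness via the retraction $q \mapsto q + \tfrac{\optimalr - f(q)}{u}\onevec$ onto the slice $\{f = \optimalr\}$ are identical to the paper's argument, and boundedness in both cases ultimately rests on the same rigidity fact, namely that the zero-reward optimality equation has only constant solutions in a weakly communicating MDP (the paper's \cref{lemma: 0 reward MDP has 1 d solution}). Two genuine differences are worth noting. First, your boundedness route is more modular: you prove that $\calQ$ itself has uniformly bounded span (an intrinsic statement about the MDP, independent of $f$) and then use only \cref{assu: f}(i, ii) to pin down the remaining one-dimensional direction; the paper instead normalizes elements of the constrained set by their norm and invokes the positive homogeneity condition \cref{assu: f}(iii) to pass to the limit $f(y_\infty) = f(\zerovec)$. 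Your version thus shows that part (i) of the theorem does not actually need \cref{assu: f}(iii), which is a small but real gain. Second, your proof of the rigidity lemma (closedness of the argmin set, hence $\S^o \subset S_m$, plus transience outside $\S^o$ and an iteration of $\omega = \sup_\pi \E_\pi[\omega(S_n)]$) is correct but longer than the paper's symmetric argument, which observes that the argmax set contains a recurrent class of some deterministic policy and hence meets $\S^o \subset S_{\min}$, giving $\min = \max$ in two lines; you also elide the step that a set closed under all policies must contain $\S^o$, which is exactly where weak communication enters and deserves a sentence.

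The one genuine gap is the nonconvexity claim. ``Possibly nonconvex'' is an existence statement and requires a verified instance; pointing to \cref{sec-3.3} does not deliver one (the two- and three-state MDPs there, and the set $\calQ$ in \cref{example: different MDPs}(c), are all cut out by linear equalities and inequalities and are in fact convex), and the heuristic that the hyperplane $\{f=\optimalr\}$ ``meets more than one polyhedral piece'' does not imply nonconvexity, since a connected union of polyhedra can be convex. The paper supplies a concrete three-state MDP (\cref{ex: nonconvex Q}) with explicit $q_1, q_2 \in \ExtRVIQsolutionq$ whose midpoint violates \eqref{eq: action-value optimality equation}; you would need to produce and check such an example to complete this part.
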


Moreover, as we will show in Section~\ref{sec-7.2} (cf.\ \cref{thm-dim-Qs}), the solutions in $\ExtRVIQsolutionq$ have precisely one lower degree of freedom than those in $\calQ$. Thus, for a weakly communicating MDP, the set $\ExtRVIQsolutionq$ is, in general, not a singleton, in contrast to the singleton case focused in prior studies \citep{abounadi2001learning,wan2021learning}.

For a vector $q$ of state-action values, we call a deterministic policy $\pi: \sspace \to \aspace$ \emph{greedy w.r.t.\ q}, if $\pi(s) \in \argmax_{a \in \aspace} q(s, a)$ for all states $s \in \sspace$. The next theorem is our convergence result for average-reward Q-learning in weakly communicating MDPs.

\begin{mytheorem}[convergence theorem] \label{thm: Extended RVI Q-learning}
Consider algorithm \eqref{eq: Extended RVI Q-learning}. If the MDP is weakly communicating and Assumptions~\ref{assu: f}, \ref{assu: stepsize}, and \ref{assu: update} are satisfied, then almost surely, the following hold:
\begin{itemize}[leftmargin=0.7cm,labelwidth=!]
\item[\rm (i)] As $n \to \infty$, $Q_n$ converges to a sample path-dependent compact connected subset of $\ExtRVIQsolutionq$, and $f(Q_n)$ converges to the optimal reward rate $\optimalr$.
\item[\rm (ii)] For all sufficiently large $n$, the greedy policies w.r.t.\ $Q_n$ are all optimal.
\end{itemize}
\end{mytheorem}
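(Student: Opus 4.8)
The plan is to treat \eqref{eq: Extended RVI Q-learning} as an asynchronous stochastic approximation recursion with mean-field map $h(q) \= T(q) - q$, where $T(q)(s,a) \= r_{sa} - f(q) + \sum_{s' \in \sspace} p_{ss'}^a \max_{a' \in \aspace} q(s',a')$, and to run it through the ODE-based convergence argument assembled in \cref{sec: c1 general RVI Q}: the asynchronous SA results of \citet{Bor98,Bor00} (and \citet{yu2023note}), together with the Borkar--Meyn stability criterion \citep{BoM00}. Assumptions~\ref{assu: stepsize} and~\ref{assu: update} are precisely what make the limiting trajectories of the asynchronous scheme agree, up to a positive state-action-dependent time rescaling that does not alter the relevant $\omega$-limit sets, with the solutions of the synchronous ODE $\dot q = h(q)$; the additive noise in \eqref{eq: Extended RVI Q-learning} is a martingale difference with conditionally bounded second moments since $\rspace$ is finite and $\{Q_n\}$ will be shown bounded. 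Everything therefore reduces to two facts about this ODE: (a) it is stable in the Borkar--Meyn sense, and (b) every internally chain transitive invariant set of $\dot q = h(q)$ is a compact connected subset of $\ExtRVIQsolutionq$.

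For (a) I would verify the Borkar--Meyn condition. By \cref{assu: f}(i),(iii), $h(cq)/c \to h_\infty(q)$ as $c \to \infty$, with $h_\infty(q)(s,a) = \sum_{s'\in\sspace} p_{ss'}^a \max_{a'\in\aspace} q(s',a') - q(s,a) - \big(f(q) - f(\zerovec)\big)$; this is exactly the mean field of the same RVI scheme applied to the MDP obtained by zeroing all one-stage rewards. That MDP is still weakly communicating, but its optimality equation has solutions unique up to an additive constant (in the language of \citepalias{ScF78}, $n^* = 1$, since with zero rewards ``traversing'' between recurrent classes is costless and hence never non-optimal), so its analogue of $\ExtRVIQsolutionq$ collapses to $\{\zerovec\}$. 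Global asymptotic stability of the origin for $\dot q = h_\infty(q)$ then follows from the $n^* = 1$ case of step (b) below (equivalently, from the unichain analyses of \citet{abounadi2001learning,wan2021learning}), and Borkar--Meyn yields $\sup_n \norm{Q_n} < \infty$ a.s.

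The heart of the proof is (b). First, $h(q) = \zerovec$ iff $q$ satisfies \eqref{eq: action-value optimality equation} with $\bar r = f(q)$; since every solution of \eqref{eq: action-value optimality equation} has its $\bar r$-component equal to $\optimalr$, the equilibrium set of $\dot q = h(q)$ is exactly $\ExtRVIQsolutionq$, which is nonempty and compact by \cref{thm: MDP characterize Q}. It then remains to show every solution trajectory converges to $\ExtRVIQsolutionq$ --- not necessarily to a single equilibrium, since by \cref{thm-dim-Qs} $\ExtRVIQsolutionq$ is generically a positive-dimensional continuum, which is the source of the ``sample path-dependent'' qualifier in part~(i). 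To do this I would write $T(q) = \widehat T(q) + \big(\optimalr - f(q)\big)\onevec$, where $\widehat T(q)(s,a) \= r_{sa} - \optimalr + \sum_{s'\in\sspace} p_{ss'}^a \max_{a'\in\aspace} q(s',a')$ is the undiscounted-value-iteration operator whose fixed-point set is $\calQ$, and then decompose $q(t)$ into the scalar mode along $\onevec$ --- whose evolution is controlled, through \cref{assu: f}(ii), by $f(q(t)) - \optimalr$ --- and a complementary ``value-iteration'' mode. A Lyapunov argument on $|f(q(t)) - \optimalr|$ shows the scalar mode decays to $0$, while the asymptotic description of undiscounted value iteration in weakly communicating MDPs from \citepalias{ScF78} controls the complementary mode and prevents it from drifting off $\calQ$; together these give $\mathrm{dist}(q(t), \ExtRVIQsolutionq) \to 0$. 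Invariance plus the internally chain transitive property then confine all limit sets of $\{Q_n\}$ to $\ExtRVIQsolutionq$, and such sets are automatically compact and connected; continuity of $f$ and $f \equiv \optimalr$ on $\ExtRVIQsolutionq$ give $f(Q_n) \to \optimalr$. This establishes part~(i).

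Part~(ii) follows from part~(i) by compactness and finiteness of the policy space. There are finitely many deterministic policies, and any policy greedy with respect to some $q \in \calQ$ is optimal (by the fact quoted after \eqref{eq: action-value optimality equation}). If the claim failed, some non-optimal deterministic $\pi$ would be greedy with respect to $Q_{n_k}$ along a subsequence; since $Q_{n_k}$ approaches the compact set $\ExtRVIQsolutionq \subset \calQ$, pass to a further subsequence with $Q_{n_k} \to \bar q \in \calQ$; the inequalities $Q_{n_k}(s, \pi(s)) \ge Q_{n_k}(s,a)$ pass to the limit, so $\pi$ is greedy with respect to $\bar q \in \calQ$ and hence optimal, a contradiction. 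The main obstacle is step (b): unlike \citet{abounadi2001learning}, where the ODE has a unique equilibrium, here $\ExtRVIQsolutionq$ is a multidimensional connected continuum, so one must cleanly separate the $f$-controlled scalar mode from the value-iteration modes and rule out indefinite drift along $\calQ$ --- which is exactly where the weakly-communicating solution-structure theory of \citepalias{ScF78} is indispensable.
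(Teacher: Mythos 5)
Your proposal follows essentially the same route as the paper's proof: establish stability via the Borkar--Meyn criterion by identifying the scaling limit $h_\infty$ with the mean field of the zero-reward MDP (whose equilibrium set collapses to the origin, cf.\ \cref{lemma: 0 reward MDP has 1 d solution}), then decompose the ODE trajectory into a solution of the auxiliary flow with $f(q)$ replaced by $\optimalr$ plus a scalar drift along $\onevec$ governed by a linear ODE, and finally use compactness of $\ExtRVIQsolutionq$ for part~(ii). The only substantive deviation is in how the ``complementary mode'' is controlled: the paper does this via \citet{borkar1997analog}'s result that trajectories of $\dot y = T(y)-y$ for a max-norm nonexpansive $T$ with nonempty fixed-point set converge to a fixed point (with $\norm{y(t)-\bar y}_\infty$ nonincreasing), rather than via the discrete-time value-iteration asymptotics of \citepalias{ScF78} that you invoke---a substitution of citation rather than of idea, since the decomposition and the conclusion you need from it are the same.
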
 

We will prove part (i) of this theorem in \cref{sec: convergence proofs}. The proof will use ODE-based arguments to analyze asynchronous SA algorithms. As part (ii) of this theorem is a direct consequence of part (i) and the compactness of $\ExtRVIQsolutionq$, we give here the proof of part (ii) first, assuming that part (i) has been established.

\medskip
\begin{proofof}{\cref{thm: Extended RVI Q-learning}(ii)}
Recall that any policy that is greedy w.r.t.\ a solution $\bar q$ of the optimality equation \eqref{eq: action-value optimality equation} is an optimal policy \citepalias[Theorem 3.1(e1)]{ScF78}. We define an open set $G \= \cup_{\bar q \in \ExtRVIQsolutionq} G_{\bar q}$, where $G_{\bar q}$ is a sufficiently small open neighborhood of $\bar q$ such that for all $q \in G_{\bar q}$, $\argmax_{a \in \A} q(s, a) \subset \argmax_{a \in \aspace} \bar q(s,a)$ for all $s \in \sspace$. Observe that any policy greedy w.r.t.\ some $q \in G$ is also greedy w.r.t.\ some $\bar q \in \ExtRVIQsolutionq$ and is, therefore, an optimal policy.
If a sequence $\{Q_n\}$ converges to $\ExtRVIQsolutionq$, then, since $\ExtRVIQsolutionq \subset G$ is compact (\cref{thm: MDP characterize Q}), $\{Q_n\}$ must eventually enter and never leave the open set $G$. (Otherwise, a subsequence $\{Q_{n_k}\}$ could be found in the closed set $G^c$, which has a positive distance from the compact set $\ExtRVIQsolutionq$, contradicting the convergence of $\{Q_n\}$ to $\ExtRVIQsolutionq$.) Consequently, if $Q_n \to \ExtRVIQsolutionq$, then for sufficiently large $n$, any greedy policy w.r.t.\ $Q_n$ is optimal. \cref{thm: Extended RVI Q-learning}(ii) now follows from this argument and \cref{thm: Extended RVI Q-learning}(i).
\end{proofof}

\vspace*{-0.7cm}
\begin{myremark} \rm 
\cref{thm: Extended RVI Q-learning} generalizes previous convergence results on RVI Q-learning by \citet[Sec.\ 3]{abounadi2001learning} and \citet{wan2021learning}, which are applicable only to subfamilies of weakly communicating MDPs with singleton solution sets $\ExtRVIQsolutionq$, as mentioned earlier. Moreover, concerning algorithmic stability, the proof outlined in \citet{wan2021learning} has a notable gap, while the arguments presented in \citet[Sec.\ 3.2]{abounadi2001learning} also lack some essential details. We will discuss this in more detail in \cref{remark: sa result compare}.\qed
\end{myremark}

Recall that the state space $\sspace$ of a weakly communicating MDP can be partitioned into a closed communicating class $\S^o$ of states and a (possibly empty) set $\sspace \setminus \S^o$ consisting of states that are transient under all policies. For the purpose of finding an optimal policy, it suffices to solve the optimality equation on the closed subset $\S^o$ of $\sspace$. Therefore, the requirements on the update schedules can be relaxed accordingly, instead of imposing them on all state-action pairs as in \cref{assu: update}. Such extensions are relatively straightforward; \cref{thm: Extended RVI Q-learning} itself can be applied to the communicating MDP on the state space $\S^o$ to ensure convergence guarantees under suitably relaxed conditions. 

In the rest of this subsection, let us discuss a specific instance of these extensions, which is important in the context of RL, particularly where the knowledge of $\S^o$ is not available.
Consider the off-policy learning scenario described earlier before \eqref{eq: alg in RL context}, where an agent selects actions according to some behavior policy, resulting in a single data stream $\{(S_n, A_n, R_{n+1})\}$. This data is used with the update rule \eqref{eq: alg in RL context} to compute the iterates $\{Q_n\}$ by the agent. As $\S^o$ is a closed subset and states outside $\S^o$ are transient under any policy, the agent will inevitably enter $\S^o$ and remain within this part of the state space indefinitely. At this point, we can focus on the MDP defined on $\S^o$ and apply \cref{thm: Extended RVI Q-learning} to infer the asymptotic behavior of the algorithm \eqref{eq: alg in RL context}. This leads to the following corollary, presented after some necessary notation.

Let $\ispace^{o} \= \{ (s,a) : s \in \S^o, a \in \A\}$. We express a vector $q$ of state-action values as $q = (q^o, q^t)$, where $q^o$ represents the components of $q$ corresponding to the subset $\ispace^{o}$, and $q^t$ represents the rest of the components. Namely, $q^o = (q(s,a): (s,a) \in \ispace^{o})$ and $q^t = (q(s,a): (s,a) \not\in \ispace^{o})$. Let $\calQ^o$ denote the set of solutions to the action-value optimality equation \eqref{eq: action-value optimality equation} for the communicating MDP on the state-action space $\ispace^{o}$.

\begin{mycorollary} \label{cor: extended rvi q learning}
Consider a weakly communicating MDP and the algorithm \eqref{eq: alg in RL context} in the off-policy learning setting described above. Suppose that \cref{assu: stepsize} holds and in addition: 
\begin{itemize}[leftmargin=0.7cm,labelwidth=!]
\item[\rm (i)] For each $q^t \in \R^{|\ispace \setminus \ispace^o|}$, the function $f_{q^t}(\cdot) \= f(\cdot, q^t)$ satisfies \cref{assu: f} with $\ispace^{o}$ in place of $\ispace$.
\item[\rm (ii)] \cref{assu: update} holds with $\ispace^o$ in place of $\ispace$.
\end{itemize}
Then, almost surely, as $n \to \infty$, $f(Q_n) \to \optimalr$, while the $Q^{o}_n$-component of $Q_n$ converges to a sample path-dependent compact connected subset of $\calQ^o$. Part (ii) of \cref{thm: Extended RVI Q-learning} regarding the optimality of greedy policies for sufficiently large $n$ remains valid.   

In cases where the Differential Q-learning algorithm (\cref{ex: diff Q}) is used, or when the function $f$ meets the criteria of \cref{assu: f} without dependence on $q^t$, condition (i) can be omitted as it is automatically fulfilled.
\end{mycorollary}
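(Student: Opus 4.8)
The plan is to exploit that the closed communicating class $\S^o$ is absorbing under \emph{every} policy, so that after a finite random time the algorithm reduces to an instance of \eqref{eq: Extended RVI Q-learning} run on the communicating sub-MDP carried by $\S^o$, to which \cref{thm: Extended RVI Q-learning} applies directly. First I would show that, almost surely, the process enters $\S^o$ in finite time $N_0$ and never leaves afterwards: the ``never leaves'' part is closedness of $\S^o$, while for the ``enters'' part, since $\S^o$ is the \emph{unique} closed communicating class (\cref{def: w.c. MDP}), $\sspace\setminus\S^o$ contains no nonempty subset closed under all policies, so by a standard finite-MDP argument the supremum over policies of the probability of remaining in $\sspace\setminus\S^o$ forever is identically zero (otherwise some stationary policy would possess a recurrent class disjoint from $\S^o$, contradicting that every state outside $\S^o$ is transient under all policies); hence there are $m$ and $\epsilon_0>0$ such that from every state of $\sspace\setminus\S^o$, under every behavior policy, $\S^o$ is reached within $m$ steps with probability at least $\epsilon_0$, and iterating in blocks of $m$ gives almost-sure absorption. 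Since $S_k\notin\S^o$ for $k<N_0$ and $S_k\in\S^o$ for $k\ge N_0$, no $\ispace^o$-component is updated before $N_0$ and no component outside $\ispace^o$ is updated after $N_0$; in particular $Q^t_n$ equals a fixed (random) vector $\bar q^t$ for all $n\ge N_0$.

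For $n\ge N_0$, the $\ispace^o$-block of \eqref{eq: alg in RL context} reads exactly as algorithm \eqref{eq: Extended RVI Q-learning} on the communicating (hence weakly communicating) MDP carried by $\S^o$, with function $f_{\bar q^t}(\cdot)=f(\cdot,\bar q^t)$: indeed $S_{n+1}\in\S^o$, so $\max_{a'}Q_n(S_{n+1},a')=\max_{a'}Q^o_n(S_{n+1},a')$ involves only $Q^o_n$, and $f(Q_n)=f(Q^o_n,\bar q^t)=f_{\bar q^t}(Q^o_n)$; moreover the local clocks $\{\nu_n(i)\}_{i\in\ispace^o}$ are synchronized with this restarted recursion because they vanish for $n<N_0$. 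Conditionally on the almost-surely finite pair $(N_0,\bar q^t)$, I would verify the hypotheses of \cref{thm: Extended RVI Q-learning}: \cref{assu: f} for $f_{\bar q^t}$ by hypothesis (i); \cref{assu: stepsize} by assumption; and \cref{assu: update} on $\ispace^o$ by hypothesis (ii). \cref{thm: Extended RVI Q-learning}(i) then yields, almost surely, that $Q^o_n$ converges to a sample path-dependent compact connected subset of $\{q^o\in\calQ^o:f_{\bar q^t}(q^o)=r^o_*\}\subseteq\calQ^o$ and that $f_{\bar q^t}(Q^o_n)\to r^o_*$, where $r^o_*$ is the (constant) optimal reward rate of the sub-MDP. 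Because $\S^o$ is closed, from any $s\in\S^o$ every policy of the original MDP restricted to $\S^o$ is a sub-MDP policy producing the same process, and conversely; hence $r^o_*=\optimalr(s)=\optimalr$, and therefore $f(Q_n)=f_{\bar q^t}(Q^o_n)\to\optimalr$. This establishes the two convergence claims of the corollary.

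For the optimality of greedy policies --- the step I expect to be the main obstacle --- the difficulty is that $(\bar q^o,\bar q^t)$ need not solve the full optimality equation \eqref{eq: action-value optimality equation}, so one cannot invoke ``any policy greedy w.r.t.\ a solution is optimal'' on the whole vector $Q_n$. Instead I would reuse the neighborhood argument from the proof of \cref{thm: Extended RVI Q-learning}(ii): since $\{Q^o_n\}$ converges to a compact subset of $\calQ^o$, for all large $n$ and all $s\in\S^o$ we have $\argmax_{a}Q^o_n(s,a)\subseteq\argmax_{a}\bar q^o(s,a)$ for some $\bar q^o\in\calQ^o$, so the restriction to $\S^o$ of any policy greedy w.r.t.\ $Q_n$ is greedy w.r.t.\ a solution of the sub-MDP's optimality equation, hence optimal in the sub-MDP \citepalias[Theorem~3.1(e1)]{ScF78} and so attains reward rate $r^o_*=\optimalr$ from every state of $\S^o$. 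For $s\in\sspace\setminus\S^o$ the greedy action is unconstrained, but such states are transient under every policy, and the long-run average reward of a stationary policy from a transient state equals the average reward that policy attains upon its (almost-sure) first entry into $\S^o$, which here is $\optimalr$. Thus, for all large $n$, every greedy policy w.r.t.\ $Q_n$ is reward-rate optimal from all of $\sspace$, which is part (ii) of \cref{thm: Extended RVI Q-learning} in this setting.

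Finally, hypothesis (i) is automatic in the two cited cases. If $f$ does not depend on $q^t$, then $f_{q^t}=f$ for every $q^t$ and, viewed on $\R^{|\ispace^o|}$, it still satisfies \cref{assu: f}: the Lipschitz bound is inherited, and the two identities carry over since shifting $q^o$ by $c\onevec$ (now $\onevec\in\R^{|\ispace^o|}$) has the same effect on $f$ as shifting the whole of $q$. For Differential Q-learning (\cref{ex: diff Q}), freezing $q^t=\bar q^t$ turns \eqref{eq: f for diff Q} into $f_{\bar q^t}(q^o)=\eta\,\onevec^\top q^o+c$ for a constant $c$, an instance of the first displayed example of $f$ on $\ispace^o$ with $\nu=\eta\onevec$ and $\nu^\top\onevec=\eta|\ispace^o|>0$, so $f_{\bar q^t}$ satisfies \cref{assu: f} on $\ispace^o$ and (i) may be dropped. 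Apart from the greedy-policy step, the only nonroutine part is making the absorption-and-restart reduction precise --- almost-sure absorption under a possibly history-dependent behavior policy, and synchronization of the local clocks; everything else follows from \cref{thm: Extended RVI Q-learning}.
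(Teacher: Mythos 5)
Your proposal is correct and follows essentially the same route as the paper: almost-sure absorption into $\S^o$ in finite time, after which the recursion reduces to algorithm \eqref{eq: Extended RVI Q-learning} on the communicating sub-MDP with the frozen function $f_{\bar q^t}$, so that \cref{thm: Extended RVI Q-learning} applies. You in fact supply several details the paper leaves implicit---the uniform absorption bound under a history-dependent behavior policy, the synchronization of the local clocks, and the argument that greedy actions at states outside $\S^o$ cannot affect the reward rate because those states are transient under every policy---all of which are correct.
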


\begin{proof}
Let $\tilde{n}$ be the a.s.\ finite random time step at which the system enters $\S^o$. After time step $\tilde n$, the values of the $Q_{n}^t$-component of $Q_n$ remain unchanged, and the algorithm \eqref{eq: alg in RL context} effectively operates in the communicating MDP on $\S^o$ with the associated function $f_{\tilde{q}^t}$, where $\tilde{q}^t = Q_{\tilde n}^t$.  Under the assumptions of the corollary, \cref{thm: Extended RVI Q-learning} applies to this MDP on $S^o$ and the function $f_{\tilde{q}^t}$, with the corresponding solution set $\calQ_\infty$ being the subset of $\calQ^o$ constrained by $f_{\tilde{q}^t}(q^o) = \optimalr$. These observations lead to the main conclusions of the corollary, as discussed earlier. 

For the two special cases in the last assertion of the corollary, the second one is obvious. In the first case, concerning the Differential Q-learning algorithm, condition (i) can be verified directly from the expression of $f$ given in \eqref{eq: f for diff Q}.    
\end{proof}
\vspace*{-0.5cm}

\subsection{Empirical Verification of the Convergence Theorem} \label{sec-3.3}

We now present a set of experiments that empirically verify \cref{cor: extended rvi q learning} by evaluating two members within the RVI Q-learning family of algorithms \eqref{eq: alg in RL context}. The two tested members are Differential Q-learning (\cref{ex: diff Q}) and an algorithm whose $f$ function refers to the action value of a single fixed state-action pair. To streamline our presentation, in this section, we use the family name ``RVI Q-learning'' to refer to the latter family member. 

The tested domains included a communicating MDP and a weakly communicating MDP, as depicted in Figure \ref{fig: c1 two tested mdps}. The latter MDP is essentially the former with an additional state incorporated. 

\begin{figure*}[htbp]
\centering
\begin{subfigure}[t]{0.48\textwidth}
    \centering
    \includegraphics[width=\textwidth]{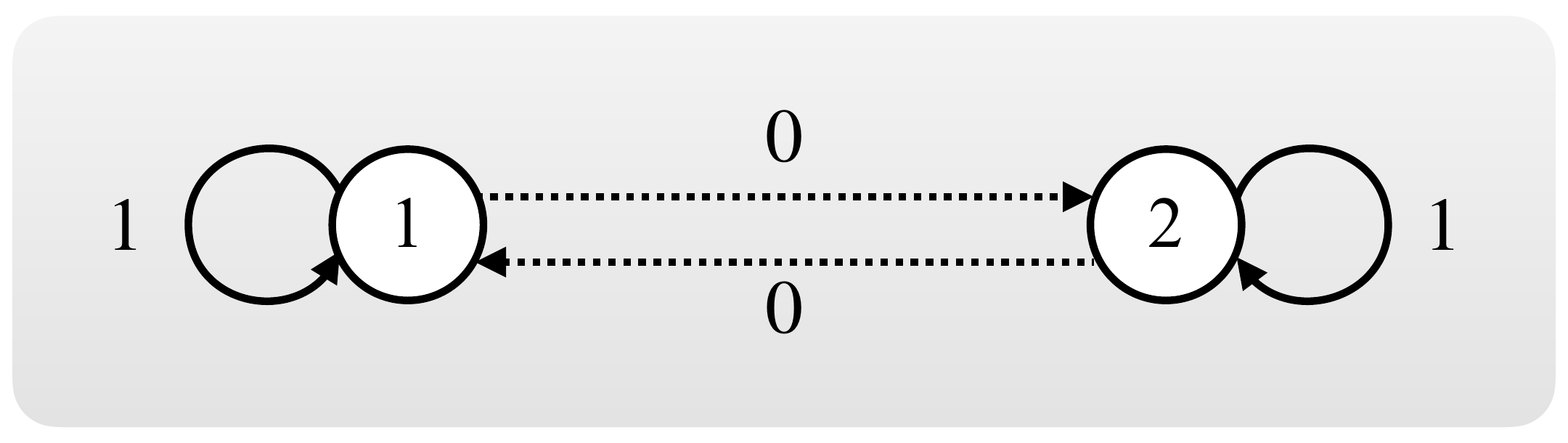}
    \subcaption{A communicating MDP. States $1$ and $2$ are in the same communicating class. For each of the two states, taking action \texttt{solid} stays at the same state and receives a reward of one, and taking action \texttt{dashed} moves to the other state and receives a reward of zero. The initial state of the MDP is state $1$.}
    \label{fig: c1 control exp mdp}
\end{subfigure}\hfill%
\begin{subfigure}[t]{0.48\textwidth}
    \centering
    \includegraphics[width=\textwidth]{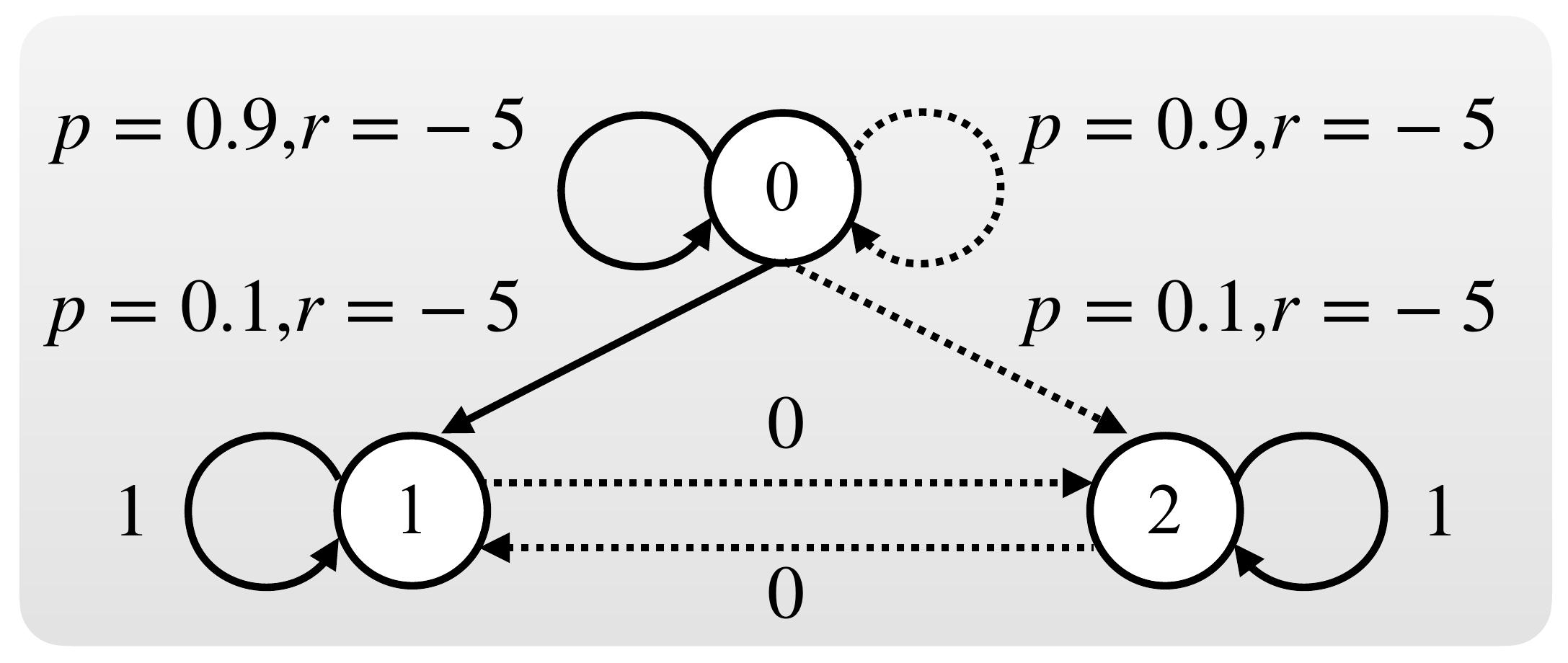}
    \subcaption{A weakly communicating MDP constructed by adding one more state (State $0$) to the MDP shown on the left panel. In state $0$, taking both \texttt{solid} and \texttt{dashed} actions stays at state $0$ with probability $0.9$. The MDP moves to state $1$ with probability $0.1$ given action \texttt{solid} and to state $2$ with probability $0.1$ given action \texttt{dashed}. The reward starting from state $0$ is always $-5$. The initial state of the MDP is state $0$.}
    \label{fig: c1 control exp wc mdp}
\end{subfigure}
    \caption{Tested MDPs for verifying the convergence of Differential Q-learning and RVI Q-learning when the solution set has more than one degree of freedom.}
    \label{fig: c1 two tested mdps}
\end{figure*}

\begin{figure*}[t!]
\centering
\begin{subfigure}[t]{0.48\textwidth}
    \centering
    \includegraphics[width=\textwidth]{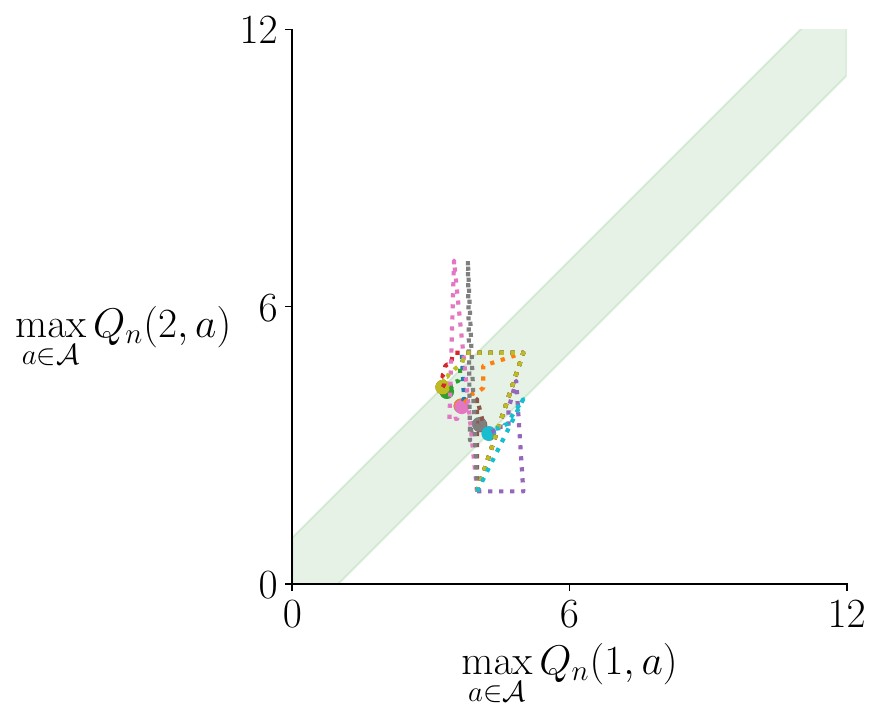}
    \subcaption{Differential Q-learning in the communicating MDP (\cref{fig: c1 control exp mdp}).}
    \label{fig: c1 diff q exp value dynamics communicating mdp}
\end{subfigure}\hfill%
\begin{subfigure}[t]{0.48\textwidth}
    \centering
    \includegraphics[width=\textwidth]{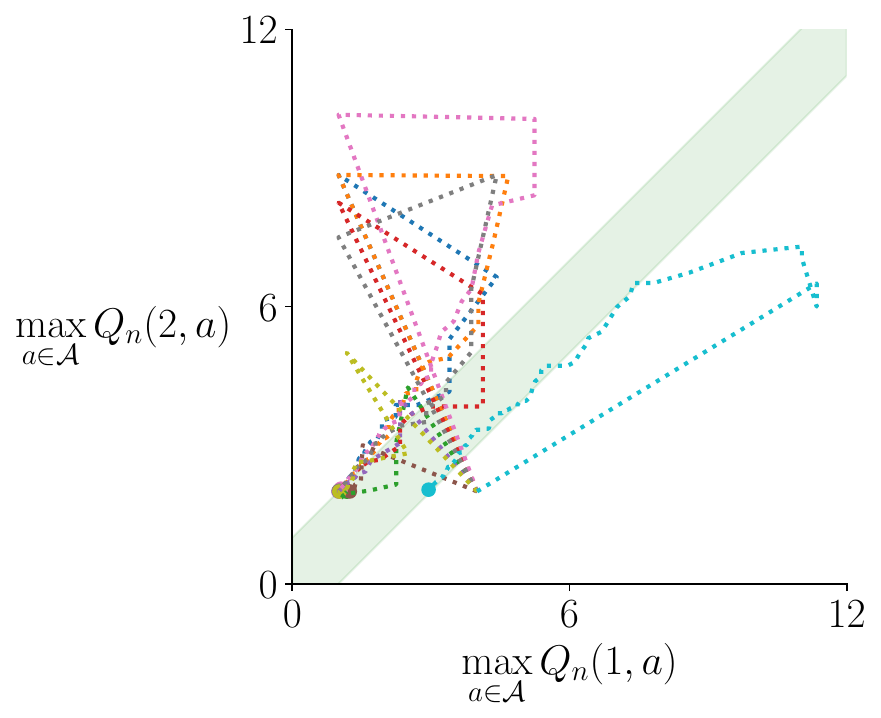}
    \subcaption{RVI Q-learning in the communicating MDP (\cref{fig: c1 control exp mdp}).}
    \label{fig: c1 rvi q exp value dynamics communicating mdp}
\end{subfigure}
\begin{subfigure}[t]{0.48\textwidth}
    \centering
    \includegraphics[width=\textwidth]{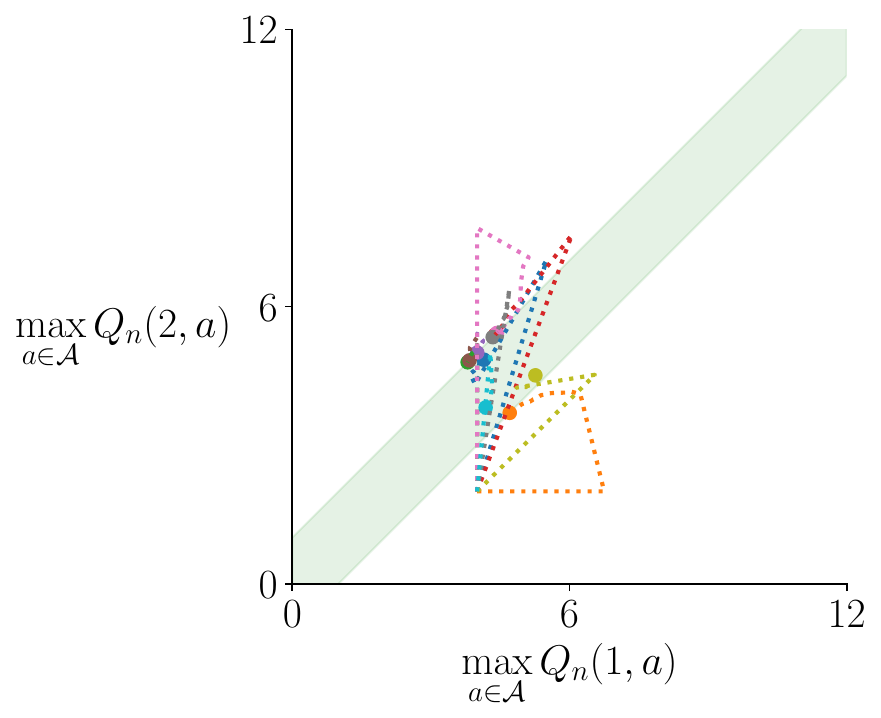}
    \subcaption{Differential Q-learning in the weakly communicating MDP (\cref{fig: c1 control exp wc mdp}). 
    }
    \label{fig: c1 diff q exp value dynamics weakly communicating mdp}
\end{subfigure}\hfill%
\begin{subfigure}[t]{0.48\textwidth}
    \centering
    \includegraphics[width=\textwidth]{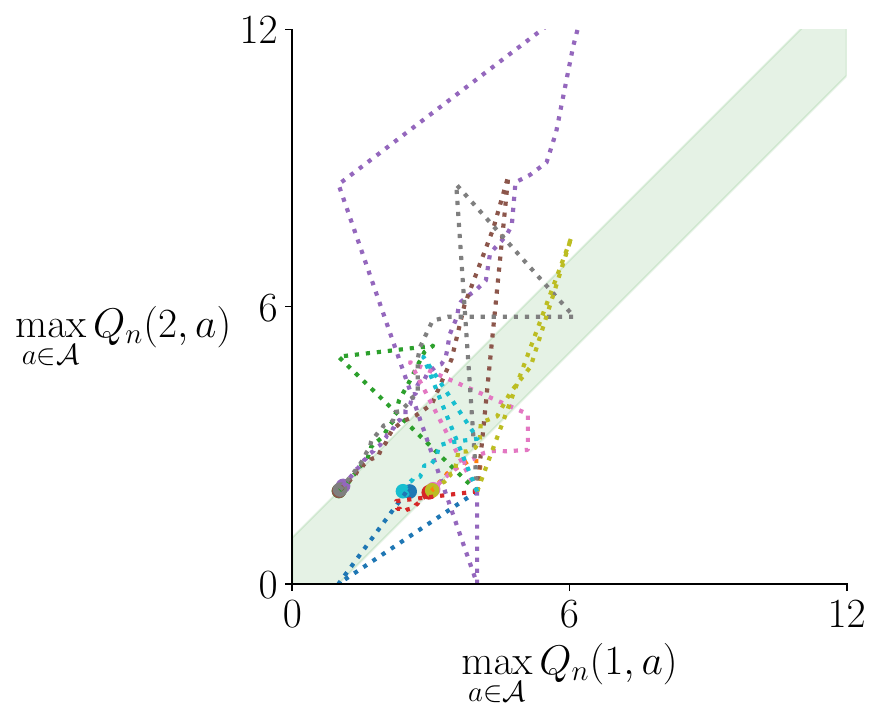}
    \subcaption{RVI Q-learning in the weakly communicating MDP (\cref{fig: c1 control exp wc mdp}).}
    \label{fig: c1 rvi q exp value dynamics weakly communicating mdp}
\end{subfigure}
    \caption{Dynamics of the estimated values produced by Differential Q-learning and RVI Q-learning in the two MDPs shown in \cref{fig: c1 two tested mdps}. The green regions denote $\calQ^o$. 
    }
    \label{fig: c1 control exp value dynamics}
\end{figure*}

For Differential Q-learning, we set $\eta = 1$, $\bar R_0 = 0$, $Q_0(1, \cdot) \equiv 4, Q_0(2, \cdot) \equiv 2$, and $Q_0(0, \cdot) \equiv 0$ in the weakly communicating MDP. Expressing Differential Q-learning's update rules \eqref{eq: diff Q-2} in the form of \eqref{eq: alg in RL context}, we have $f(q) = \sum_{s \in \sspace, a \in \aspace} q(s, a) - 12$. For RVI Q-learning, we let $f$ refer to the estimated action value of the state-action pair $(q, \texttt{dashed})$ (i.e., $f(q) = q(1, \texttt{dashed})$). $Q_0$ was chosen to be the same as in Differential Q-learning. Notably, both selections of the $f$ function adhere to condition (i) in \cref{cor: extended rvi q learning}.

Data is generated in the aforementioned off-policy learning setting. Specifically, the agent started from state $1$ in the communicating MDP and state $0$ in the other MDP. In both MDPs and for both tested algorithms, the agent then follows a behavior policy that chooses action \texttt{solid} with probability $0.8$, and action \texttt{dashed} with probability $0.2$ for all states. The step-size sequence $\alpha_n = 1/n$, ensuring that \cref{assu: stepsize} is satisfied. The choice of the behavior policy and the step-size sequence also guarantee that condition (ii) of \cref{cor: extended rvi q learning} is satisfied in both MDPs.
We performed $10$ runs for each algorithm in each MDP. Each run lasted for $20,000$ steps. For every ten steps, we recorded the higher estimated action values. 

The trajectories of the higher estimated action values of the two tested algorithms in the two MDPs are shown in the four sub-figures of \cref{fig: c1 control exp value dynamics}. In these sub-figures, each color represents the trajectory in one run. Estimated action values after $20,000$ steps are marked with a dot, matching the color of the trajectory. Notably, all colored dots fall within the green regions, which denote $\calQ^o$. This empirical result confirms that both algorithms converge to $\calQ^o$ in both MDPs, as predicted by \cref{cor: extended rvi q learning}. 
 
\section{Convergence of Options Algorithms}\label{sec: options algorithms}
This section extends our previous results for RVI Q-learning to hierarchical decision-making in MDPs involving temporally abstracted courses of actions, known as \emph{options}, rather than primitive actions. Associated with options, the underlying decision problems are SMDPs. Our focus is on two average-reward options algorithms introduced by \citet{wan2021average}: inter-option Q-learning and intra-option Q-learning. While the inter-option algorithm is more general and applicable to SMDPs, the intra-option algorithm exploits options' internal structures for computational efficiency.
 
\cites{wan2021average} convergence analyses (previously noted to contain gaps) required a unichain condition on the associated SMDPs. In this section, we characterize the solution properties and fully establish the convergence for these algorithms, under the much weaker assumption that the SMDPs are weakly communicating. 
 
We begin by introducing basic definitions and outlining optimality results for average-reward SMDPs (\cref{sec: smdp}). We then provide a formal description of decision-making with options, their connection to SMDPs, and the basis for the two option algorithms (\cref{sec: options background}), before presenting these algorithms alongside our convergence results in Sections\ \ref{sec: inter-option algorithms} and \ref{sec: intra-option algorithms}.

\subsection{Average-Reward Weakly Communicating SMDPs} \label{sec: smdp}

SMDPs generalize MDPs by providing greater flexibility in modeling temporal dynamics. Unlike MDPs, where state transitions occur at fixed intervals, SMDPs allow for transitions with random durations, known as \emph{holding times}. In the context of the options algorithms we will introduce later in this section, holding times in associated SMDPs correspond to the duration an option takes to terminate once initiated from a state in the MDP. To focus our discussion, we will consider finite state and action SMDPs where holding times are constrained to be greater than a fixed positive number. Furthermore, for notational simplicity, we will assume that both rewards and holding times are discrete, taking only countable values. Although we restrict our attention to these settings, many results presented here extend to more general SMDPs.

Specifically, we consider an SMDP defined by the tuple $(\sspace, \aspace, \rspace, \lspace, \trans)$. Here $\sspace$ ($\aspace$) is a finite set of states (actions), and $\rspace \subset \R$ ($\lspace \subset \R_+$) is a countable set of possible rewards (holding times). The transition function $\trans: \sspace \times \aspace \to \Delta(\sspace \times \rspace \times \lspace)$ governs the state evolution and reward generation in the SMDP. If the system is currently in state $s \in \sspace$ and action $a \in \aspace$ is applied, then with probability $\trans(s', r, l \mid s, a)$, the system transitions to state $s'$ at time $l \in \lspace$ and incurs reward $r \in \rspace$. For the remainder of this paper, we implicitly assume the following regularity condition on the SMDP model.

\begin{myassumption}\label{assu: smdp}
The SMDP $(\sspace, \aspace, \rspace, \lspace, \trans)$ is such that:
\begin{itemize}[leftmargin=0.7cm,labelwidth=!]
\item[\rm (i)] For some $\epsilon > 0$, $l \geq \epsilon$ for all possible holding times $l \in \lspace$. 
\item[\rm (ii)] For each state-action pair $(s,a) \in \sspace \times \aspace$, the expected holding time and expected reward incurred with the transition from $(s,a)$ are both finite.
\end{itemize}
\end{myassumption}

In an SMDP, actions are applied initially at time $0$ and subsequently at discrete moments upon state transitions. Policies, whether history-dependent or stationary, randomized or deterministic, are defined similarly to MDPs (cf.\ \cref{MDPs with the Average-Reward Criterion}). However, in SMDPs, $n$ represents the number of transitions, and the history up to the $n$th transition before the next action selection includes states, actions, rewards, and holding times realized up to that point: $s_0, a_0, r_1, l_1, s_1, \ldots, a_{n-1}, r_n, l_n, s_n$.

Similar to MDPs, the average reward rate of a policy $\pi$ is defined for each initial state $s \in \sspace$ as: 
\begin{equation} \label{eq: r-pi smdp}
 r(\pi, s) \= \, \textstyle{ \liminf_{t \to \infty}  t^{-1} \, \E_\pi \left [\sum_{n = 1}^{N_t} R_n \mid S_0 = s \right]\!.}
\end{equation} 
Here the expectation is taken w.r.t.\ the probability distribution of the random process $\{(S_n, A_n, R_{n+1}, L_{n+1})\}_{n \geq 0}$ induced by the policy $\pi$ and initial state $S_0=s$. The summation $\sum_{n = 1}^{N_t} R_n$ represents the total rewards received by time $t$, where $N_t$ counts the number of transitions by that time, defined as $N_t = \max \{ n \mid T_n  \leq t\}$ with $T_n \= \sum_{i=1}^n L_i$ and $T_0=0$. If the policy $\pi$ is stationary, then in the above definition of $r(\pi, s)$,  the $\liminf$ can be replaced by $\lim$ according to renewal theory [cf.\ \citep{Ros70}]. 

The optimal reward rate $\optimalr(\cdot)$ and optimal policies are defined similarly to MDPs, with the existence of a deterministic and stationary optimal policy well-established [cf.\ \citepalias{ScF78,yushkevich1982semi}]. 
Furthermore, stationary optimal policies $\pi^*$ enjoy a stronger sense of optimality, as indicated by the inequality: for any history-dependent policy $\pi$,
$$ \textstyle{ \lim_{t \to \infty} t^{-1}\, \bbE_{\pi^*} \left [\sum_{n = 1}^{N_t} R_n \mid S_0 = s \right] \geq \limsup_{t \to \infty} t^{-1}\, \bbE_\pi \left [\sum_{n = 1}^{N_t} R_n \mid S_0 = s \right]\!.}
$$

The classification of an SMDP as weakly communicating, communicating, or unichain is exactly as in the case of an MDP, as the definitions depend only on the communicating structure among the states (cf.\ \cref{sec: wc mdps}). Similar to the MDP case, in a weakly communicating SMDP, the optimal reward rate $\optimalr$ remains constant. In this case, the average-reward optimality equation can be expressed in two equivalent forms, either as the \emph{state-value optimality equation} or as the (state and) \emph{action-value optimality equation} [cf.\ \citepalias{ScF78,yushkevich1982semi}]:
\begin{align}
v(s) & = \max_{a \in \aspace} \left\{ r_{sa} - \bar r \cdot l_{sa} +  \sum_{s' \in \sspace} p_{ss'}^a  v(s') \right\},\qquad \forall \, s \in \calS,  \label{eq: SMDP state-value optimality equation}  \\
q(s, a) & =  r_{sa} - \bar r \cdot l_{sa} +  \sum_{s' \in \sspace} p_{ss'}^a \max_{a' \in \aspace} q(s', a'),\qquad \ \, \forall \, s \in \calS, \, a \in \aspace.  \label{eq: SMDP action-value optimality equation}
\end{align} 
In these optimality equations, we solve for $(\bar r, v)$ or $(\bar r, q)$. For each state-action pair $(s, a)$, $r_{sa}$ and $l_{sa}$ are the expected reward and expected holding time, respectively, while $\trans_{ss'}^a$ is the probability of transitioning from state $s$ to state $s'$ when taking action $a$. That is,
\begin{align}\label{eq: rsa}
    r_{sa} \= \sum_{s' \in \sspace} \sum_{r \in \rspace} \sum_{l \in \lspace} r\cdot \trans(s', r, l \mid s, a), \ \quad l_{sa} \= \sum_{s' \in \sspace} \sum_{r \in \rspace} \sum_{l \in \lspace}  l \cdot  \trans(s', r, l \mid s, a),
\end{align}
and
\begin{align}\label{eq: pssa}
    p_{ss'}^a \= \sum_{r \in \rspace} \sum_{l \in \lspace} \trans(s', r, l \mid s, a).
\end{align}

These optimality equations admit solutions, with solution structures similar to those described in \cref{sec: wc mdps} for MDPs. Specifically, any solution $(\bar r, v)$ or $(\bar r, q)$ will have its $\bar r$-component equal the optimal reward rate $r_*$. Moreover, any stationary policy that solves the corresponding maximization problems on the r.h.s.\ of either equation is optimal.

In weakly communicating SMDPs, the solutions of $v$ for \eqref{eq: SMDP state-value optimality equation} and the solutions of $q$ for \eqref{eq: SMDP action-value optimality equation}, denoted as $\mathcal{V}$ and $\mathcal{Q}$ respectively, may not be unique up to an additive constant. Instead, they can have multiple degrees of freedom, as characterized by \citepalias{ScF78} (cf.\ Sections~\ref{sec: wc mdps} and \ref{sec: degree of freedom Q}).

As noted in \cref{sec: classical rvi}, Schweitzer's RVI algorithm was originally proposed for solving these average-reward optimality equations in SMDPs \citep{schweitzer1971iterative,platzman1977improved}. Later, we will mention some details of this algorithm (cf.\ \cref{footnote-srvi}), as it served as the inspiration for one of the options algorithms we will discuss in this section.
 
\subsection{Average-Reward Learning with Options: Problem Formulations} \label{sec: options background}

We now return to the topic of average-reward MDPs, but with a different focus: finding the best policies among a class of \emph{hierarchical} policies defined by \emph{options}. In this context, an option represents a predefined low-level mechanism for controlling the system, while a hierarchical policy dictates how to switch between these low-level mechanisms. Formally, an option $o$ comprises an associated (possibly history-dependent) policy $\pi_o$ along with initiation and termination rules. The initiation rule specifies the states at which option $o$ can be activated. Once activated at some (random) time step $k$, actions are chosen according to the policy $\pi_o$, treating $k$ as the starting time step, until the option is deactivated based on its associated (possibly probabilistic) termination rule. During this period, decisions regarding actions and termination rely on ``local'' histories, comprising realized outcomes since the option's activation. The hierarchical policies of interest are history-dependent policies in the MDP framework, which specify the initial activation of options and how to switch to other options once an option terminates.

In this paper, we focus on the setting where the collection $\ospace$ of options is finite, and each option $o$ is associated with a \emph{stationary policy} $\pi_o$ and a \emph{memoryless, ``stationary'' termination rule} that depends solely on the current state of the system. Specifically, when option $o \in \mathcal{O}$ is active, the probability of taking action $a$ at state $s$ is given by $\pi(a \mid s, o) \= \pi_o( a \mid s)$. After option $o$ has been activated for one time step, before each subsequent action is selected, it is decided whether option $o$ should be terminated. The termination probability, denoted by $\beta(s,o)$ when $s$ is the current state, governs this decision. Upon termination, another option will be immediately activated depending on the hierarchical policy employed. To simplify notation, we assume that at each state, all options from $\ospace$ can be initiated. Thus, $\pi( a \mid s, o)$ and $\beta(s,o)$, where $(s, o) \in \sspace \times \ospace$ and $a \in \aspace$, are the given parameters associated with the set $\ospace$ of options.

In addition, we make the assumption throughout this section that the options satisfy the following condition. This assumption ensures that for each option, both the cumulative rewards and the duration of its active phase have finite expectations and variances.

\begin{myassumption}\label{assu: option assumption}
For each option in $\ospace$, once activated, there exists a nonzero probability that the option terminates in $\cardS$ time steps, irrespective of the state from which it is initiated.
\end{myassumption}

The problem at hand is to determine an optimal policy within the class $\Pi_{\ospace}$ of hierarchical policies associated with $\ospace$. A hierarchical policy $\mu$ is considered \emph{optimal} if it achieves the maximum average reward rate $r(\mu, s)$ (as defined in \eqref{eq: r-pi mdp}) among this class, for \emph{all} initial states $s \in \sspace$. This problem can be formulated in two ways, which will be explained shortly. The first formulation, known as the \emph{inter-option} formulation, does not rely on the internal structures of the options and reformulates the problem as finding a stationary optimal policy in an average-reward SMDP. The second formulation, called the \emph{intra-option} formulation, leverages the options' structures, particularly their memoryless, stationarity properties.

\subsubsection{Inter-Option Formulation} \label{sec-4.2.1}
Given an MDP $(\sspace, \aspace, \rspace, p)$ and a finite set $\ospace$ of options satisfying \cref{assu: option assumption}, we define an associated SMDP $(\sspace, \ospace, \orspace, \lspace, \otrans)$ on the state-action space $\sspace \times \ospace$:
\begin{itemize}[leftmargin=0.5cm,labelwidth=!]
\item The set $\orspace$ of possible rewards in the SMDP consists of all possible cumulative rewards during the active phase of each option in the MDP, while the set $\lspace$ of possible holding times includes all possible lengths of these phases.
\item The transition function $\otrans: \sspace \times \ospace \to \Delta(\sspace \times \orspace \times \lspace)$ of the SMDP is defined as follows: For each state-option pair $(s,o)$, $\otrans(s', r, l \mid s, o)$ is assigned the probability, in the MDP, that option $o$, if initiated from state $s$, terminates exactly $l$ time steps later, ending at state $s'$ and resulting in cumulative reward $r$.
\end{itemize}

Under \cref{assu: option assumption}, this SMDP satisfies the regularity condition required in \cref{assu: smdp}. Any policy $\mu$ for this SMDP corresponds to a hierarchical policy in $\Pi_\ospace$ for the MDP (also denoted by $\mu$). Moreover, under \cref{assu: option assumption}, it is not hard to show that the average reward rate of $\mu$ in the SMDP, as defined by \eqref{eq: r-pi smdp}, coincides with its average reward rate in the MDP, as defined by \eqref{eq: r-pi mdp}.

Let us denote all such policies $\mu$ in the MDP by $\hat{\Pi}_\ospace$. Note that $\hat{\Pi}_\ospace$ is a proper subset of $\Pi_\ospace$. A hierarchical policy in $\hat{\Pi}_\ospace$ decides which option to activate next at each decision point, based solely on past active options and their resulting durations and cumulative rewards. It disregards additional information that a general hierarchical policy in $\Pi_\ospace$ might consider, such as past states, actions, or rewards encountered within each active phase of those options. However, due to the Markovian property of the average-reward problem under consideration, it is sufficient to focus on $\hat{\Pi}_\ospace$. This is because for any policy in $\Pi_\ospace$ and any given initial state, there exists a policy in $\hat{\Pi}_\ospace$ that achieves no less average reward rate. (This conclusion follows from standard arguments; see e.g., \citet[proof of Theorem 5.5.1]{puterman2014markov}.)

With the preceding discussion, we arrive at the following conclusion.

\begin{myprop}[SMDP--MDP connection] \label{prop: smdp-mdp}
Under \cref{assu: option assumption}, any optimal policy $\mu$ for the SMDP $(\sspace, \ospace, \orspace, \lspace, \otrans)$ is also an optimal hierarchical policy for the MDP $(\sspace, \aspace, \rspace, p)$ with options $\ospace$, and the average reward rates of $\mu$ are identical in both problems. Moreover, compared with other hierarchical policies in the MDP, $\mu$ is strongly optimal in the sense defined by the inequality \eqref{eq: mdp strong opt}.
\end{myprop}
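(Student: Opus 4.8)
The plan is to assemble the proposition from three ingredients that have already been put in place in \cref{sec: smdp} and in the discussion preceding the statement: \textbf{(a)} the correspondence established just above between policies of the constructed SMDP $(\sspace,\ospace,\orspace,\lspace,\otrans)$ and the hierarchical policies in $\hat{\Pi}_\ospace \subseteq \Pi_\ospace$, under which the SMDP reward rate \eqref{eq: r-pi smdp} of a policy equals its MDP reward rate \eqref{eq: r-pi mdp} (this uses that the SMDP satisfies \cref{assu: smdp} under \cref{assu: option assumption}); \textbf{(b)} the reduction, valid by standard arguments (\citet[proof of Theorem 5.5.1]{puterman2014markov}), that for every $\pi \in \Pi_\ospace$ and every initial state there is a $\mu' \in \hat{\Pi}_\ospace$ that achieves no less average reward rate; and \textbf{(c)} the SMDP optimality theory of \citetalias{ScF78,yushkevich1982semi} quoted in \cref{sec: smdp}, which gives that this weakly-communicating-type SMDP has a constant optimal reward rate $\hat r_*$ and that a stationary optimal SMDP policy is strongly optimal among all history-dependent SMDP policies.

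First I would identify the optimal rates. Write $r_*^{\ospace}(s) \= \sup_{\pi \in \Pi_\ospace} r(\pi,s)$ for the optimal hierarchical reward rate in the MDP. By ingredient (a) and the bijection between SMDP policies and $\hat{\Pi}_\ospace$, $\hat r_*(s) = \sup_{\mu' \in \hat{\Pi}_\ospace} r(\mu',s)$ for every $s$; combining this with $\hat{\Pi}_\ospace \subseteq \Pi_\ospace$ gives $\hat r_*(s) \le r_*^{\ospace}(s)$, while ingredient (b) gives the reverse inequality, so $\hat r_*(\cdot) = r_*^{\ospace}(\cdot)$, a constant. Now if $\mu$ is optimal for the SMDP, then $r(\mu,s) = \hat r_*(s)$ for all $s$; by (a) this same number is the MDP reward rate of $\mu$ viewed as a hierarchical policy, and it equals $r_*^{\ospace}(s)$, so $\mu \in \hat{\Pi}_\ospace \subseteq \Pi_\ospace$ is an optimal hierarchical policy for the MDP and the reward rates coincide across the two problems.

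For the strong-optimality claim I would transport the SMDP inequality to primitive time. First, even for a general (possibly history-dependent) optimal SMDP policy $\mu$, applying the strong-optimality inequality of (c) with a stationary optimal SMDP policy on the left and $\mu$ on the right sandwiches $\liminf_t$ and $\limsup_t$ of $t^{-1}\E_\mu[\sum_{n=1}^{N_t}R_n \mid S_0=s]$ between equal quantities, so this is a genuine limit equal to $\hat r_*(s)$ and $\mu$ is itself strongly optimal in the SMDP. Next, for any $\mu' \in \hat{\Pi}_\ospace$, a routine time-change/renewal-reward sandwich — valid because holding times and per-option cumulative rewards have expectations bounded uniformly over states under \cref{assu: option assumption} — identifies the $\liminf$/$\limsup$ of the primitive-time Cesàro average $\frac1n \sum_{k=1}^n \E[R_k \mid S_0=s]$ with those of $t^{-1}\E[\sum_{n=1}^{N_t}R_n \mid S_0=s]$, turning the SMDP inequality into $\lim_n \frac1n\sum_{k=1}^n \E_\mu[R_k \mid S_0=s] \ge \limsup_n \frac1n\sum_{k=1}^n \E_{\mu'}[R_k \mid S_0=s]$ for all $\mu' \in \hat{\Pi}_\ospace$. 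Finally, to extend the right-hand side to all $\pi \in \Pi_\ospace$, I would use a sharpened form of the reduction behind (b): one can choose $\mu' \in \hat{\Pi}_\ospace$ so that the law of the SMDP-level process (the sequence of options, durations, and cumulative rewards) under $\mu'$ equals that under $\pi$, whence $\limsup_n \frac1n\sum_{k=1}^n\E_\pi[R_k\mid S_0=s] = \limsup_n \frac1n\sum_{k=1}^n\E_{\mu'}[R_k\mid S_0=s]$; this yields \eqref{eq: mdp strong opt} with $\pi_* = \mu$.

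I expect the main obstacle to be exactly this last step: ingredient (b) as literally stated only compares the $\liminf$-type reward rates, whereas \eqref{eq: mdp strong opt} requires dominating the $\limsup$ of arbitrary hierarchical policies, so one must either strengthen the reduction to a statement about the law of the SMDP-level process (as sketched) or argue directly that forgetting the intra-option history at option-decision points cannot decrease the $\limsup$. The companion bookkeeping — passing between SMDP ``time'' $t$ and the number of primitive MDP steps $n$, and controlling the reward of the currently-active option — is routine given \cref{assu: option assumption} but must be carried out to make the two reward-rate functionals line up.
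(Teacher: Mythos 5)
Your proof is correct and follows essentially the same route as the paper, which obtains the proposition directly from the preceding discussion: the reward-rate identity between the SMDP and the MDP under \cref{assu: option assumption}, the reduction from $\Pi_\ospace$ to $\hat{\Pi}_\ospace$ via the standard argument of \citet[proof of Theorem 5.5.1]{puterman2014markov}, and the strong optimality of stationary optimal SMDP policies. You are in fact more explicit than the paper about the $\limsup$ issue in extending the comparison from $\hat{\Pi}_\ospace$ to all of $\Pi_\ospace$; the law-matching form of the reduction you sketch is exactly what the cited standard argument delivers, since the intra-option dynamics are policy-independent given the initiation states and options.
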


Based on this proposition and the SMDP theory reviewed in the previous subsection, we can find an optimal hierarchical policy $\mu$ by identifying a stationary optimal policy for the associated SMDP $(\sspace, \ospace, \orspace, \lspace, \otrans)$. This can be achieved under the condition that the SMDP is weakly communicating, through solutions of its action-value optimality equation \eqref{eq: SMDP action-value optimality equation}. For clarity, we express this optimality equation in the present option context as:  
\begin{equation}
   q(s, o)  = \hat r_{so} - \bar r \cdot \hat l_{so} + \sum_{s' \in \sspace} \otrans_{ss'}^{o} \max_{o' \in \ospace} q (s', o'), \qquad \forall \, s \in \sspace, \ o \in \ospace, \label{eq: inter-option option-value optimality equation}
\end{equation}
where we have used the symbols $\hat r_{so}$, $\hat l_{so}$, and $\otrans_{ss'}^{o}$ to denote the expected reward, the expected holding time, and the state transition probability, respectively, in the SMDP $(\sspace, \ospace, \orspace, \lspace, \otrans)$. We shall refer to this equation as the \emph{option-value optimality equation}. The inter-option Q-learning algorithm, which we will discuss later, is based on the RVI approach for solving this equation. 

\subsubsection{Intra-Option Formulation}
Recall that the options under consideration possess memoryless, stationarity properties, as represented by the parameters $\{\pi(a \mid s, o)\}_{a \in \aspace}$ and $\beta(s,o)$, $s \in \sspace, o \in \ospace$, governing their action selection and termination. By leveraging this internal structure of the options, we obtain an alternative formulation of the optimality equation \eqref{eq: inter-option option-value optimality equation}:
\begin{align}
 &   q(s, o) = \sum_{a \in \aspace} \pi(a \mid s, o) \left (r_{sa} - \bar r + \sum_{s' \in \sspace} p_{ss'}^a  U[q](s', o) \right), \qquad \forall\, s \in \sspace, \, o \in \ospace, \label{eq: intra-option option-value optimality equation} \\
& \text{where} \ \ 
    U[q](s', o)  \= \big( 1 - \beta(s', o) \big) q(s', o) + \beta(s', o) \max_{o' \in \ospace} q(s', o').\label{eq: intra-option option-value optimality equation2}
\end{align}
(Recall that $r_{sa}$ and $p_{ss'}^a$ represent, respectively, the expected one-stage reward and the state transition probability in the MDP, as previously defined in \cref{sec: wc mdps}.) We will delve into the intra-option algorithm, designed to solve this equation, later in this section.

\begin{myremark} \rm
To offer some insights, we mention that the preceding equation can also be derived by considering another formulation of the average-reward problem at hand as an associated MDP $(\widetilde \sspace, \widetilde \aspace, \rspace, \tilde p)$ on the (finite) state space $\widetilde \sspace = \sspace \times \ospace$. Here, each state at a given time represents the pair of the state and the active option at that time in the original MDP. The (finite) action space $\widetilde \aspace$ consists of all mappings $\tilde \mu$ from $\sspace$ into $\ospace$. The transition function $\tilde p$ is determined by the parameters of the options and the original MDP, describing the generation of the one-stage reward and the transition to the next pair of state and active option in the original MDP, if options are activated according to a mapping $\tilde{\mu} \in \widetilde \aspace$. Equation \eqref{eq: intra-option option-value optimality equation} then emerges as the state-value optimality equation \eqref{eq: state-value optimality equation} for this associated MDP.\qed 
\end{myremark}

We now establish the equivalence between the intra-option and inter-option formulations of the optimality equation on state-option values:

\begin{myprop} \label{prop: c2 inter = intra equations}
If $\ospace$ satisfies \cref{assu: option assumption}, then $(\bar r, q)$ solves the option-value optimality equation \eqref{eq: inter-option option-value optimality equation} if and only if it solves equation \eqref{eq: intra-option option-value optimality equation}. 
\end{myprop}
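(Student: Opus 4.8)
The plan is to fix a candidate pair $(\bar r, q)$ and show that both \eqref{eq: inter-option option-value optimality equation} and \eqref{eq: intra-option option-value optimality equation} are equivalent to the single requirement that, for every option $o$, the slice $s \mapsto q(s,o)$ equals a uniquely determined vector built from the ``continuation values'' $M(s) \= \max_{o' \in \ospace} q(s, o')$. The driver of the argument is the substochastic matrix $\bar P^o$ with entries $\bar P^o_{ss'} \= \sum_{a \in \aspace} \pi(a \mid s, o)\, p_{ss'}^a\, (1 - \beta(s', o))$, which describes one primitive step taken while option $o$ is running and not yet terminated. Under \cref{assu: option assumption} there is a $\delta > 0$ such that from every state option $o$ terminates within $\cardS$ steps with probability at least $\delta$; hence every row sum of $(\bar P^o)^{\cardS}$ is at most $1 - \delta < 1$, so $I - \bar P^o$ is invertible and every affine map $w \mapsto c + \bar P^o w$ has a unique fixed point.

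First I would record the one-transition (first-step) recursions for the SMDP primitives attached to option $o$. Conditioning the option's terminating execution on the first action $a \sim \pi(\cdot \mid s, o)$, the induced MDP transition to $s'$, and whether the option then terminates (with probability $\beta(s', o)$) or continues, gives
\begin{align*}
\hat r_{so} &= \textstyle \sum_{a} \pi(a \mid s, o)\big( r_{sa} + \sum_{s'} p_{ss'}^a (1 - \beta(s', o))\, \hat r_{s' o}\big), \\
\hat l_{so} &= \textstyle \sum_{a} \pi(a \mid s, o)\big( 1 + \sum_{s'} p_{ss'}^a (1 - \beta(s', o))\, \hat l_{s' o}\big), \\
\otrans_{ss'}^o &= \textstyle \sum_{a} \pi(a \mid s, o)\big( p_{ss'}^a \beta(s', o) + \sum_{s''} p_{ss''}^a (1 - \beta(s'', o))\, \otrans_{s'' s'}^o \big),
\end{align*}
all finite and absolutely convergent by \cref{assu: option assumption}. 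These are standard first-step analysis identities for the option's stopped chain, and also follow directly from the definition of $\otrans$ in \cref{sec-4.2.1}.

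The key step is then a substitution-and-regroup calculation. Let $\Psi_o(s)$ denote the right-hand side of \eqref{eq: inter-option option-value optimality equation}, viewed as a function of $s$ for fixed $o$ (it depends on $q$ only through $M$). Substituting the three recursions into $\hat r_{so} - \bar r\, \hat l_{so} + \sum_{s'} \otrans_{ss'}^o M(s')$ and collecting the terms carried by $p_{ss'}^a (1 - \beta(s', o))$ — which reassemble into $\hat r_{s' o} - \bar r\, \hat l_{s' o} + \sum_{s''} \otrans_{s' s''}^o M(s'') = \Psi_o(s')$ — yields
\begin{align*}
\Psi_o(s) = \sum_{a \in \aspace} \pi(a \mid s, o)\Big( r_{sa} - \bar r + \sum_{s' \in \sspace} p_{ss'}^a \big[ (1 - \beta(s', o))\, \Psi_o(s') + \beta(s', o)\, M(s') \big] \Big).
\end{align*}
Thus $\Psi_o(\cdot)$ is a fixed point of the affine operator $w \mapsto \calF_o^{[q]}(w)$ whose value at $s$ is exactly the right-hand side of \eqref{eq: intra-option option-value optimality equation}--\eqref{eq: intra-option option-value optimality equation2} with the continuation slot $q(\cdot, o)$ replaced by $w$ and $\max_{o'} q(\cdot, o')$ kept as $M$; since the linear part of $\calF_o^{[q]}$ is $\bar P^o$, this fixed point is unique, so $\Psi_o(\cdot)$ is the only one.

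Finally I would assemble the equivalence: $(\bar r, q)$ solves \eqref{eq: inter-option option-value optimality equation} exactly when $q(\cdot, o) = \Psi_o(\cdot)$ for every $o$, while $(\bar r, q)$ solves \eqref{eq: intra-option option-value optimality equation} exactly when, for every $o$, the slice $q(\cdot, o)$ is a fixed point of $\calF_o^{[q]}$ (which depends on $q$ only through the same $M$), i.e.\ — by uniqueness — exactly when $q(\cdot, o) = \Psi_o(\cdot)$ for every $o$. The two conditions coincide, proving the proposition. I expect the main obstacle to be the bookkeeping in the key step: tracking the holding-time contribution $-\bar r\, \hat l_{so}$ correctly through the substitution (the ``$1$'' in the $\hat l$ recursion is precisely what produces the single $-\bar r$ per primitive step in the intra-option form), and justifying the rearrangements of the (possibly countably infinite) sums over rewards and holding times via the finite-expectation guarantees of \cref{assu: option assumption}.
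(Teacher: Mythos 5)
Your proof is correct, but it takes a genuinely different route from the paper's. The paper argues probabilistically at the trajectory level: it rewrites both optimality equations as expectations over the option's terminating execution, proves the direction (inter $\Rightarrow$ intra) by splitting off the first transition and applying the tower property with the options' memoryless structure, and proves the converse by iterating the one-step identity $n$ times and passing to the limit via the dominated convergence theorem (this is where \cref{assu: option assumption} enters, through the integrability of $\sum_{k=0}^{\tau-1}(|R_{k+1}|+|\bar r|)$). Your argument instead works algebraically with the model primitives: the three first-step recursions for $\hat r_{so}$, $\hat l_{so}$, $\otrans^o_{ss'}$, the observation that the inter-option right-hand side $\Psi_o$ is a fixed point of the intra-option affine operator $\calF_o^{[q]}$ (your substitute-and-regroup computation checks out, including the bookkeeping of the ``$1$'' in the $\hat l$ recursion producing the single $-\bar r$), and uniqueness of that fixed point from the substochasticity of $(\bar P^o)^{\cardS}$, which is exactly what \cref{assu: option assumption} delivers. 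The payoff of your route is that both implications drop out simultaneously from fixed-point uniqueness, with no separate truncation-and-limit argument; the cost is that the interchange-of-summation work the paper concentrates in the dominated convergence step reappears in justifying the first-step recursions for the (countably supported) reward and holding-time distributions, which you correctly flag but should carry out with the same care. Both proofs lean on \cref{assu: option assumption} for essentially the same reason — geometric tails of the termination time — once in the form of integrability, once in the form of a spectral-radius bound.
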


\begin{proof}
Consider the following scenario in the MDP: starting from the current state and active option $(S_0, O_0)$, actions are selected according to $O_0$ until some time step $\tau \geq 1$ later when $O_0$ is deactivated, resulting in a trajectory of states, actions, and rewards $(S_0, A_0, R_1, S_1, A_1, R_2, \ldots, S_\tau)$. Let $\mathbb{E}_{so}$, $(s,o) \in \sspace \times \ospace$, denote the expectation operator with respect to the probability distribution of this process given that $(S_0, O_0) = (s,o)$. Note that, due to the memoryless property of the options, this distribution remains the same regardless of whether option $o$ has just been activated at state $S_0$ or was activated prior to the visit to state $S_0$.

In view of the definitions of the option parameters, equation \eqref{eq: intra-option option-value optimality equation} can be rewritten as:
\begin{equation} \label{eq-option-prf1b}
  q(s, o) = \E_{so} \left[ R_1 - \bar r + \ind\{ \tau = 1\} \max_{o' \in \ospace}  q(S_1, o') +   \ind\{ \tau > 1\} q(S_1, o)  \right],    \quad \forall \, s \in \sspace, \, o \in \ospace. 
\end{equation}
On the other hand, by the definition of the SMDP $(\sspace, \ospace, \orspace, \lspace, \otrans)$, the optimality equation \eqref{eq: inter-option option-value optimality equation} can be expressed equivalently as:
\begin{equation} \label{eq-option-prf1a}
  q(s, o) = \E_{so} \left[ \sum_{k=0}^{\tau - 1} ( R_{k+1}  - \bar r) + \max_{o' \in \ospace} q(S_{\tau}, o' ) \right], \qquad \forall \, s \in \sspace, \, o \in \ospace.
\end{equation}

To establish the proposition, let us first assume that $(\bar r, q)$ solves \eqref{eq-option-prf1a}.
Let us decompose the term inside the expectation in \eqref{eq-option-prf1a} into three parts, separating the case $\tau = 1$ from the case $\tau > 1$:
\begin{equation} \label{eq-option-prf2}
 (R_1 - \bar r) + \ind\{\tau = 1\} \max_{o' \in \ospace} q(S_1, o' )  +   \ind\{\tau > 1\} \left( \sum_{k=1}^{\tau - 1} ( R_{k+1}  - \bar r) + \max_{o' \in \ospace} q(S_{\tau}, o' ) \right).
 \end{equation}
Comparing this expression with the r.h.s.\ of \eqref{eq-option-prf1b}, we see that to prove that $(\bar r, q)$ also solves \eqref{eq-option-prf1b} amounts to showing that for all $(s, o) \in \sspace \times \ospace$, the following equality holds:
\begin{equation} \label{eq-option-prf3}  
  \E_{so} \left[  \ind\{ \tau > 1\} q(S_1, o)  \right] =  \E_{so} \left[ \ind\{\tau > 1\} \left( \sum_{k=1}^{\tau - 1} ( R_{k+1}  - \bar r) + \max_{o' \in \ospace} q(S_{\tau}, o' ) \right) \right].
\end{equation}
Now, for each $(s,o) \in \sspace \times \ospace$, we have
\begin{equation} \label{eq-option-prf4}
\E_{so} \left[ \ind\{\tau > 1\} \left(  \sum_{k=1}^{\tau - 1} ( R_{k+1}  - \bar r) + \max_{o' \in \ospace} q(S_{\tau}, o' ) \right)  \,\Big|\, S_1, O_1, \tau > 1 \right] = \ind\{\tau > 1\} q(S_1, o). 
\end{equation}
Here, the expectation on the left-hand side is taken conditioned on the event $\{\tau > 1\}$ and $(S_1, O_1)$, where $O_1$ represents the active option at time step $1$, equaling $o$ when $\tau > 1$. The equality stems from the memoryless property of the options and the assumption that $(\bar r, q)$ satisfies \eqref{eq-option-prf1a}. The desired result \eqref{eq-option-prf3} then follows straightforwardly from \eqref{eq-option-prf4}, confirming $(\bar r, q)$ as a solution to \eqref{eq-option-prf1b}.

Next, let us assume $(\bar r, q)$ solves \eqref{eq-option-prf1b}. For each $(s,o) \in \sspace \times \ospace$, we can expand the expression for $q(s,o)$ from the r.h.s.\ of \eqref{eq-option-prf1b} by leveraging the memoryless property of the options and iteratively applying \eqref{eq-option-prf1b} to express $q(S_1, o)$, $q(S_2, o)$, and so on. This process leads to the following identity relations: for all $n \geq 1$, with $\tau \wedge n \= \min \{ \tau, n\}$, 
\begin{equation}  \label{eq-option-prf5}
   q(s, o) = \E_{so} \left[ \sum_{k=0}^{\tau \wedge n - 1} ( R_{k+1}  - \bar r) + \ind\{ \tau \leq n\} \max_{o' \in \ospace} q(S_{\tau}, o' )  + \ind\{ \tau > n \} q(S_n, o) \right].
\end{equation}
Denote the term inside the expectation by $Y_n$. Under \cref{assu: option assumption}, as $n \to \infty$, $Y_n$ converges a.s.\ to $\sum_{k=0}^{\tau - 1} ( R_{k+1}  - \bar r) + \max_{o' \in \ospace} q(S_{\tau}, o' )$. Additionally, for all $n$, $|Y_n|$ can be bounded by the integrable random variable $\sum_{k=0}^{\tau - 1} ( | R_{k+1}| + |\bar r|) + 2 \| q \|_\infty$ (with its integrability following from \cref{assu: option assumption}). Hence, by the dominated convergence theorem \citep[Theorem 4.3.5]{Dud02}, $\lim_{n \to \infty} \E_{so} [Y_n]$ exists and equals the r.h.s.\ of \eqref{eq-option-prf1a}. Combined with identity \eqref{eq-option-prf5}, this proves that $(\bar r, q)$ satisfies \eqref{eq-option-prf1a}. 
\end{proof}
\vspace*{-0.7cm}

\subsection{Inter-Option Algorithm}\label{sec: inter-option algorithms}

In this subsection, we focus on the inter-option Q-learning algorithm, which aims to find an optimal hierarchical policy for a given MDP with options $\ospace$, by solving the option-value optimality equation \eqref{eq: inter-option option-value optimality equation} of the associated SMDP. 

We shall assume that the associated SMDP is weakly communicating. Based on the previous discussions in Sections~\ref{sec: smdp} and~\ref{sec-4.2.1}, this assumption implies that in optimizing over the hierarchical policies for the MDP, regardless of the initial state, the optimal reward rate $\hat r_*$ remains constant. Moreover, $\hat r_*$ is also the optimal reward rate in the associated SMDP, coinciding with the $\bar r$-component of every solution of the optimality equation \eqref{eq: inter-option option-value optimality equation}, where these solutions exist but are not necessarily unique up to an additive constant.

Note that for the associated SMDP to be weakly communicating, it is neither necessary nor sufficient for the MDP to be weakly communicating. A sufficient condition is that the MDP is weakly communicating and for every state, each action has a non-zero probability of being chosen by some option, but this condition could be unnecessarily restrictive in practice. On the other hand, if the associated SMDP is communicating, then the MDP must be communicating. 

To solve \eqref{eq: inter-option option-value optimality equation}, consider its equivalent scaled form \eqref{eq: scaled inter-option option-value optimality equation}, obtained by dividing the equation by the expected option duration $\hat{l}_{so}$ for each state-option pair:
\begin{equation} 
 \frac{1}{\hat{l}_{so}} \Big(\hat r_{so} - \hat l_{so} \cdot \bar r  + \sum_{s' \in \sspace} \otrans_{ss'}^{o} \max_{o' \in \ospace} q (s', o') - q(s, o)\Big) = 0, \qquad \forall \, s \in \sspace, \, o \in \ospace, \notag
\end{equation}
or equivalently,
\begin{equation} \label{eq: scaled inter-option option-value optimality equation}
  \frac{\hat r_{so}}{\hat{l}_{so}} - \bar r  + \frac{1}{\hat{l}_{so}} \sum_{s' \in \sspace} \otrans_{ss'}^{o} \max_{o' \in \ospace} q (s', o') + \Big(1 - \frac{1}{\hat{l}_{so}}\Big) \, q(s,o)  - q(s, o) = 0, \quad \forall \, s \in \sspace, \, o \in \ospace.
\end{equation}
As $\hat{l}_{so} \geq 1$, this equation can be related to the average-reward optimality equation for an MDP, effectively transforming the SMDP into an equivalent MDP. \citet{schweitzer1971iterative} first used this idea to derive a convergent RVI algorithm
\footnote{Schweitzer's RVI algorithm for solving SMDPs' action-value optimality equations is similar to but differs from \eqref{eq: S-RVI}: for all $(s,a) \in \sspace \times \aspace$,
$$ Q_{n+1}(s,a) = Q_n(s,a) + \alpha \left( \frac{r_{sa} - l_{sa}  \cdot f(Q_n) + \sum_{s' \in \sspace} p_{ss'}^a \max_{a' \in \aspace} Q_{n}(s', a')  - Q_n(s,a)}{l_{sa}} \right),$$  
where 
$f(Q_n) =  l_{\bar s \bar a}^{-1}\left(r_{\bar{s}\bar{a}} + \sum_{s' \in \sspace} p_{\bar ss'}^{\bar a} \max_{a' \in \aspace} Q_{n}(s', a')  - Q_n(\bar s, \bar a)\right)$ for some fixed state-action pair $(\bar s, \bar a)$,
and the step size $\alpha$ can be chosen within $(0, \min_{s \in \sspace, a \in \aspace} l_{sa})$. This algorithm converges provided that the average reward rate remains constant, particularly in weakly communicating SMDPs \citep{platzman1977improved}.\label{footnote-srvi}}
that solves similarly scaled state-value optimality equations for SMDPs. The inter-option Q-learning algorithm, introduced by \citet{wan2021average}, was inspired by Schweitzer's RVI algorithm and can be viewed as its asynchronous stochastic counterpart. Here is how the inter-option algorithm operates.

The algorithm maintains estimates of both state-option values and expected option durations, updating them iteratively using ``option-level'' transition data from the MDP. At each iteration $n$, these estimates are represented by $|\sspace \times \ospace|$-dimensional vectors $Q_n$ and $L_n > \zerovec$, respectively. The initial values $Q_0$ and $L_0 > \zerovec$ can be arbitrarily chosen. Similar to RVI Q-learning, the components $Q_n(s,o)$ and $L_n(s,o)$ are updated for chosen state-option pairs $(s,o)$ from a randomly selected nonempty subset $Y_n \subset \sspace \times \ospace$, while the remaining components remain unchanged.

Updates are based on transition data generated by executing selected options in the MDP. For each $(s,o) \in Y_n$, the algorithm executes option $o$ from state $s$ in the MDP until termination at some state $\hat S_{\tau}$ after $\tau \geq 1$ time steps. Let $S_{n+1}^{so}= \hat S_{\tau}$, $L_{n+1}^{so} = \tau$, and $R_{n+1}^{so}$ be the cumulative reward incurred during this period. Then $(S_{n+1}^{so}, R_{n+1}^{so}, L_{n+1}^{so})$ follows the transition distribution $\otrans(\cdot \mid s, o)$ of the associated SMDP by definition. Using these generated data for $(s,o) \in Y_n$, the algorithm updates the components of $Q_n$ and $L_n$ according to the following rules:
\begin{align}
& \text{for $(s, o) \not\in Y_n$:}  \ \ \   Q_{n+1}(s, o) \= Q_{n}(s, o), \ \ \  L_{n+1}(s, o) \= L_{n}(s, o); \notag \\
& \text{for $(s, o) \in Y_n$:}  \nonumber \\
 & \    Q_{n+1}(s, o)  \= Q_{n}(s, o) +  \alpha_{\nu_n(s, o)} \frac{R_{n+1}^{so} - L_n(s, o) f(Q_n) + \max_{o' \in \ospace} Q_n (S_{n+1}^{so}, o') - Q_n(s, o)}{L_n(s, o)},  \label{eq: c2 Inter-option algorithm}  \\
 & \   L_{n+1}(s, o) \= L_{n}(s, o) + \beta_{\nu_n(s, o)} (L_{n+1}^{so} - L_{n}(s, o)).   
    \label{eq: c2 Inter-option Differential TD-learning L}
\end{align}
Here, $\nu_n(s,o)$ denotes the cumulative count of how many times the state-option pair $(s,o)$ has been chosen up to iteration $n$, with $\nu_n(s,o) = \sum_{k=0}^{n} \ind \{ (s,o) \in Y_k \}$. The step-size sequence $\{\alpha_n\}$, the function $f: \sspace \times \ospace \to \R$, and the asynchronous update schedules must satisfy the same assumptions as in RVI Q-learning. The update rule \eqref{eq: c2 Inter-option Differential TD-learning L} applies stochastic gradient descent to estimate the expected option duration $\hat l_{so}$, using a separate standard step-size sequence $\beta_n \in [0,1], n \geq 0$. We summarize these algorithmic conditions below.

{\samepage
\begin{myassumption}[algorithmic requirements for the inter-option algorithm]\label{assu: inter-option extended rvi q learning}\hfill
 \begin{itemize}[leftmargin=0.7cm,labelwidth=!]
\item[\rm (i)] The function $f$ satisfies \cref{assu: f}, the step-size sequence $\{\alpha_n\}$ satisfies \cref{assu: stepsize}, and the asynchronous update schedules are such that $\{\alpha_n\}$ and $\{\nu_n\}$ jointly satisfy \cref{assu: update}, with the space $\ispace = \sspace \times \ospace$ in these assumptions.
\item[\rm (ii)] The step-size sequence $\{\beta_n\}$ is such that $\beta_n \in [0,1]$ for $n \geq 0$, $\sum_{n = 0}^\infty \beta_n = \infty$, and $\sum_{n = 0}^\infty \beta_n^2 < \infty$.
\end{itemize}    
\end{myassumption}
}

As can be seen, the main distinction between the update rule \eqref{eq: c2 Inter-option algorithm} of the inter-option algorithm and RVI Q-learning \eqref{eq: Extended RVI Q-learning} lies in the scaling of the updates with estimated option durations. This scaling approach will be crucial to ensure the algorithm's convergence in our analysis, as it was for Schweitzer’s classical RVI algorithm. In addition, computationally, scaling helps stabilize the updates across state-option pairs by mitigating variation due to differing option durations.

Similar to RVI Q-learning, the general update rule \eqref{eq: c2 Inter-option algorithm} may assume different forms with specific choices of the function $f$. As an example, here is the inter-option extension of the Differential Q-learning algorithm discussed previously in \cref{ex: diff Q}:

\begin{example} [Inter-Option Differential Q-learning \citep{wan2021average}] \label{ex: inter-option diff Q} \rm \hfill \\
In addition to $Q_n$ and $L_n$, this algorithm also maintains a reward rate estimate $\bar R_n$, similar to Differential Q-learning. At iteration $n$, for each $(s,o) \in Y_n$, it computes the TD error:
\begin{align*} 
    \delta_n(s, o) & \= R_{n+1}^{so} - L_n(s, o) \bar R_n + \max_{o' \in \ospace} Q_n (S_{n+1}^{so}, o') - Q_n(s, o).
\end{align*}
The TD error terms are then scaled by the estimated option durations when updating $Q_n$ and $\bar R_n$:
\begin{align*}
    Q_{n+1}(s, o) & \= Q_{n}(s, o) + \alpha_{\nu_n(s, o)} (\delta_{n}(s, o) / L_n(s, o)) \ind\{(s, o ) \in Y_n\}, \quad \forall\, s \in \sspace, \, o \in \ospace, \\
    \bar R_{n+1} & \= \bar R_{n} + \eta \sum_{(s, o ) \in Y_n} \alpha_{\nu_n(s, o)} \delta_{n}(s, o)/L_n(s, o),
\end{align*}
where $\eta > 0$ is an algorithmic parameter, while the update rule for $L_n$ remains the same as \eqref{eq: c2 Inter-option Differential TD-learning L}. Following the same reasoning for Differential Q-learning in \cref{ex: diff Q}, this inter-option algorithm can be seen as an instance of the general  inter-option algorithm, with the function $f$ defined as 
$f(q) = \eta \sum_{s \in \sspace, o \in \ospace} q(s,o) - \eta \sum_{s \in \sspace, o \in \ospace} Q_0(s,o) + \bar R_0$. 

The convergence of this algorithm was analyzed by \citet{wan2021average} under a unichain condition on the associated SMDP for ensuring that the optimality equation \eqref{eq: inter-option option-value optimality equation} has a unique solution of $q$ (up to an additive constant).
However, their proof is inadequate; see \cref{remark: sa result compare}(a) for more details.\qed
\end{example}

As our main results regarding the inter-option algorithm, we characterize its solution set and provide its convergence properties in the two ensuring theorems. These results mirror Theorems~\ref{thm: MDP characterize Q} and~\ref{thm: Extended RVI Q-learning} for RVI Q-learning. 

Let $\hat \calQ$ denote the set of solutions $q$ to the option-value optimality equation \eqref{eq: inter-option option-value optimality equation}. Consider the subset of $\hat \calQ$ constrained by $f(q) = \ooptimalr$:
\begin{equation}\label{eq: smdp sol set}
\ooptimalitydiffsolutionq \= \{q \in \hat \calQ :  f(q) = \ooptimalr\},
\end{equation}
which is the desired solution set for the inter-option algorithm.

\begin{mytheorem}\label{thm: SMDP characterize Q}
Given an MDP and a set of options satisfying \cref{assu: option assumption}, if the associated SMDP is weakly communicating and $f$ satisfies \cref{assu: f}, then the set $\ooptimalitydiffsolutionq$ is nonempty, compact, connected, and possibly nonconvex.
\end{mytheorem}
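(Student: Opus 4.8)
Because an MDP is the special case of an SMDP in which all holding times equal one, once this statement is established it also yields \cref{thm: MDP characterize Q}; I argue in the SMDP generality. I will take from \citepalias{ScF78} (as recalled in \cref{sec: smdp}) the facts that $\hat\calQ$ is nonempty, closed, connected, and invariant under adding constant vectors, i.e.\ $q\in\hat\calQ \Rightarrow q+c\onevec\in\hat\calQ$ for all $c\in\R$. Given these, nonemptiness, closedness, and connectedness of $\ooptimalitydiffsolutionq$ are short. Define $\Phi\colon \hat\calQ\to\R^{|\sspace\times\ospace|}$ by $\Phi(q)\=q+\tfrac{\ooptimalr-f(q)}{u}\onevec$, with $u>0$ from \cref{assu: f}(ii). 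By constant-invariance and \cref{assu: f}(ii), $\Phi(q)\in\hat\calQ$ and $f(\Phi(q))=f(q)+\tfrac{\ooptimalr-f(q)}{u}\,u=\ooptimalr$, so $\Phi$ maps $\hat\calQ$ into $\ooptimalitydiffsolutionq$; moreover $\Phi$ restricts to the identity on $\ooptimalitydiffsolutionq$, hence $\Phi(\hat\calQ)=\ooptimalitydiffsolutionq$. Since $f$ is continuous (\cref{assu: f}(i)), $\Phi$ is continuous, so $\ooptimalitydiffsolutionq$ is the continuous image of the nonempty connected set $\hat\calQ$ and is therefore nonempty and connected; closedness follows from closedness of $\hat\calQ$ and continuity of $f$. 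That $\ooptimalitydiffsolutionq$ need not be convex I would substantiate with an explicit small example.

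The substantive part is \emph{compactness}, i.e.\ boundedness. The key lemma I would prove is a \emph{uniform span bound}: there is a constant $K$ depending only on the SMDP with $\operatorname{sp}(q)\=\max_{(s,o)}q(s,o)-\min_{(s,o)}q(s,o)\le K$ for every $q\in\hat\calQ$. Granting the lemma, boundedness is immediate: for $q\in\ooptimalitydiffsolutionq$ pick a reference pair $(\bar s,\bar o)$ and write $q=q'+q(\bar s,\bar o)\onevec$ with $q'(\bar s,\bar o)=0$; then $q'\in\hat\calQ$, $\operatorname{sp}(q')=\operatorname{sp}(q)\le K$, so $\|q'\|_\infty\le K$, and from $\ooptimalr=f(q)=f(q')+q(\bar s,\bar o)\,u$ together with $|f(q')|\le|f(\zerovec)|+L\|q'\|$ (\cref{assu: f}(i)) one gets $|q(\bar s,\bar o)|\le(|\ooptimalr|+|f(\zerovec)|+LK)/u$ up to a dimension constant, hence a uniform bound on $\|q\|_\infty$. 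With closedness this gives compactness.

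To prove the span lemma is where the weakly communicating structure enters. For $q\in\hat\calQ$ set $v(s)\=\max_{o}q(s,o)$; from \eqref{eq: inter-option option-value optimality equation}, $v$ solves the state-value optimality equation \eqref{eq: SMDP state-value optimality equation}, and $q(s,o)\in[\min_{s'}v(s')-B,\ \max_{s'}v(s')+B]$ with $B\=\max_{s,o}(|\hat r_{so}|+|\ooptimalr|\hat l_{so})<\infty$ (finite under \cref{assu: smdp}/\cref{assu: option assumption}), so it suffices to bound $\operatorname{sp}(v)$ uniformly over solutions of \eqref{eq: SMDP state-value optimality equation}. Recall $\sspace=\S^o\cup(\sspace\setminus\S^o)$, $\S^o$ the unique closed communicating class, $\sspace\setminus\S^o$ transient under every policy. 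Step (a): for every $o$, \eqref{eq: SMDP state-value optimality equation} gives $\sum_{s'}\otrans_{ss'}^{o}v(s')\le v(s)+B$, so under \emph{any} policy $\E[v(S_{t+1})\mid S_t=s]\le v(s)+B$, hence $\E[v(S_t)\mid S_0=s]\le v(s)+tB$; since $\S^o$ is closed and communicating, there exist $T_0,p_0>0$ (SMDP-dependent only) such that from any $s^-\in\S^o$ some policy reaches any target $s^+\in\S^o$ within $T_0$ transitions with probability $\ge p_0$ while staying in $\S^o$ where $v\ge\min_{\S^o}v$; taking $s^-,s^+$ the minimizer and maximizer of $v$ on $\S^o$ yields $\operatorname{sp}(v|_{\S^o})\le T_0B/p_0$. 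Step (b): for $s\notin\S^o$, unroll \eqref{eq: SMDP state-value optimality equation} along a greedy deterministic stationary policy $\mu$ up to the hitting time $\tau$ of $\S^o$; states outside $\S^o$ being transient under $\mu$ and $\sspace$ finite give $\E_s^\mu[\tau]\le\bar T$ uniformly over $s$ and over the finitely many such $\mu$, so $v(s)=\E_s^\mu\!\big[\sum_{k=0}^{\tau-1}(\hat r_{S_k\mu(S_k)}-\ooptimalr\hat l_{S_k\mu(S_k)})+v(S_\tau)\big]\in[\min_{\S^o}v-\bar TB,\ \max_{\S^o}v+\bar TB]$. Combining, $\operatorname{sp}(v)\le T_0B/p_0+2\bar TB$, uniformly, and then $\operatorname{sp}(q)\le\operatorname{sp}(v)+2B\=K$.

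The anticipated obstacle is exactly this span lemma. Everything else is routine point-set topology plus \cref{assu: f}. The delicacy is that the bound must be \emph{uniform over all of $\hat\calQ$}, not merely for one distinguished solution such as the optimal bias; the optimality equation is not a contraction, the solution set is genuinely unbounded, and one must isolate that the unboundedness lives only along $\onevec$. The transient part $\sspace\setminus\S^o$ is the trickiest piece, since there the agent can be forced in and out of states with no a priori control; the uniform hitting-time and reachability estimates for $\S^o$ (valid precisely because $\sspace\setminus\S^o$ is transient under \emph{all} policies and $\S^o$ is communicating) are what make the argument close, and some care is needed to verify that $B,T_0,p_0,\bar T$ are finite and policy-independent, which rests on finiteness of $\sspace,\ospace$ and the finite-expectation assumptions on option rewards and durations.
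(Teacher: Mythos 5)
Your proof is correct, and for the parts concerning nonemptiness, closedness, and connectedness it coincides with the paper's argument (the paper proves the SMDP-level statement, \cref{thm: smdp properties}, via exactly your retraction $\Phi(q)=q+\tfrac{\ooptimalr-f(q)}{u}\onevec$ applied to the state-value form, and exhibits nonconvexity by example). Where you genuinely diverge is compactness. The paper argues by contradiction: it takes an unbounded sequence $x_n$ in the constrained set, normalizes $y_n=x_n/\|x_n\|\to y_\infty$ with $\|y_\infty\|=1$, passes to the limit in the optimality equation to conclude $y_\infty$ solves the zero-reward equation, and invokes \cref{lemma: 0 reward MDP has 1 d solution} (zero-reward weakly communicating SMDPs have only constant solutions, proved by a short $\argmin$/$\argmax$ closed-class argument) to force $y_\infty=c\onevec$ and then $c=0$ via $f$, a contradiction. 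You instead prove a quantitative uniform span bound $\operatorname{sp}(q)\le K$ over all of $\hat\calQ$ via a supermartingale/optional-stopping estimate on $\S^o$ and a hitting-time unrolling on the transient part, and then use $f(q)=\ooptimalr$ to pin the remaining additive degree of freedom. Both routes exploit the same structural fact (unique closed communicating class, everything else transient under all policies); your span lemma is in effect a quantitative strengthening of \cref{lemma: 0 reward MDP has 1 d solution}, showing $\hat\calQ$ lies in an explicit tube around $\R\onevec$ and yielding an explicit diameter bound for $\ooptimalitydiffsolutionq$ in terms of $B$, $T_0$, $p_0$, $\bar T$. The price is more probabilistic machinery (optional stopping, uniform reachability and hitting-time constants, a Wald-type justification of the unrolling), whereas the paper's soft limiting argument is shorter and is also the one that generalizes cleanly to the abstract nonexpansive-map setting of \cref{sec: c1 general RVI Q}, where no underlying chain structure is available. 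Your sketch of the bound on $\S^o$ should be tightened into the optional-stopping form ($\E[v(S_{T_0\wedge\sigma})]\le v(s^-)+T_0B$ against $\E[v(S_{T_0\wedge\sigma})]\ge p_0 v(s^+)+(1-p_0)v(s^-)$), but as stated the constants and the conclusion are right.
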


The preceding theorem characterizes $\ooptimalitydiffsolutionq$; its proof will be given in \cref{sec: solution set}. Furthermore, in \cref{sec-7.2}, we will apply the theory of \citepalias{ScF78} to show that $\ooptimalitydiffsolutionq$ has precisely one less degree of freedom than the set $\hat \calQ$.

The next theorem establishes the convergence of the inter-option algorithm. For a given vector $q$ of state-option values, let us call a hierarchical policy $\mu$ \emph{greedy w.r.t.\ $q$}, if $\mu$ corresponds to a deterministic stationary policy $\mu: \sspace \to \ospace$ in the associated SMDP and for each state $s \in \sspace$, $\mu(s) \in \argmax_{o \in \ospace} q(s, o)$.

\begin{mytheorem}[convergence theorem] \label{thm: inter-option Differential Q-learning}
For a given MDP with a set of options satisfying \cref{assu: option assumption}, 
consider its associated SMDP, and let $\{Q_n\}$ be generated by the algorithm (\ref{eq: c2 Inter-option algorithm}-\ref{eq: c2 Inter-option Differential TD-learning L}) under Assumption~\ref{assu: inter-option extended rvi q learning}. 
If the associated SMDP is weakly communicating, then the following hold almost surely:
\begin{itemize}[leftmargin=0.7cm,labelwidth=!]
\item[\rm (i)] As $n \to \infty$,  $Q_n$ converges to a sample path-dependent compact connected subset of $\ooptimalitydiffsolutionq$, and $f(Q_n)$ converges to the optimal reward rate $\ooptimalr$. 
\item[\rm (ii)] For all sufficiently large $n$, the greedy hierarchical policies w.r.t. $Q_n$ are all optimal.
\end{itemize}
\end{mytheorem}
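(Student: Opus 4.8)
The plan is to reduce part~(ii) to part~(i) exactly as in the proof of \cref{thm: Extended RVI Q-learning}(ii): by \cref{prop: smdp-mdp} and the SMDP optimality theory recalled in \cref{sec: smdp}, any hierarchical policy greedy with respect to a solution of \eqref{eq: inter-option option-value optimality equation} is optimal, so one covers the compact set $\ooptimalitydiffsolutionq$ (compact by \cref{thm: SMDP characterize Q}) by an open set $G$ on which every greedy policy is greedy with respect to some element of $\ooptimalitydiffsolutionq$, and part~(i) then forces $Q_n\in G$ for all large $n$. Hence the work is entirely in part~(i), for which the strategy is the ODE method for asynchronous stochastic approximation, treating (\ref{eq: c2 Inter-option algorithm}--\ref{eq: c2 Inter-option Differential TD-learning L}) as an instance of the abstract stochastic RVI recursion of \cref{sec: c1 general RVI Q}.

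First I would dispose of the duration iterates \eqref{eq: c2 Inter-option Differential TD-learning L}. Since $\beta_n\in[0,1]$, each $L_{n+1}(s,o)$ is a convex combination of $L_n(s,o)$ and an observed duration $L^{so}_{n+1}\ge 1$, so $L_n(s,o)\ge\min\{L_0(s,o),1\}>0$ for all $n$; and since each component is updated infinitely often (\cref{assu: update}(i)) with step sizes satisfying \cref{assu: inter-option extended rvi q learning}(ii) and increments of finite variance (\cref{assu: option assumption}), the Robbins--Monro theorem gives $L_n(s,o)\to\hat l_{so}\ge 1$ a.s. Consequently $1/L_n(s,o)$ is eventually bounded and tends to $1/\hat l_{so}$, so \eqref{eq: c2 Inter-option algorithm} is an asynchronous SA recursion whose only nonstandard feature, the coefficient $1/L_n(s,o)$, converges to the known constant $1/\hat l_{so}$. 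Under \cref{assu: stepsize} and \cref{assu: update} the partial-asynchrony machinery then makes $\{Q_n\}$ track the mean ODE
\begin{equation*}
\dot q(s,o)\;=\;\frac{\lambda_{so}}{\hat l_{so}}\Big(\hat r_{so}-\hat l_{so}\,f(q)+\sum_{s'\in\sspace}\otrans_{ss'}^{o}\max_{o'\in\ospace}q(s',o')-q(s,o)\Big),\qquad \forall\,(s,o)\in\sspace\times\ospace,
\end{equation*}
where the relative update frequencies $\lambda_{so}>0$ (which exist a.s.\ by \cref{assu: update}(ii)) are immaterial below.

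Next, stability. The noise in \eqref{eq: c2 Inter-option algorithm} is a martingale difference formed from $R^{so}_{n+1}/L_n(s,o)$ and $\max_{o'}Q_n(S^{so}_{n+1},o')/L_n(s,o)$; by \cref{assu: option assumption} (which gives geometric tails for durations, hence finite variances of $R^{so}_{n+1}$ and $L^{so}_{n+1}$) and the bound on $1/L_n$, its conditional second moment is $O(1+\|Q_n\|^2)$, and it is of the state-dependent type not covered by \citet{BoM00}. I would therefore invoke the extension of the Borkar--Meyn stability criterion to this noise class established in \citet{yu2023note}; its hypothesis is that the scaled-at-infinity ODE $\dot q=h_\infty(q)$, obtained by deleting the constants $\hat r_{so}$ and replacing $f$ by its positively homogeneous part $q\mapsto f(q)-f(\zerovec)$ (\cref{assu: f}(iii)), has the origin as its unique globally asymptotically stable equilibrium. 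The point is that $h_\infty$ corresponds to a \emph{zero-reward} weakly communicating SMDP, for which the degrees-of-freedom count is $n^*=1$ (traversing between recurrent subsets now costs nothing, so no policy is rendered non-optimal); its optimality equation thus has solutions unique up to an additive constant, and together with $u>0$ in \cref{assu: f}(ii) this pins the unique zero of $h_\infty$ to the origin, whose global asymptotic stability then follows by the Lyapunov argument of \citet{abounadi2001learning}, the scaling by $\hat l_{so}\ge 1$ not affecting it. This yields a.s.\ boundedness of $\{(Q_n,L_n)\}$.

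Finally, identifying the limit set. Given boundedness, asynchronous SA theory gives that $\{Q_n\}$ converges a.s.\ to an internally chain transitive (ICT) invariant set of $\dot q=h(q)$, with $h$ the field above with all $\lambda_{so}$ set to $1$. The equilibria of $h$ are exactly the $q$ solving \eqref{eq: inter-option option-value optimality equation} with $\bar r=f(q)$; since every solution of \eqref{eq: inter-option option-value optimality equation} has $\bar r$-component $\ooptimalr$, the equilibrium set of $h$ equals $\ooptimalitydiffsolutionq$ from \eqref{eq: smdp sol set}. It remains to show that every bounded trajectory of $\dot q=h(q)$ has its $\omega$-limit set inside this equilibrium set; granted this, each ICT invariant set lies in $\ooptimalitydiffsolutionq$ and, being ICT, is nonempty, compact, and connected, which is part~(i) together with $f(Q_n)\to\ooptimalr$ (as $\ooptimalitydiffsolutionq\subset\{f=\ooptimalr\}$). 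To prove the $\omega$-limit claim I would adapt \citet{abounadi2001learning}: decompose the flow into the scalar ``$f$-direction'', along which \cref{assu: f}(ii) and a monotonicity/contraction estimate drive $t\mapsto f(q(t))$ to $\ooptimalr$, and the complementary directions, whose reduced dynamics is governed by a nonexpansive map with fixed-point set $\hat\calQ$; the \citetalias{ScF78} parametrization of $\hat\calQ$ is what makes the second step go through without a unique equilibrium, just as in \cref{sec: convergence proofs} for RVI Q-learning. I expect this last step---collapsing ICT invariant sets onto $\ooptimalitydiffsolutionq$ despite a continuum of equilibria---to be the main obstacle, since it is exactly where the unichain/unique-solution analyses of \citet{abounadi2001learning,wan2021learning} break down and the solution-structure theory of \citetalias{ScF78} must take over; a secondary non-routine point is the stability step's reliance on the generalized-noise Borkar--Meyn result of \citet{yu2023note}, needed because $R^{so}_{n+1}/L_n(s,o)$ and $\max_{o'}Q_n(S^{so}_{n+1},o')/L_n(s,o)$ violate the noise assumptions of \citet{BoM00}.
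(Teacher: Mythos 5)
Your proposal follows essentially the same route as the paper: part~(ii) is reduced to part~(i) via compactness of $\ooptimalitydiffsolutionq$ and \cref{prop: smdp-mdp}; the duration iterates are handled by the lower bound $L_n(s,o)\ge\min\{1,L_0(s,o)\}$ and Robbins--Monro convergence $L_n(s,o)\to\hat l_{so}$; the scaled equation \eqref{eq: scaled inter-option option-value optimality equation} is cast as the abstract RVI recursion of \cref{sec: c1 general RVI Q} with a vanishing bias term requiring the extended stability result of \citet{yu2023note}; the scaling limit $h_\infty$ is identified with a zero-reward weakly communicating SMDP whose solutions are $c\onevec$ (\cref{lemma: 0 reward MDP has 1 d solution}), pinning its unique equilibrium to the origin; and the limit set is identified via the decomposition $x(t)=y(t)+z(t)\onevec$ with the nonexpansive auxiliary dynamics. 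This is exactly the paper's argument.

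One step is stated too loosely: you claim that once every bounded trajectory of $\dot q=h(q)$ has its $\omega$-limit set inside the equilibrium set, ``each ICT invariant set lies in $\ooptimalitydiffsolutionq$.'' That implication is false in general --- a compact invariant set can strictly contain the equilibrium set even when all trajectories converge to it (consider $\dot x=x(1-x)$ and the invariant interval $[0,1]$). What the paper actually proves (\cref{lem-cvg-5} and \cref{lem: compact inv set}) is the \emph{Lyapunov stability} of $\ooptimalitydiffsolutionq$, via the bound $\|q_*-x(t)\|_\infty\le(1+L)\|q_*-x(0)\|_\infty$ obtained from the nonexpansive auxiliary flow and the explicit formula for $z(t)$; global asymptotic stability of the set then forces every compact invariant set into it. You have the ingredients for this (the ``monotonicity/contraction estimate'' you mention is precisely what yields the Lyapunov bound), but the deduction must go through stability of the set, not merely convergence of individual trajectories.
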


We will prove part (i) of this theorem in \cref{sec: convergence proofs}, employing ODE-based methods. Part (ii) follows from part (i) and the compactness of the set $\ooptimalitydiffsolutionq$ (\cref{thm: SMDP characterize Q}), using the same arguments as in the proof for \cref{thm: Extended RVI Q-learning}(ii). In particular, with those same proof arguments, we establish the optimality of greedy policies for the associated SMDP when $n$ is sufficiently large. The optimality of these policies as hierarchical policies in the MDP then follows from \cref{prop: smdp-mdp}.

\subsection{Intra-Option Algorithm} \label{sec: intra-option algorithms}
The intra-option Q-learning algorithm aims to solve the hierarchical decision problem with options by finding a solution to the alternative optimality equation \eqref{eq: intra-option option-value optimality equation} for option values. Unlike the inter-option case, this algorithm benefits from knowing the option parameters $\pi(a \mid s, o)$ and $\beta(s,o)$ and leverages options' internal memoryless and stationarity properties. These enable the algorithm to utilize single-step transition data to update option values, so that there is no need to execute options until completion or estimate their durations during each iteration. This characteristic significantly enhances the intra-option algorithm's data efficiency compared to its inter-option counterpart.

In particular, the intra-option algorithm iteratively updates option-value estimates by using ``action-level'' single-step transition data. 
To generate these data, the algorithm applies some (stationary) policies $b_0, b_1, \ldots$ in the MDP, where the choice of each policy may depend on the algorithmic history.  
Specifically, with some given small $\epsilon \in (0,1)$ as the algorithmic parameter, at iteration $n \geq 0$:
\begin{itemize}[leftmargin=0.5cm,labelwidth=!]
\item[1.] The algorithm selects a nonempty subset $X_n$ of states and a policy $b_n$. The choices are made such that for all $s \in X_n$, $\min \{ b_n(a \mid s): \, b_n(a \mid s) > 0, \, a \in \aspace \} \geq \epsilon$ and the subset $\ospace_n(s)$ of options is nonempty, where
$$ \ospace_n(s) \= \{ o \in \ospace: \, \pi( \cdot \mid s, o) \ \text{is absolutely continuous w.r.t.} \ b_n(\cdot \mid s) \}.$$
\item[2.] For each $s \in X_n$, the algorithm applies the policy $b_n$ to sample an action $A_n^s \sim b_n(\cdot \mid s)$ and observes the resulting state $S_{n+1}^s$ and reward $R^s_{n+1}$ from the MDP (i.e., $(S_{n+1}^s, R^s_{n+1}) \sim \trans(\cdot, \cdot \mid s, A_n^s)$). 
\end{itemize}
Let $Y_n \= \{(s,o) : s \in X_n, o \in \ospace_n(s)\}$. 
Using the generated data, the algorithm then updates the option-value estimates $Q_n$ according to the following rules:
\begin{align}
& \text{for } (s, o) \not \in Y_n: \ \ \  Q_{n+1}(s, o) \= Q_{n}(s, o); 
\nonumber\\
& \text{for } (s, o) \in Y_n: \ \ \ \text{with} \  \rho_n(s, o)  \= \pi(A_{n}^{s} \mid s, o) / b_n(A_{n}^{s} \mid s), 
\nonumber \\
  &  \ \  Q_{n+1}(s, o) \= Q_{n}(s, o) + \alpha_{\nu_n(s, o)} \rho_n(s, o) 
    \left (R_{n+1}^{s} - f (Q_n)
    + U[Q_n](S_{n+1}^{s}, o) - Q_n(s, o)  \right),  
    \label{eq: transformed intra-option update}
\end{align}
where $U[Q_n]$ is as defined in \eqref{eq: intra-option option-value optimality equation2}:
\begin{align*}
    U[Q_n](S_{n+1}^{s}, o) = (1 - \beta(S_{n+1}^{s}, o)) Q_n(S_{n+1}^{s}, o) + \beta(S_{n+1}^{s}, o) \max_{o' \in \ospace} Q_n(S_{n+1}^{s}, o').
\end{align*}
The initial values $Q_0$ can be arbitrarily chosen.

In the above, $\rho_n(s,o)$ is an \textit{importance sampling ratio} term that compensates for the difference between the behavior policy $b_n$ and the option $o$'s policy $\pi(\cdot \mid \cdot, o)$. The choices of $\{b_n\}$ ensure that these ratios are all bounded by $1/\epsilon$; this boundedness property will be useful in our subsequent convergence analysis.
The cumulative counts $\nu_n(s,o) \= \sum_{k=0}^{n} \ind \{ (s,o) \in Y_k \}$. The function $f$, the step sizes $\alpha_n$, and the asynchronous update schedules are required to satisfy the same assumptions as in the inter-option Q-learning algorithm.

\begin{myremark} \rm 
An intra-option extension of the Differential Q-learning algorithm (\cref{ex: diff Q}) can be derived similarly to the inter-option case presented in \cref{ex: inter-option diff Q}, with the function $f$ defined as in the latter example. The previous convergence analysis of this algorithm by \citet{wan2021average} faces the same issue as noted for the inter-option Differential Q-learning algorithm; see \cref{remark: sa result compare}(a) for details.\qed
\end{myremark}

Due to the equivalence between the optimality equations \eqref{eq: inter-option option-value optimality equation} and \eqref{eq: intra-option option-value optimality equation} (\cref{prop: c2 inter = intra equations}), the intra-option algorithm shares the same solution set $\ooptimalitydiffsolutionq$ as the inter-option algorithm. The next theorem shows that the algorithm also enjoys the same convergence guarantees.

\begin{mytheorem}[convergence theorem] \label{thm: intra-option Differential Q-learning}
Given an MDP and a set of options satisfying \cref{assu: option assumption}, consider $\{Q_n\}$ generated by the intra-option algorithm \eqref{eq: transformed intra-option update}. If the corresponding SMDP is weakly communicating and \cref{assu: inter-option extended rvi q learning}(i) holds, then the conclusions of \cref{thm: inter-option Differential Q-learning}(i, ii) hold almost surely.
\end{mytheorem}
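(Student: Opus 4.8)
The strategy is to recognize the intra-option update \eqref{eq: transformed intra-option update} as one more instance of the abstract stochastic RVI algorithm of \cref{sec: c1 general RVI Q}, so that the proof reduces --- exactly as for \cref{thm: Extended RVI Q-learning} and \cref{thm: inter-option Differential Q-learning} --- to verifying that framework's hypotheses for the map driving the updates. Define the operator $\tilde T$ on $\R^{|\sspace\times\ospace|}$ by
$$(\tilde T q)(s,o) \= \sum_{a\in\aspace}\pi(a\mid s,o)\Big(r_{sa} + \sum_{s'\in\sspace}p_{ss'}^a\, U[q](s',o)\Big),$$
with $U[\cdot]$ as in \eqref{eq: intra-option option-value optimality equation2}. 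Since $U$ is, coordinatewise, a convex combination of the identity and the ``$\max_{o'}$'' map, it is monotone, nonexpansive in $\norm{\cdot}_\infty$, positively homogeneous, and satisfies $U[q+c\onevec]=U[q]+c\onevec$; composing with the stochastic matrices $(p_{ss'}^a)$ and $(\pi(a\mid s,o))$ preserves all of these, so $\tilde T$ is monotone, $\norm{\cdot}_\infty$-nonexpansive, shift-equivariant ($\tilde T(q+c\onevec)=\tilde Tq+c\onevec$), and has a positively homogeneous ``part at infinity'' --- precisely the structure the abstract framework requires --- and \eqref{eq: intra-option option-value optimality equation} reads $q=\tilde Tq-\bar r\onevec$. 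The mean-field ODE associated with \eqref{eq: transformed intra-option update} is therefore $\dot q=\tilde Tq-f(q)\onevec-q$; its equilibria are the $q$ for which $(f(q),q)$ solves \eqref{eq: intra-option option-value optimality equation}, and since the $\bar r$-component of any such solution equals $\ooptimalr$ and, by \cref{prop: c2 inter = intra equations}, the $q$-solution set of \eqref{eq: intra-option option-value optimality equation} coincides with $\hat\calQ$, the equilibrium set is exactly $\ooptimalitydiffsolutionq$ of \eqref{eq: smdp sol set}, whose compactness, connectedness, and possible nonconvexity are furnished by \cref{thm: SMDP characterize Q}.

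Next I would put \eqref{eq: transformed intra-option update} into the standard asynchronous stochastic-approximation form $Q_{n+1}=Q_n+\alpha_{\nu_n}(h(Q_n)+M_{n+1})$. Conditioning on $\calF_n$ and on $(s,o)\in Y_n$, and using $A_n^{s}\sim b_n(\cdot\mid s)$ together with the absolute continuity of $\pi(\cdot\mid s,o)$ w.r.t.\ $b_n(\cdot\mid s)$ guaranteed by $o\in\ospace_n(s)$, one gets $\E[\rho_n(s,o)\mid\calF_n]=1$ and that the conditional mean of $\rho_n(s,o)\bigl(R_{n+1}^{s}-f(Q_n)+U[Q_n](S_{n+1}^{s},o)-Q_n(s,o)\bigr)$ equals $(\tilde T Q_n)(s,o)-f(Q_n)-Q_n(s,o)$; hence the drift is $h(Q_n)=\tilde TQ_n-f(Q_n)\onevec-Q_n$, the same form as for RVI Q-learning with $\tilde T$ in place of the Bellman optimality operator. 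The martingale noise $M_{n+1}$ has conditional second moment bounded by $K(1+\norm{Q_n}^2)$, using $\rho_n(s,o)\le 1/\epsilon$, the finite variance of the one-stage rewards, and $\norm{U[q]}_\infty\le\norm{q}_\infty$ together with the Lipschitz continuity of $f$; in particular --- and in contrast to the inter-option algorithm --- the noise stays within the classical regime of \citet{BoM00}, so the generalized-noise extension of \citet{yu2023note} is not needed here. Step sizes and asynchrony are controlled by \cref{assu: inter-option extended rvi q learning}(i), i.e.\ \cref{assu: f}, \cref{assu: stepsize}, and \cref{assu: update} with $\ispace=\sspace\times\ospace$, exactly as for RVI Q-learning; \cref{assu: inter-option extended rvi q learning}(ii) plays no role, since the intra-option algorithm does not estimate option durations.

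With these verifications, the abstract convergence theorem of \cref{sec: c1 general RVI Q} --- the same result underlying \cref{thm: Extended RVI Q-learning}(i) and \cref{thm: inter-option Differential Q-learning}(i), proved in \cref{sec: convergence proofs} --- applies and gives, almost surely, $f(Q_n)\to\ooptimalr$ and convergence of $Q_n$ to a sample-path-dependent compact connected subset of $\ooptimalitydiffsolutionq$; this is part~(i). Part~(ii) then follows verbatim from the argument used for \cref{thm: Extended RVI Q-learning}(ii): by compactness of $\ooptimalitydiffsolutionq$ (\cref{thm: SMDP characterize Q}), $\{Q_n\}$ eventually enters and stays in an open set $G\supset\ooptimalitydiffsolutionq$ on which every hierarchical policy greedy w.r.t.\ a point of $G$ is greedy w.r.t.\ some $q\in\ooptimalitydiffsolutionq$; as such a $q$ solves \eqref{eq: intra-option option-value optimality equation}, hence --- by \cref{prop: c2 inter = intra equations} --- also \eqref{eq: inter-option option-value optimality equation}, any policy greedy w.r.t.\ $q$ attains the maximum on the right-hand side of \eqref{eq: inter-option option-value optimality equation}, and is therefore optimal for the associated SMDP and, by \cref{prop: smdp-mdp}, optimal among hierarchical policies in the MDP.

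I expect the main obstacle to be the first step: confirming that $\tilde T$ satisfies the stability-related structural hypotheses of the abstract framework --- in effect, that the scaled ODE $\dot q=\tilde T_\infty q-f_\infty(q)\onevec-q$ arising from the zero-reward intra-option problem has the origin as its unique globally asymptotically stable equilibrium, so that the \citet{BoM00}-type boundedness argument goes through. This is the analogue, for $\tilde T$, of the step carried out in \citet{abounadi2001learning} for the ordinary Bellman operator, and it is where the weakly-communicating hypothesis on the associated SMDP and the solution-structure results of \citepalias{ScF78} enter (via \cref{prop: c2 inter = intra equations}, the zero-reward intra-option solutions coincide with those of the zero-reward weakly communicating SMDP, which are unique up to an additive constant). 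Everything else is bookkeeping: because the intra-option updates rely only on bounded single-step data (importance ratios at most $1/\epsilon$) rather than on option executions of random length, no new stochastic-approximation machinery beyond that already established for \cref{thm: Extended RVI Q-learning} is required.
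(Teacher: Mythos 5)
Your proposal is correct and follows essentially the same route as the paper: cast the intra-option update as an instance of the abstract stochastic RVI algorithm with $r(i)=\sum_a\pi(a\mid s,o)r_{sa}$ and $g(q)(i)=\sum_a\pi(a\mid s,o)\sum_{s'}p_{ss'}^a U[q](s',o)$, invoke \cref{prop: c2 inter = intra equations} and \cref{lemma: 0 reward MDP has 1 d solution} to verify the solution-set assumption, use the $1/\epsilon$ bound on the importance ratios to verify the martingale-noise condition with $\epsilon_{n+1}=\zerovec$, and conclude via \cref{thm: General RVI Q}, with part (ii) following by the compactness argument of \cref{thm: Extended RVI Q-learning}(ii) plus \cref{prop: smdp-mdp}. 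The only cosmetic difference is that the paper splits your $\tilde T$ into $r+g$ and still routes everything through the extended SA result of \citet{yu2023note} (with the biased-noise term set to zero) rather than reverting to the original Borkar--Meyn statement, but this does not change the substance.
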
 

The proof of part (i) is provided in \cref{sec: convergence proofs}. Part (ii) then follows from part (i) by the same proof for \cref{thm: inter-option Differential Q-learning}(ii).

\section{Properties of Solution Sets $\ExtRVIQsolutionq$ and $\ooptimalitydiffsolutionq$ (Proofs of Theorems\ \ref{thm: MDP characterize Q}, \ref{thm: SMDP characterize Q})}\label{sec: solution set}

Given a weakly communicating SMDP, recall that $\calQ$ denotes the set of solutions of $q$ to the optimality equation \eqref{eq: SMDP action-value optimality equation}. Since an MDP is a special case of SMDP, the solution sets $\ExtRVIQsolutionq$ and $\ooptimalitydiffsolutionq$ addressed in Theorems\ \ref{thm: MDP characterize Q} and \ref{thm: SMDP characterize Q} for RVI Q-learning and its options extensions are special cases of the following solution set for a weakly communicating SMDP:
\begin{align}\label{eq: Qs}
\smdpoptimalitygeneralsolutionq \= \{q \in \R^{\cardS \times \cardA}:\, q \in \smdpoptimalitysolutionq, \ f(q) = \optimalr \},
\end{align}
where $f: \sspace \times \aspace \to \bbR$ satisfies \cref{assu: f}, and $\optimalr$ is the optimal reward rate of the SMDP. Let us prove the following result for $\smdpoptimalitygeneralsolutionq$, which entails Theorems\ \ref{thm: MDP characterize Q} and \ref{thm: SMDP characterize Q}.

\begin{mytheorem}\label{thm: smdp properties}
    In a weakly communicating SMDP, with f satisfying Assumption\ \ref{assu: f}, the set $\smdpoptimalitygeneralsolutionq$ is (i) nonempty, compact, and connected, and (ii) possibly nonconvex.
\end{mytheorem}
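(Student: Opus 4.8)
The plan is to derive all four properties from the known structure of $\calQ$ — nonempty, closed, connected by \citet{ScF78} — together with the three parts of \cref{assu: f}, without invoking the detailed $n^*$-parametrization. The key device is the shift map $\Phi(q) \= q + \tfrac{\optimalr - f(q)}{u}\,\onevec$. Because adding a constant to a solution of \eqref{eq: SMDP action-value optimality equation} yields another solution, $\Phi$ maps $\calQ$ into $\calQ$; by \cref{assu: f}(ii), $f(\Phi(q)) = f(q) + \tfrac{\optimalr - f(q)}{u}\,u = \optimalr$, so $\Phi(\calQ)\subseteq\smdpoptimalitygeneralsolutionq$; and on $\smdpoptimalitygeneralsolutionq$ the shift is $0$, so $\Phi$ restricts there to the identity. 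Hence $\Phi$ is a continuous retraction of $\calQ$ onto $\smdpoptimalitygeneralsolutionq$. Nonemptiness is then immediate ($\Phi$ sends any $q_0\in\calQ$ into $\smdpoptimalitygeneralsolutionq$); $\smdpoptimalitygeneralsolutionq = \calQ\cap f^{-1}(\{\optimalr\})$ is closed as an intersection of closed sets; and $\smdpoptimalitygeneralsolutionq = \Phi(\calQ)$ is connected as a continuous image of the connected set $\calQ$.

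The substantive step is boundedness. Suppose it fails: take $q_n\in\smdpoptimalitygeneralsolutionq$ with $\norm{q_n}\to\infty$, put $\hat q_n\=q_n/\norm{q_n}$, and pass to a subsequence with $\hat q_n\to\hat q$, $\norm{\hat q}=1$. Every solution has $\bar r$-component $\optimalr$, so dividing \eqref{eq: SMDP action-value optimality equation} for $q_n$ by $\norm{q_n}$ and letting $n\to\infty$ — the terms $r_{sa}/\norm{q_n}$ and $\optimalr\,l_{sa}/\norm{q_n}$ vanish because $r_{sa},l_{sa}$ are finite (\cref{assu: smdp}(ii)) — yields the homogeneous equation $\hat q(s,a)=\sum_{s'\in\sspace}p_{ss'}^a\max_{a'\in\aspace}\hat q(s',a')$, while \cref{assu: f}(iii) gives $f(\hat q_n)-f(\zerovec)=\norm{q_n}^{-1}(\optimalr-f(\zerovec))\to0$, hence $f(\hat q)=f(\zerovec)$. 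Setting $\hat v(s)\=\max_a\hat q(s,a)$, the homogeneous equation reads $\hat v(s)=\max_a\sum_{s'}p_{ss'}^a\hat v(s')$, so $\hat v\ge P_\pi\hat v$ for every stationary policy $\pi$. Apply this to the fully randomized policy $\pi^\circ$, whose unique recurrent class is $\S^o$ (weak communication) with stationary distribution $d^\circ$ positive on $\S^o$: the monotone chain $\hat v\ge P_{\pi^\circ}\hat v\ge P_{\pi^\circ}^2\hat v\ge\cdots$ together with Cesàro averaging gives $\hat v(s)\ge(P_{\pi^\circ}^\infty\hat v)(s)=\sum_{s'\in\S^o}d^\circ(s')\hat v(s')\=:\bar v$ for all $s\in\sspace$, and since $\bar v$ is a strictly convex combination of $\{\hat v(s'):s'\in\S^o\}$, this forces $\hat v\equiv\bar v$ on $\S^o$. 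If $M\=\max_s\hat v(s)>\bar v$, then $A_M\=\{s:\hat v(s)=M\}$ is disjoint from $\S^o$ and is closed under any deterministic policy that selects a $\hat v$-maximizing action at each state of $A_M$; such a closed set contains a recurrent class of that policy lying inside $A_M\subseteq\sspace\setminus\S^o$, contradicting weak communication. Hence $\hat v\equiv\bar v$, so $\hat q\equiv\bar v\,\onevec$, and $f(\hat q)=f(\zerovec)+\bar v u=f(\zerovec)$ with $u>0$ forces $\bar v=0$, i.e.\ $\hat q=\zerovec$ — contradicting $\norm{\hat q}=1$. Thus $\smdpoptimalitygeneralsolutionq$ is bounded, hence compact.

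For part (ii) it suffices to produce one example. Take the communicating (hence weakly communicating) MDP of \cref{fig:different mdps}(c), where $\optimalr=1$ and $\calQ$ is the polyhedron of \cref{example: different MDPs}, and let $f(q)=\max_{(s,a)\in\sspace\times\aspace}q(s,a)$, which satisfies \cref{assu: f} with $u=1$. Then $\smdpoptimalitygeneralsolutionq=\{q\in\calQ:\max_{(s,a)}q(s,a)=1\}$ is, in the coordinates $(q(1,s),q(2,s))$, the union of the two segments $\{q(1,s)=1,\,0\le q(2,s)\le1\}$ and $\{q(2,s)=1,\,0\le q(1,s)\le1\}$ — an L-shaped, nonconvex set. (Alternatively, for affine $f$ one can note $\calQ=\smdpoptimalitygeneralsolutionq+\R\onevec$, so a nonconvex $\calQ$ — which \citet{ScF78} shows can occur — would force $\smdpoptimalitygeneralsolutionq$ to be nonconvex.) I expect the compactness step to be the main obstacle: it requires identifying the limiting homogeneous equation, showing its only solutions in a weakly communicating SMDP are constant vectors — precisely where the closed-communicating-class structure enters — and then pinning the constant to zero via \cref{assu: f}(ii)--(iii); the other properties drop out of the retraction $\Phi$.
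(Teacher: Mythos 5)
Your proof is correct and takes essentially the same route as the paper's: the shift/retraction $q \mapsto q + (\optimalr - f(q))\onevec/u$ delivers nonemptiness and connectedness (given \citetalias{ScF78}'s connectedness of $\calQ$), closedness is by continuity, and boundedness is the same normalize-and-pass-to-the-limit contradiction that reduces to showing the zero-reward homogeneous equation has only constant solutions, pinned to $\zerovec$ via \cref{assu: f}(ii)--(iii). The only departures are local and equally valid: you prove the constant-solution lemma via the fully randomized policy's stationary distribution plus a closed-set/recurrent-class argument on the max level set, where the paper's \cref{lemma: 0 reward MDP has 1 d solution} argues directly with $\argmin_s v(s)$ and $\argmax_s v(s)$; and your nonconvexity witness carves a nonconvex slice out of the convex $\calQ$ of \cref{example: different MDPs}(c) using the nonlinear $f=\max$, whereas the paper's \cref{ex: nonconvex Q} uses an affine $f$ on a three-state MDP whose $\calQ$ is itself nonconvex.
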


Based on Schweitzer and Federgruen's results \citepalias{ScF78}, we know that the solution set $\calQ$ is nonempty, closed and unbounded, always connected, but possibly nonconvex. The preceding theorem shows that adding the constraint $f(q) = \optimalr$ selects a connected and compact subset of solutions from $\calQ$. (Later, in \cref{sec: solution set dim}, we will further utilize the results of \citepalias{ScF78} to show that this constraint reduces the number of degrees of freedom in the solutions by exactly $1$; cf.\ \cref{thm-dim-Qs}.) The compactness of $\smdpoptimalitygeneralsolutionq$ has a crucial role in ensuring the stability of the algorithms, as will be seen in our subsequent convergence proofs.

We now proceed to prove \cref{thm: smdp properties}. Its part (ii) will be demonstrated directly with an example of a nonconvex set $\smdpoptimalitygeneralsolutionq$ (see \cref{ex: nonconvex Q}). Our immediate focus will be on proving its part (i). 
For notational simplicity and a cleaner presentation, we will work with the state-value optimality equation \eqref{eq: SMDP state-value optimality equation} instead:
\begin{equation} \label{eq: SMDP state-value optimality equation2}
  v(s) = \max_{a \in \aspace} \left \{ r_{sa} - \bar r \cdot l_{sa} +  \sum_{s' \in \sspace} p_{ss'}^a v(s') \right\}, \qquad \forall \, s \in \sspace.
\end{equation}
It is well-known that the action-value optimality equation \eqref{eq: SMDP action-value optimality equation} for any weakly communicating SMDP can be viewed as the state-value optimality equation \eqref{eq: SMDP state-value optimality equation2} for an equivalent, weakly communicating SMDP defined on an enlarged (finite) state-action space, with the original state-action pairs treated as states. (For a precise definition, see the discussion on $\text{SMDP}_q$ near the end of \cref{sec-7.2}.) Thus, to prove \cref{thm: smdp properties}(i), it is sufficient (actually equivalent) to establish its conclusions for the following subset of solutions (in $v$) to \eqref{eq: SMDP state-value optimality equation2}:
\begin{align}\label{eq: Vs}
\smdpoptimalitygeneralsolutionv \= \{v \in \R^{\cardS}:\, v \in \smdpoptimalitysolutionv, \ f(v) = \optimalr \}.
\end{align}
Here, $\smdpoptimalitysolutionv$ denotes the set of all solutions of $v$ to \eqref{eq: SMDP state-value optimality equation2}, and $f : \sspace \to \R$ satisfies \cref{assu: f} with the space $\ispace$ being $\sspace$ instead.  

The following lemma is closely related to the compactness of $\smdpoptimalitygeneralsolutionv$ and the algorithmic stability mentioned earlier. It shows an important property of weakly communicating SMDPs: while the solutions in $\calV$ may not be unique up to an additive constant, they must be so if all rewards are zero. The solutions in this special case delineate the directions in which the solutions in the original $\calV$ can ``escape to $\infty$,'' making it relevant to our original problem. We will use this lemma for the compactness part of \cref{thm: smdp properties} and later, also for the stability part required in the convergence analysis in \cref{sec: convergence proofs} (cf.\ Remark~\ref{remark: stability ascpect}).

Although this lemma can be inferred from the general results from \citepalias{ScF78} on general multichain SMDPs (cf.\ \cref{rmk-app-A1}(b) in \cref{sec: solution set dim}), we provide here a concise and direct alternative proof, by leveraging the weakly-communicating structure.

\begin{mylemma} \label{lemma: 0 reward MDP has 1 d solution}
In a weakly communicating SMDP with zero rewards, $\smdpoptimalitysolutionv = \{c \onevec \mid c \in \R \}$.
\end{mylemma}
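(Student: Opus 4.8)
The plan is to show that any solution $v$ of the zero-reward optimality equation is constant, via a two-sided (maximum and minimum) propagation argument that uses the weakly-communicating structure to locate recurrent behaviour. First I would note that when all rewards vanish, every policy earns reward rate $0$, so $\optimalr = 0$, and since the $\bar r$-component of every solution equals $\optimalr$, equation \eqref{eq: SMDP state-value optimality equation2} reduces to $v(s) = \max_{a \in \aspace} \sum_{s' \in \sspace} p_{ss'}^a v(s')$ for all $s$ — the holding times $l_{sa}$ drop out entirely. The inclusion $\{c\onevec : c \in \R\} \subseteq \smdpoptimalitysolutionv$ is then immediate, so the content is the reverse inclusion.

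Fix a solution $v$, let $m \= \min_s v(s)$ and $S_m \= \{s : v(s) = m\}$. For $s \in S_m$ and any action $a$ we have $\sum_{s'} p_{ss'}^a v(s') \le v(s) = m$ (as $v(s)$ is the maximum over actions) while $\sum_{s'} p_{ss'}^a v(s') \ge m$ (as $v \ge m$); hence equality holds throughout, forcing $v(s') = m$ for every one-step successor $s'$ of $s$ under $a$. Thus $S_m$ is closed under \emph{every} policy. I would next show $\S^o \subseteq S_m$, where $\S^o$ is the unique closed communicating class: since $S_m$ is nonempty and closed under some deterministic stationary policy $\pi$, the finite Markov chain it induces on $S_m$ has a recurrent class $C \subseteq S_m$ whose states are recurrent under $\pi$ (closedness of $S_m$ makes the restricted chain agree with the full one), hence not transient under all policies, hence inside $\S^o$ by \cref{def: w.c. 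MDP}; so $S_m \cap \S^o \neq \emptyset$. Taking $s \in S_m \cap \S^o$ and any $s' \in \S^o$, the communicating property yields a policy and a positive-probability path from $s$ to $s'$, which must lie entirely in $S_m$ because $S_m$ is closed under that policy and contains $s$; therefore $s' \in S_m$, giving $\S^o \subseteq S_m$, i.e.\ $v \equiv m$ on $\S^o$.

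It then remains to rule out states with $v(s) > m$. Set $M \= \max_s v(s)$ and suppose $M > m$, with $S_M \= \{s : v(s) = M\}$. Choosing for each $s \in S_M$ a maximizing action $a^*(s)$ and running the same argument with the bound $v \le M$ shows that every one-step successor of $s$ under $a^*(s)$ lies in $S_M$; hence $S_M$ is closed under the stationary policy that plays $a^*(\cdot)$ on $S_M$. The recurrent-class argument from the previous step then forces $S_M \cap \S^o \neq \emptyset$, contradicting $v \equiv m < M$ on $\S^o$. Hence $M = m$, so $v$ is constant, as desired.

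The only delicate point is the bookkeeping around recurrence: one must check that a class recurrent for the chain restricted to a closed set is recurrent for the full chain (immediate from closedness) and then invoke \cref{def: w.c. MDP} to place it inside $\S^o$. Everything else is elementary manipulation of the optimality equation. One could instead derive the lemma from the general multichain characterization in \citepalias{ScF78} (cf.\ the reference made in \cref{sec: solution set dim}), but the self-contained argument above is shorter and makes the role of the weakly-communicating hypothesis explicit.
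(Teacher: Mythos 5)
Your proposal is correct and follows essentially the same route as the paper's proof: both reduce to $v(s) = \max_a \sum_{s'} p_{ss'}^a v(s')$, show the argmin set is closed under every policy and hence contains $\S^o$, and show the argmax set contains a recurrent class under a maximizing policy and hence meets $\S^o$, forcing $\min v = \max v$. Your write-up merely spells out the recurrence bookkeeping that the paper leaves implicit.
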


\begin{proof}
Recall that in a weakly communicating SMDP, there is a unique, closed communicating class of states, denoted by $S^o$, and the remaining states in $\sspace \setminus S^o$ are transient under all policies.
With zero rewards, the optimal reward rate is $0$ and the optimality equation \eqref{eq: SMDP state-value optimality equation} thus reduces to
\begin{align}\label{eq: zero reward v opt eqn}
     v(s) = \max_{a \in \aspace} \left\{ \sum_{s' \in \sspace} p_{ss'}^a v(s') \right\}, \quad s \in \sspace.
\end{align}
Any constant function $v$ satisfies \eqref{eq: zero reward v opt eqn}. 

Conversely, let $v$ be a solution of \eqref{eq: zero reward v opt eqn}. Consider these two nonempty subsets of states:
$$ S_\text{min} \= \argmin_{s \in \sspace} v(s), \qquad \ \  S_\text{max} \=  \argmax_{s \in \sspace} v(s).$$
By \eqref{eq: zero reward v opt eqn}, there is a zero probability of transitioning from a state $s \in S_\text{min}$ to a state $s' \not \in S_\text{min}$, regardless of the action chosen. Therefore, $S_\text{min}$ is a closed class of states by definition (cf.\ \cref{sec: wc mdps}), and this implies $S^o \subset S_\text{min}$ since the SMDP is weakly communicating. 

On the other hand, by \eqref{eq: zero reward v opt eqn}, there exists a nonempty subset $S'_\text{max}$ of $S_\text{max}$ such that $S'_\text{max}$ is a recurrent class under some deterministic policy. Since the SMDP is weakly communicating, this implies $S'_\text{max} \subset S^o$. Thus, $ S'_\text{max} \subset S_\text{min}$ and consequently, $\min_{s \in \sspace} v(s) = \max_{s \in \sspace} v(s)$; i.e., $v$ is a constant function.   
\end{proof}
\vspace*{-0.5cm}

We now prove \cref{thm: smdp properties}(i).

\smallskip
\begin{proofof}{\cref{thm: smdp properties}(i)}
As discussed earlier, it suffices to establish the conclusions of \cref{thm: smdp properties}(i) for the set $\smdpoptimalitygeneralsolutionv$ instead.
First, let us prove that $\smdpoptimalitygeneralsolutionv$ is nonempty, closed and connected. This proof uses the definition of this set, the properties of the set $\calV$ given in \citepalias{ScF78}, and the conditions on the function $f$ given in \cref{assu: f}(i, ii).

\noindent (i) Closedness: The set $\smdpoptimalitygeneralsolutionv$ is clearly closed, as all the functions in its defining equations are real-valued and continuous on $\bbR^{|\sspace|}$. 

\noindent (ii) Nonemptiness: By \citepalias[Theorem~3.1(b)]{ScF78}, $\calV \not= \varnothing$. Let $v_* \in \calV$. Then for all $c \in \bbR$, we have $v_* + c \onevec \in \calV$  [cf.\ \eqref{eq: SMDP state-value optimality equation2}], particularly for $c_* =  (r_* - f(v_*))/u$ where $u > 0$ is the constant from \cref{assu: f}(ii). By \cref{assu: f}(ii),
we have $f(v_* + c_* \onevec) = f(v_*) + c_* u = r_*$, implying $v_* + c_* \onevec \in \smdpoptimalitygeneralsolutionv$. Therefore, $\smdpoptimalitygeneralsolutionv \not= \varnothing$. 

\noindent (iii) Connectedness: By \citepalias[Theorem~4.2(b)]{ScF78}, the set $\smdpoptimalitysolutionv$ is connected. To extend this connectedness to $\smdpoptimalitygeneralsolutionv$, consider the continuous function $z : \smdpoptimalitysolutionv \to \smdpoptimalitygeneralsolutionv$ defined as $z(v) \= v + \tfrac{\optimalr -  f(v)}{u} \onevec$. Here the continuity of $z$ follows from that of $f$ (\cref{assu: f}(i)) and that $z(v) \in \smdpoptimalitygeneralsolutionv$ follows from \cref{assu: f}(ii), similarly to the nonemptiness proof above. This function $z$ maps the connected set $\smdpoptimalitysolutionv$ onto $\smdpoptimalitygeneralsolutionv$, since $z(v) = v$ for any $v \in \smdpoptimalitygeneralsolutionv \subset \smdpoptimalitysolutionv$. As the image of a connected set under a continuous function is connected, it follows that $\smdpoptimalitygeneralsolutionv$ is connected.

To prove the compactness of $\smdpoptimalitygeneralsolutionv$, we need to show that this closed set is also bounded. We employ proof by contradiction. Suppose $\smdpoptimalitygeneralsolutionv$ is unbounded. Then there exists a sequence $\{x_n\}$ in $\smdpoptimalitygeneralsolutionv$ such that, as $n \to \infty$,
\begin{equation} \label{eq-Thm5.1-prf1}
 \norm{x_n} \to \infty, \qquad y_n \= x_n / \norm{x_n} \to y_\infty \ \ \text{for some $y_\infty \in \R^{\cardS}$ with $\norm{y_\infty} = 1$.} 
\end{equation}
(Since the unit ball in $\R^{\cardS}$ is compact, we can always find such an unbounded sequence $\{x_n\}$ from any unbounded sequence in $\smdpoptimalitygeneralsolutionv$ by choosing a proper subsequence.)

Since $x_n \in \smdpoptimalitygeneralsolutionv$, we have
\begin{align*}
    &x_n(s) = \max_{a \in \aspace} \left\{r_{sa} - \optimalr \cdot l_{sa} + \sum_{s' \in \sspace} p_{ss'}^a x_n(s') \right\}, \quad \forall\, s \in \sspace, \\
    & f(x_n) = \optimalr.
\end{align*}
Hence, $y_n = x_n/\norm{x_n}$ satisfies:
\begin{align*}
    & y_n(s) = \max_{a \in \aspace} \left \{ \frac{r_{sa} - \optimalr \cdot l_{sa}}{\norm{x_n}} + \sum_{s' \in \sspace} p_{ss'}^a y_n(s') \right \},  \quad \forall\, s \in \sspace, \\
    & f(y_n) = f(\zerovec) + \frac{\optimalr - f(\zerovec)}{\norm{x_n}},
\end{align*}
where we applied \cref{assu: f}(iii) to $f(c x_n)$ with $c = 1/\norm{x_n}$ to derive the second equation. Taking $n \to \infty$ in the above two equations and using \eqref{eq-Thm5.1-prf1} and the continuity of $f$ (\cref{assu: f}(i)), we obtain the relations satisfied by the point $y_\infty$:
\begin{align}
    & y_\infty(s) = \max_{a \in \aspace} \left\{\sum_{s' \in \sspace} p_{ss'}^a y_\infty(s')\right\}, \quad \forall\, s \in \sspace, \label{eq: y opt eqn}\\
    & f(y_\infty) = f(\zerovec). \label{eq: y det eqn}
\end{align}
Now \eqref{eq: y opt eqn} is the same as \eqref{eq: zero reward v opt eqn}. The solutions of this equation are constant functions, as shown in the proof of \cref{lemma: 0 reward MDP has 1 d solution}. Thus $y_\infty = c \onevec$ for some $c \in \R$. Then, by \eqref{eq: y det eqn} and \cref{assu: f}(ii), we have $f(y_\infty) = f(\zerovec) + c u = f(\zerovec)$, implying $c = 0$ and hence $y_\infty = \zerovec$. However, this is impossible since $\norm{y_\infty} = 1$. This contradiction shows that the set $\smdpoptimalitygeneralsolutionv$ must be bounded.
\end{proofof}
\vspace*{-0.5cm}

We close this section by demonstrating with an example that the solution set $\smdpoptimalitygeneralsolutionq$ can be nonconvex, thereby establishing \cref{thm: smdp properties}(ii). This example involves an MDP, a special case of SMDP.

\begin{example}[A nonconvex $\smdpoptimalitygeneralsolutionq$] \rm \label{ex: nonconvex Q}
Consider a weakly communicating MDP with three states and two actions, as illustrated in \cref{fig: c2 communicating_example} (left subfigure). The optimal reward rate is $0$.

\begin{figure*}[h]
\centering
    \begin{subfigure}{\textwidth}
    \includegraphics[width=\textwidth]{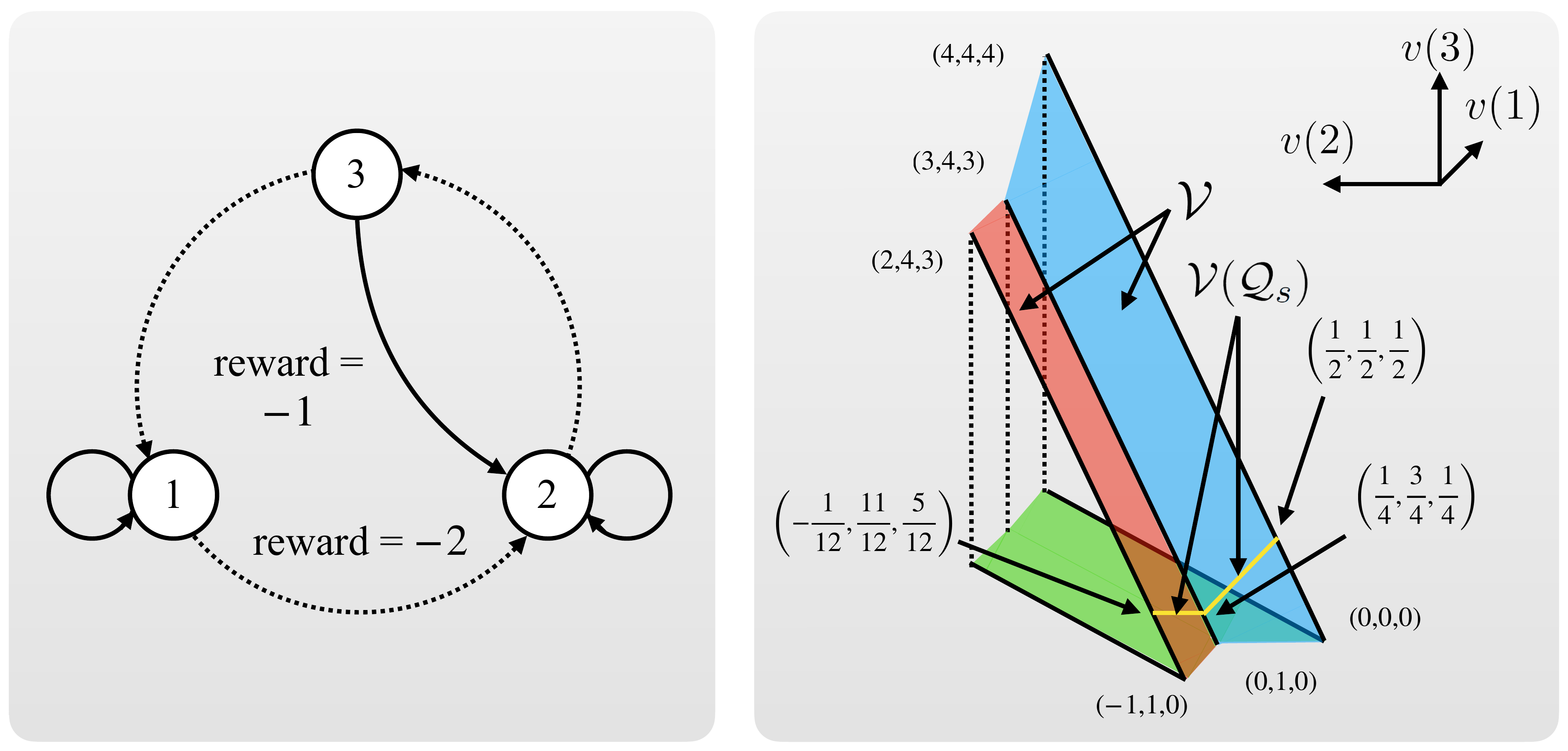}
    \end{subfigure}
    \caption{An illustrative MDP example. \emph{Left}: The example MDP has three states $\{\emph{1, 2, 3}\}$ and two actions $\{\texttt{solid},  \texttt{dashed}\}$ with deterministic effects. The directed solid and dashed curves between states depict deterministic state transitions corresponding to actions \texttt{solid} and \texttt{dashed}, respectively. Taking action \texttt{solid} (resp. \texttt{dashed}) at state \emph{3} (resp. state \emph{1}) results in a reward of $-1$ (resp. $-2$), while all other rewards are $0$. \emph{Right}: Visualization of the solution set $\smdpoptimalitysolutionv$ and its  subset $\smdpoptimalitysolutionv(\smdpoptimalitygeneralsolutionq)$, comprising the state value functions corresponding to the solutions in $\smdpoptimalitygeneralsolutionq$. The red and blue regions together represent $\smdpoptimalitysolutionv$, while the two yellow line segments correspond to $\smdpoptimalitysolutionv(\smdpoptimalitygeneralsolutionq)$. Both sets are nonconvex.}
    \label{fig: c2 communicating_example}
\end{figure*}

Let $f(q) = \sum_{i = 1}^3 \sum_{a \in \aspace} q(i,a)$. Such a choice of $f$ satisfies \cref{assu: f} on $f$. Let $s$ and $d$ stand for actions \texttt{solid} and \texttt{dashed}, respectively. 
Consider two points $q_1, q_2 \in \smdpoptimalitygeneralsolutionq$ and the midpoint $\bar q \= 0.5 q_1 + 0.5 q_2$, with their components given by:
$$
\begin{array}{c|cc}
 q_1: & s & d \\
\hline
\emph{1} & 1/2 & -3/2 \\
\emph{2} & 1/2 & 1/2 \\
\emph{3} & -1/2 & 1/2
\end{array}
\qquad \quad
\begin{array}{c|cc}
 q_2: & s & d \\
\hline
\emph{1} & -2/3 & -2/3 \\
\emph{2} & 4/3 & 1/3 \\
\emph{3} & 1/3 & -2/3
\end{array}
\qquad \quad
\begin{array}{c|cc}
 \bar{q}: & s & d \\
\hline
\emph{1} & -1/12 & -13/12 \\
\emph{2} & 11/12 & 5/12 \\
\emph{3} & -1/12 & -1/12 
\end{array}
$$
That $q_1, q_2 \in \smdpoptimalitygeneralsolutionq$ can be directly verified, as they satisfy both $f(q) = \optimalr = 0$ and the action-value optimality equation \eqref{eq: action-value optimality equation} for this MDP. However, the midpoint $\bar q$ violates the latter equation, as $\frac{5}{12} = \bar q(2, d) \neq \max\{\bar q(3, s),\, \bar q(3, d)\} = -\frac{1}{12}$. Therefore, $\bar q \not \in  \smdpoptimalitygeneralsolutionq$, and the set $\smdpoptimalitygeneralsolutionq$ is not convex.

While it is hard to visualize the set $\smdpoptimalitygeneralsolutionq$ in $\R^6$, let us derive and plot its corresponding set of state values $v(\cdot) = \max_{a \in \aspace} q(\cdot,a)$ in $\R^3$ to provide a more intuitive picture. 
First, in this MDP, the state-value optimality equation \eqref{eq: SMDP state-value optimality equation2} becomes
\begin{align*}
    v(1) = \max\{v(1), -2 + v(2)\},\quad v(2) = \max\{v(2), v(3)\},\quad v(3) = \max\{v(1), -1 + v(2)\}.
\end{align*}
Its solution set $\smdpoptimalitysolutionv$ is plotted in \cref{fig: c2 communicating_example} (right subfigure) as the two connected strips in red and blue. Consider the subset $\smdpoptimalitysolutionv(\smdpoptimalitygeneralsolutionq)$ of state value functions corresponding to the state-action value functions in $\smdpoptimalitygeneralsolutionq$; that is, $$\smdpoptimalitysolutionv(\smdpoptimalitygeneralsolutionq) \= \{ v \in \R^3 :\, \exists\, q \in \smdpoptimalitygeneralsolutionq\ \text{with}\ v(i) =  \max_{a \in \aspace} q(i, a) \  \text{for} \ i = 1,2,3.\}$$
Since $f(q) = 0$ for $q \in \smdpoptimalitygeneralsolutionq$, we can express the set $\smdpoptimalitysolutionv(\smdpoptimalitygeneralsolutionq)$ using the relationship between $q$ and $v(\cdot) = \max_{a \in \aspace} q(\cdot, a)$ provided by the action-value optimality equation \eqref{eq: action-value optimality equation} for this MDP. It is given by
$$\smdpoptimalitysolutionv(\smdpoptimalitygeneralsolutionq) = \{v \in \smdpoptimalitysolutionv :\, 2 v(1) + 3 v(2) + v(3) = 3\},$$
and depicted in \cref{fig: c2 communicating_example} (right subfigure) as the two connected yellow line segments within the set $\smdpoptimalitysolutionv$. Observe that both $\smdpoptimalitysolutionv$ and $\smdpoptimalitysolutionv(\smdpoptimalitygeneralsolutionq)$ are nonconvex.
\qed
\end{example}

\section{Convergence Proofs (Theorems\ \ref{thm: Extended RVI Q-learning}, \ref{thm: inter-option Differential Q-learning}, \ref{thm: intra-option Differential Q-learning})} \label{sec: convergence proofs}

In this section, we prove the convergence theorems for the three studied average-reward Q-learning algorithms: RVI Q-learning, and its inter- and intra-option extensions (Theorems~\ref{thm: Extended RVI Q-learning}, \ref{thm: inter-option Differential Q-learning}, \ref{thm: intra-option Differential Q-learning}). We approach this task in a unified manner by focusing on establishing the convergence of an abstract, general stochastic RVI algorithm. This framework encompasses the three specific algorithms as special cases and may also have broader applications beyond MDPs/SMDPs. The convergence analysis will be presented in \cref{sec: c1 general RVI Q}, leading to \cref{thm: General RVI Q}, which will then be specialized to specific contexts to derive Theorems~\ref{thm: Extended RVI Q-learning}, \ref{thm: inter-option Differential Q-learning}, and \ref{thm: intra-option Differential Q-learning} in \cref{sec: proofs for specific algs}.

Our proof strategy is similar to that of \citet{abounadi2001learning} for RVI Q-learning and can be outlined as follows: The RVI algorithms we consider are asynchronous SA (stochastic approximation) algorithms, and we employ ODE-based proof methods to analyze their behavior. Specifically, we use a stability criterion and proof method developed by \citet{BoM00} to analyze the algorithms' stability, i.e., the boundedness of their updates. Once stability is established, applying SA theory allows us to relate the asymptotic behavior of the algorithms to that of their associated ODEs' solutions as time approaches infinity. Finally, by analyzing the solution properties of these associated ODEs, we derive concrete characterizations of the algorithms' convergence properties. 

Our analysis builds upon prior work \citep{BoM00} for stability analysis and \citep{abounadi2001learning} for analyzing the ODEs associated with RVI Q-learning. However, we extend these prior analyses in two important ways to address the learning algorithms in weakly communicating MDPs/SMDPs.  

Our first extension pertains to stability analysis. We extend Borkar and Meyn's result \citeyearpar{BoM00} to accommodate more general noise conditions for asynchronous SA algorithms (cf.\ \cref{assu: noise}), which are needed, particularly for addressing the inter-option algorithm for solving the underlying SMDPs. This extension requires a deep dive into Borkar and Meyn's stability proof for synchronous SA algorithms, modifying critical parts of the proof by constructing auxiliary processes. We will state our result in Section~\ref{sec: overview} (Theorem~\ref{thm: async sa}), referring interested readers to our separate paper \citep{yu2023note} for detailed proofs. We will subsequently apply this result to analyze the RVI algorithms' behavior in \cref{sec: c1 general RVI Q}.

Our second extension involves characterizing the solution properties of the associated ODEs. As we showed earlier, in the case of weakly communicating MDPs/SMDPs, the equations associated with the RVI algorithms generally have non-unique solutions, resulting in their corresponding ODEs having multiple equilibrium points. This differs from the case considered previously in \citep{abounadi2001learning}, where the ODE involved always possesses a unique equilibrium. In \cref{sec: c1 general RVI Q}, we will focus on carrying out this second extension.

We will now introduce the materials to be employed in our subsequent analysis, including several definitions and concepts related to ODEs, as well as our recent extension of Borkar and Meyn's result mentioned earlier.

\subsection{Preliminaries and an Extended SA Result for Analysis} \label{sec: overview}

For a Lipschitz continuous function $h: \R^d \to \R^d$, consider the ODE $\dot{x}(t) = h(x(t))$. 
This ODE is well-posed: for each initial condition $x_0 \in \R^d$, it has a unique solution $x(t)$ defined on $\R$ and satisfying $x(0) = x_0$. A point $x \in \R^d$ is an \emph{equilibrium} of the ODE if $h(x) = 0$. A set $A \subset \R^d$ is \emph{invariant} for the ODE if, whenever $x(0) \in A$, the solution $x(t) \in A$ for all $t \in \R$. Equivalently, $A$ is invariant if and only if for all $t \in \R$, $A = \cup_{x(0) \in A} \{x(t)\}$.

We will also need the notions of Lyapunov stability and global asymptotic stability of a set or a point.
Let $A$ be a compact subset of $\R^d$ and $A^\delta$, where $\delta > 0$, its closed $\delta$-neighborhood. The set $A$ is called \emph{stable} for the ODE in the sense of Lyapunov if, given any $\epsilon > 0$, there exists $\delta > 0$ such that for all initial conditions $x(0) \in A^{\delta}$, $x(t) \in A^\epsilon$ for all $t \geq 0$. The set $A$ is \emph{globally asymptotically stable} if it is stable and for all initial conditions $x(0) \in \R^d$, $x(t)$ approaches the set $A$ as $t \to \infty$. (See \citet[Chap.\ 4.2.2]{KuY03} for a reference.) A point $x \in \R^d$ is called stable or globally asymptotically stable for the ODE, if the set $\{x\}$ has the respective stability property.

In the Borkar-Meyn framework \citeyearpar{BoM00}, we consider a Lipschitz continuous function $h$ with additional properties that ensure no solution $x(t)$ of the ODE would ``drift'' to infinity as $t \to \infty$. These properties are specified in terms of the scaling limit of the function $h$ (i.e., the function $h_\infty$ defined below) as follows.

\begin{myassumption}[conditions on the function $h$] \label{assu: h} \hfill
\begin{enumerate}[leftmargin=0.8cm,labelwidth=!]
\item[\rm (i)] Lipschitz continuity: for some $0 \leq L < \infty$, $\| h(x) - h(y) \| \leq L \| x - y\|$ for all $x, y \in \R^{d}$.
\item[\rm (ii)]  For $c \geq 1$ and functions $h_c(x) \= h(cx)/c$, we have $h_c(x) \to h_\infty(x)$ as $c \to \infty$, uniformly on compact subsets of $\R^{d}$, where $h_\infty$ is a continuous function on $\R^{d}$.
\footnote{It is worth noting that in this assumption, part (ii) is the same as the pointwise convergence of $h_c(x)$ to some real-valued function $h_\infty(x)$ as $c \to \infty$. This is because if the pointwise limit $h_\infty$ exists, it must be Lipschitz continuous with the same Lipschitz constant as $h$, and the convergence must be uniform on compact subsets of $\R^d$.}
\item[\rm (iii)] Furthermore, the ODE\
\begin{align}\label{eq: hinfty ode}
    \dot{x}(t) = h_\infty (x(t)) 
\end{align}
has the origin as its unique globally asymptotically stable equilibrium. 
\end{enumerate}
\end{myassumption}

Let $\I = \{1, 2, \ldots, d\}$, and write $h_i$ for the $i$th component of $h$. In our work \citep{yu2023note}, we have studied a class of asynchronous SA algorithms described by the update rule:
 \begin{equation} \label{eq: async sa}
    Q_{n+1}(i)  = Q_n(i)  + \alpha_{\nu_n(i)} \left( h_i (Q_n) + M_{n+1}(i) + \epsilon_{n+1}(i) \right) \ind \{ i \in Y_n\}, \ i \in \I,
\end{equation}
where $Q_0$ is a given initial vector. Similar to the RVI Q-learning algorithms, $Y_n$ is a nonempty random subset of $\I$, $\nu_n(i) = \sum_{k=0}^n \ind \{ i \in Y_k\}$, and $\{\alpha_n\}$ and $\{\nu_n\}$ satisfy Assumptions~\ref{assu: stepsize} and \ref{assu: update}. The terms $M_{n+1}$ and $\epsilon_{n+1}$ represent two types of noises present in the evaluation of $h(Q_n)$: $M_{n+1}$ accounts for noise with zero conditional mean, while $\epsilon_{n+1}$ may have a nonzero conditional mean. These noise terms are subject to the following conditions:

Let $\{\F_n\}$ be an increasing family of $\sigma$-fields such that $\F_n \supset \sigma (Q_m, Y_m, M_m, \epsilon_m; m \leq n)$.

\begin{myassumption}[conditions on the noise terms] \label{assu: noise} \hfill
\begin{itemize}[leftmargin=0.7cm,labelwidth=!]
\item[\rm (i)] For all $n \geq 0$, $\E [ \| M_{n+1} \| ] < \infty$, $\E [ M_{n+1} \mid \F_n ] = 0$ a.s.,
\footnote{This means that $\{M_n\}$ is a martingale-difference sequence.}
and moreover, there exists a deterministic constant $K \geq 0$ such that
$ \E \left[ \| M_{n+1} \|^2 \mid \F_n \right] \leq K \left(1 +\| Q_n \|^2 \right)$ a.s.
\item[\rm (ii)] For all $n \geq 0$, $\norm{\epsilon_{n+1}} \leq \delta_{n+1} (1 + \norm{Q_n})$, where $\delta_{n+1}$ is $\calF_{n+1}$-measurable and as $n \to \infty$, $\delta_n \to 0$ a.s.
\end{itemize}
\end{myassumption} 

In the context of a specific algorithm, $\F_n$ typically represents the history of the algorithm up to time step $n$. The term $M_{n+1}$ represents a ``centered'' component, while $\epsilon_{n+1}$ represents a ``biased'' component, deviating from the desired value $h(Q_n)$. Assumption~\ref{assu: noise}(ii) requires that the biased noise component becomes vanishingly small relative to $1 + \|Q_n\|$ as time progresses, although it needs not vanish absolutely should $\{Q_n\}$ become unbounded. This noise term, $\epsilon_{n+1}$, arises in our inter-option algorithm for solving an SMDP, as the function $h$ in this case depends on expected holding times in the SMDP, parameters that can only be estimated with increasing accuracy over time. 

Under these conditions, we have shown, by extending Borkar and Meyn's stability proof, that the iterates $\{Q_n\}$ from algorithm \eqref{eq: async sa} is almost surely bounded. This stability result, combined with SA theory \citep{Bor98, Bor00, Bor09}, yields the following theorem, which we will apply in our subsequent convergence analysis of the RVI Q-learning and options algorithms.

\begin{mytheorem}[\text{\citet[Theorems 1 and 2]{yu2023note}}] \label{thm: async sa}
Under Assumptions~\ref{assu: stepsize}, \ref{assu: update}, \ref{assu: h}, and \ref{assu: noise}, almost surely, the sequence $\{Q_n\}$ generated by \eqref{eq: async sa} is bounded and converges to a (possibly sample path-dependent) compact, connected, internally chain transitive,
\footnote{See \citet[Section 2.1]{Bor09} for definition; we will not use this property in this work.}
invariant set of the ODE\ $\dot{x}(t) = h(x(t))$.
\end{mytheorem}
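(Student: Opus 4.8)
The plan is to prove \cref{thm: async sa} in two stages: first establish almost-sure boundedness (stability) of $\{Q_n\}$, and then deduce convergence to an internally chain transitive invariant set via the ODE method for asynchronous stochastic approximation.

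\textbf{Stability.} I would follow the scaling argument of \citet{BoM00}, adapted to the asynchronous iteration \eqref{eq: async sa} and to the biased noise permitted by \cref{assu: noise}(ii). Using the step sizes, set a reference time scale $t(n) \= \sum_{k=0}^{n-1}\alpha_k$ and partition $[0,\infty)$ into consecutive windows of a fixed ODE-length $T$, chosen (using \cref{assu: h}(iii)) so large that every solution of $\dot x = h_\infty(x)$ started on the unit sphere enters the ball of radius $1/4$ by time $T$. On the $m$th window, with $n_m$ the index where $t(n_m)$ reaches the window's left endpoint, put $r_m \= \max(\|Q_{n_m}\|, 1)$ and rescale $\hat Q_n \= Q_n/r_m$. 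Combining $h_c(x)=h(cx)/c \to h_\infty(x)$ uniformly on compacta (\cref{assu: h}(ii)), the Lipschitz bound (\cref{assu: h}(i)), the martingale maximal/quadratic estimate (\cref{assu: noise}(i)) together with $\sum_n\alpha_n^2<\infty$, and the \emph{relative} smallness of the biased term ($\|\epsilon_{n+1}\|/r_m \le \delta_{n+1}(1+\|Q_n\|)/r_m$ with $\delta_n\to 0$), one shows that for $m$ large the rescaled interpolated trajectory over a window is a small perturbation of the $h_\infty$-flow, hence $\|Q_{n_{m+1}}\| \le \tfrac12\|Q_{n_m}\|$ whenever $\|Q_{n_m}\|$ is large; a standard argument then gives $\sup_n\|Q_n\|<\infty$ a.s. The asynchrony enters through the per-component local clocks $\nu_n(i)$: \cref{assu: stepsize} and \cref{assu: update} ensure the components are updated comparably often and that the induced relative time rescalings converge (forced to $1$ by \cref{assu: update}(ii); cf.\ \cref{example: step size}), so the effective ODE tracked by all components is $\dot x = h(x)$. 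Making the noise estimates uniform over $i\in\I$ requires the auxiliary coupled processes introduced in \citep{yu2023note}.

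\textbf{Convergence.} Once $\{Q_n\}$ is a.s.\ bounded, I would invoke the asynchronous ODE method \citep{Bor98,Bor00,Bor09}. The martingale series $\sum_n \alpha_n M_{n+1}$ converges a.s.\ (by \cref{assu: noise}(i), boundedness of $\{Q_n\}$, and $\sum_n\alpha_n^2<\infty$), and the biased term contributes an asymptotically negligible perturbation in the sense that its averaged effect over any fixed window of ODE-time vanishes (because $\delta_n\to 0$ and $\{Q_n\}$ is bounded). Hence the piecewise-linear interpolation of $\{Q_n\}$ on the appropriate (common, by the asynchrony conditions) time scale is an asymptotic pseudotrajectory of the flow of $\dot x = h(x)$. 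By the limit-set theorem for asymptotic pseudotrajectories \citep[Chap.\ 2]{Bor09}, the set of limit points of $\{Q_n\}$ is almost surely a nonempty, compact, connected, internally chain transitive set, invariant for that flow; compactness is immediate from boundedness, and connectedness and internal chain transitivity are general properties of limit sets of bounded asymptotic pseudotrajectories.

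\textbf{Main obstacle.} The delicate point is incorporating the biased noise $\epsilon_{n+1}$ of \cref{assu: noise}(ii) into the Borkar--Meyn scaling proof. Unlike the purely martingale case, $\epsilon_{n+1}$ has nonzero conditional mean and is controlled only relative to $1+\|Q_n\|$, so in the rescaled picture one must show its contribution over a window shrinks \emph{uniformly}; this forces one to track $\epsilon_{n+1}$ through the per-component clocks and to couple the scaled iterates to an auxiliary process whose driving noise genuinely vanishes. Carrying this out while keeping all the window estimates simultaneously valid across the asynchronously updated components is the core technical work, done in \citep{yu2023note}; here it is imported as the black box \cref{thm: async sa}.
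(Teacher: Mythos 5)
Your proposal is correct and follows essentially the same route the paper attributes to its companion reference: the paper itself gives no proof of \cref{thm: async sa}, importing it verbatim from \citet{yu2023note}, and your outline (Borkar--Meyn scaling windows with $h_c \to h_\infty$ for stability, auxiliary coupled processes to push the biased noise $\epsilon_{n+1}$ and the asynchronous local clocks through the window estimates, then the standard asynchronous ODE limit-set theorem for convergence to a compact, connected, internally chain transitive invariant set) matches exactly the strategy described in \cref{sec: overview} and carried out in that companion paper. You also correctly identify the genuine technical novelty --- handling the relatively-bounded biased noise uniformly across asynchronously updated components --- which is precisely the extension the authors highlight.
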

\vspace*{-0.22cm}

Before proceeding, we make some additional comments regarding the stability aspect of prior analyses of RVI Q-learning algorithms and related works: \vspace*{-0.15cm}
\begin{myremark}  \rm \label{remark: sa result compare} 
\noindent (a) Previous convergence proofs for Differential/RVI Q-learning \citep{wan2021learning} and the two options algorithms \citep{wan2021average} have a notable gap: They applied a convergence result from the book \citep[Chap.\ 7.4]{Bor09} for asynchronous SA algorithms without first establishing its required condition on the stability of the algorithms. Therefore, these previous analyses are considered inadequate. \\*[0.02cm]
\noindent (b) In the convergence analysis of RVI Q-learning by \citet{abounadi2001learning}, the authors relied on a stability assertion for asynchronous SA algorithms from \citet[Theorem 2.5]{BoM00}. This theorem is set in a general distributed computing framework that allows for communication delays (which are not considered in our algorithmic framework). However, \citet{BoM00} did not provide an explicit proof of this stability result. Additionally, their conditions on the noise terms are stronger than ours: The martingale-difference noise terms $M_n$ are required to adhere to a specific form, whereas the noise terms $\epsilon_n$ are absent. For a more detailed discussion, see \citep[Remark 1(b) and the Appendix]{yu2023note}. \\*[0.02cm]
\noindent (c) Within the Borkar-Meyn framework, \citet[Theorem 1]{bhatnagar2011borkar} provided a stability proof for asynchronous SA with bounded communication delays, where he required the noise component $M_{n+1}$ to be bounded by $\| M_{n+1} \| \leq K (1 + \| x_n\|)$ for all $n \geq 0$, for some deterministic constant $K$. This condition is much more restrictive than the standard condition on martingale-difference noises described in \cref{assu: noise}(i).\qed
\end{myremark}
\vspace*{-0.35cm}

\subsection{An Abstract Stochastic RVI Algorithm and Its Convergence} \label{sec: c1 general RVI Q}
 
In this section, we introduce an abstract stochastic RVI algorithm and establish its convergence. By abstracting away context and implementation details, this algorithm unifies the three specific algorithms of interest, allowing us to focus on essential arguments in their convergence analysis.

The objective of this algorithm is to solve an equation that involves a max-norm nonexpansive mapping. Specifically, it aims to find a solution of $(\bar r, q)$ to the following equation:
\begin{align} \label{eq: General RVI Q Bellman equation}
    r(i) - \bar r + g(q)(i) - q(i) = 0, \quad \forall\, i \in \ispace \, \=  \,\{1, \ldots, d\}.
\end{align}
Here, $\bar r \in \bbR$ and $q \in \bbR^{d}$ are unknown variables to be solved for, while $r \in \bbR^d$ is a given vector. The mapping $g: \bbR^{d} \to \bbR^{d}$ possesses nonexpansiveness and other properties similar to those encountered in previously studied cases, as detailed below. In addition, we assume that the solutions of this equation exhibit a structure reminiscent of the specific optimality equations discussed earlier.

\begin{myassumption}[conditions on $g$]\label{assu: g} \hfill
\begin{itemize}[leftmargin=0.8cm,labelwidth=!]
\item[\rm (i)] The mapping $g$ is nonexpansive w.r.t.\ the max-norm: $\norm{g(x) - g(y)}_\infty \leq \norm{x - y}_\infty$ for all $x, y \in \R^d$.
\item[\rm (ii)] For all $c \in \bbR$ and $x \in \bbR^{d}$, $g(x + c\onevec) = g(x) + c\onevec$.
\item[\rm (iii)] For all $c \geq 0$ and $x \in \bbR^{d}$, $g(cx) = cg(x)$.
\end{itemize}
\end{myassumption}

\begin{myassumption}[conditions on the solution set of \eqref{eq: General RVI Q Bellman equation}]\label{assu: solution set} \hfill
\begin{itemize}[leftmargin=0.7cm,labelwidth=!]
    \item[\rm (i)] Equation \eqref{eq: General RVI Q Bellman equation} admits at least one solution of $(\bar r, q)$. All these solutions share a common value of $\bar r$, denoted by $\GRVIQsolutionrbar$.
\item[\rm (ii)] If $r(\cdot) \equiv 0$ instead, then $(\bar r, q) = (0, c \onevec)$, $c \in \R$, are the only solutions to \eqref{eq: General RVI Q Bellman equation}.
\end{itemize}
\end{myassumption}

\begin{myremark}\label{remark: specific equations} \rm
(a) Equation \eqref{eq: General RVI Q Bellman equation} encompasses the specific optimality equations of interest, including the action-value optimality equation \eqref{eq: action-value optimality equation}, the optimality equation \eqref{eq: intra-option option-value optimality equation} for the intra-option algorithm, and the scaled equivalent form \eqref{eq: scaled inter-option option-value optimality equation} of the option-value optimality equation \eqref{eq: inter-option option-value optimality equation} for the inter-option algorithm. This relationship can be seen by comparing these specific equations with \eqref{eq: General RVI Q Bellman equation} term by term. The details will be given in \cref{sec: proofs for specific algs}, where we apply the results of this subsection to specific algorithms.\\*[0.02cm]
\noindent (b) In the context of MDPs and SMDPs, \cref{assu: solution set} is satisfied if the MDP/SMDP is weakly communicating (cf.\ \cref{lemma: 0 reward MDP has 1 d solution}). More generally, this assumption holds true in an MDP or SMDP where the optimal average reward rate remains constant, and the policy that applies every action with positive probability induces a single recurrent class of states (along with a possibly empty set of transient states). In particular, it can be deduced from the theory of \citepalias{ScF78} (cf.\ \cref{sec: degree of freedom Q}) that \cref{assu: solution set}(ii) must hold in this case. Thus, the convergence result we present below applies to this broader class of MDPs/SMDPs, not only to those weakly communicating ones.\qed
\end{myremark}
 
With $f$ satisfying \cref{assu: f}, define a subset of solutions of $q$ to \eqref{eq: General RVI Q Bellman equation} by
\begin{align}\label{eq: qsharp}
    \GRVIQsolutionq \= \{q \in \R^d : \text{$(\GRVIQsolutionrbar, q)$ solves \eqref{eq: General RVI Q Bellman equation};} \ f(q) = \GRVIQsolutionrbar\}.
\end{align}
Define a function $h: \R^d \to \R^d$ by
\begin{equation} \label{eq: h for RVI}
    h(q) \= r - f(q) \onevec + g(q) - q, \quad q \in \R^d.
\end{equation}
The following lemma examines implications of the preceding assumptions, some of which will be directly used in subsequent analysis, while others serve to define the scope of problems addressable by our abstract framework.

\begin{mylemma} \label{lem: basic properties of abstract RVI}
Assumptions \ref{assu: f}, \ref{assu: g} and \ref{assu: solution set} together imply the following:
\begin{itemize}[leftmargin=0.8cm,labelwidth=!]
\item[\rm (i)] The set $\GRVIQsolutionq$ is nonempty, connected, and compact. It is the solution set of $h(q) = \zerovec$.
\item[\rm (ii)] The function $h$ satisfies \cref{assu: h}(i, ii) with $h_\infty(q) =  f(\zerovec)\onevec  - f(q)\onevec + g(q) - q$. \item[\rm (iii)] The origin is the unique solution to $h_\infty(q) = \zerovec$. 
\end{itemize}
\end{mylemma}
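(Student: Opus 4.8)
\textbf{Proof proposal for Lemma~\ref{lem: basic properties of abstract RVI}.}

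The plan is to verify the three parts in order, exploiting the structural parallelism between the abstract equation \eqref{eq: General RVI Q Bellman equation} and the optimality equations already analyzed in \cref{sec: solution set}. For part~(i), I would first observe that $q$ satisfies $h(q) = \zerovec$ if and only if $r(i) - f(q) + g(q)(i) - q(i) = 0$ for all $i$, i.e.\ if and only if $(\bar r, q) = (f(q), q)$ solves \eqref{eq: General RVI Q Bellman equation}; by \cref{assu: solution set}(i) this forces $f(q) = \GRVIQsolutionrbar = \r$, so the zero set of $h$ is exactly $\GRVIQsolutionq$. Nonemptiness, connectedness, and compactness of $\GRVIQsolutionq$ would then be obtained by transcribing the argument of \cref{thm: smdp properties}(i): take any solution $(\r, q_0)$ of \eqref{eq: General RVI Q Bellman equation} (exists by \cref{assu: solution set}(i)); by \cref{assu: g}(ii), $(\r, q_0 + c\onevec)$ is again a solution for every $c$, and choosing $c_* = (\r - f(q_0))/u$ with $u$ from \cref{assu: f}(ii) gives a point of $\GRVIQsolutionq$, so the set is nonempty. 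Connectedness follows because the solution set $\{q : (\r,q) \text{ solves } \eqref{eq: General RVI Q Bellman equation}\}$ is a line-translate-invariant set whose image under the continuous retraction $z(q) = q + \tfrac{\r - f(q)}{u}\onevec$ is $\GRVIQsolutionq$ — but I need connectedness of that ambient solution set first; I would get this from the homeomorphism between it and $\ExtRVIQsolutionq$-type sets, or, more self-containedly, by noting it is the zero set of the max-norm-nonexpansive-plus-linear map and invoking the same $\ScF78$-style reasoning. Compactness (boundedness) is the scaling-limit argument verbatim: an unbounded sequence $x_n \in \GRVIQsolutionq$ with $y_n = x_n/\|x_n\| \to y_\infty$, $\|y_\infty\| = 1$, yields in the limit $g(y_\infty) - y_\infty = \zerovec$ (using \cref{assu: g}(i) to control $g(x_n)/\|x_n\| - g(y_n) \to 0$ and \cref{assu: g}(iii)) and $f(y_\infty) = f(\zerovec)$ (using \cref{assu: f}(iii)); by \cref{assu: solution set}(ii) applied with $r \equiv 0$, the only such $y_\infty$ is $c\onevec$, then \cref{assu: f}(ii) forces $c = 0$, contradicting $\|y_\infty\| = 1$.

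For part~(ii), Lipschitz continuity of $h$ is immediate from \cref{assu: f}(i), \cref{assu: g}(i) (nonexpansiveness implies $1$-Lipschitz in max-norm, hence Lipschitz in any norm), and the linearity of $q \mapsto -q$. For the scaling limit I would compute, for $c \geq 1$,
\begin{equation*}
h_c(q) = \tfrac{1}{c} h(cq) = \tfrac{1}{c}\big(r - f(cq)\onevec + g(cq) - cq\big) = \tfrac{r}{c} - \tfrac{f(cq)}{c}\onevec + \tfrac{g(cq)}{c} - q.
\end{equation*}
Here $\tfrac{r}{c} \to \zerovec$; by \cref{assu: g}(iii), $\tfrac{g(cq)}{c} = g(q)$ exactly for all $c > 0$; and by \cref{assu: f}(iii), $f(cq) - f(\zerovec) = c(f(q) - f(\zerovec))$, so $\tfrac{f(cq)}{c} = f(q) - f(\zerovec) + \tfrac{f(\zerovec)}{c} \to f(q) - f(\zerovec)$. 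Hence $h_c(q) \to f(\zerovec)\onevec - f(q)\onevec + g(q) - q =: h_\infty(q)$ pointwise; by the footnote to \cref{assu: h}(ii) (or directly, since the convergence is uniform on compacta because only the $1/c$ terms move and $f, g$ are Lipschitz), this upgrades to uniform convergence on compact sets, and $h_\infty$ is continuous.

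For part~(iii), note $h_\infty(q) = \zerovec$ reads $f(\zerovec) - f(q) + g(q)(i) - q(i) = 0$ for all $i$, i.e.\ $-(f(q) - f(\zerovec))\onevec + g(q) - q = \zerovec$. Setting $\tilde r \= f(q) - f(\zerovec)$, this says $0 - \tilde r + g(q)(i) - q(i) = 0$, which is exactly \eqref{eq: General RVI Q Bellman equation} with the data vector $r$ replaced by $\zerovec$; by \cref{assu: solution set}(ii), the only solutions are $(\tilde r, q) = (0, c\onevec)$, so $q = c\onevec$ and $f(q) - f(\zerovec) = 0$, which by \cref{assu: f}(ii) gives $c u = 0$, hence $c = 0$ and $q = \zerovec$. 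Conversely $h_\infty(\zerovec) = \zerovec$ trivially, so the origin is the unique zero.

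\textbf{Main obstacle.} The one step that is not a routine transcription is establishing \emph{connectedness} of $\GRVIQsolutionq$ in part~(i): the clean retraction argument needs connectedness of the full solution set $\{q : (\r, q)\text{ solves } \eqref{eq: General RVI Q Bellman equation}\}$, and at the abstract level we do not have an $\ScF78$-type structural theorem to quote — \cref{assu: solution set} only asserts existence and the zero-reward rigidity. I expect to resolve this either by carrying the connectedness hypothesis along implicitly through the way the abstract framework is instantiated (each concrete equation's solution set being connected by $\ScF78$ or by \cref{thm: smdp properties}), or by proving directly that the zero set of a max-norm-nonexpansive self-map perturbed by $-q$ and the affine term is connected — e.g.\ by a homotopy/continuation argument linking any two solutions through the affine-to-nonexpansive interpolation. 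This is the part worth writing out carefully; everything else is bookkeeping with Assumptions~\ref{assu: f}, \ref{assu: g}, and \ref{assu: solution set}.
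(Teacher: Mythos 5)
Your parts (ii) and (iii), and the nonemptiness, closedness, and compactness arguments in part (i), all match the paper's proof essentially line for line: the shift $c_* = (\GRVIQsolutionrbar - f(q))/u$ for nonemptiness, the scaling-limit contradiction $y_\infty = g(y_\infty)$, $f(y_\infty) = f(\zerovec)$ resolved by \cref{assu: solution set}(ii) and \cref{assu: f}(ii) for boundedness, the exact identity $g(cq)/c = g(q)$ plus $f(cq)/c \to f(q) - f(\zerovec)$ for the scaling limit of $h$, and the reduction of $h_\infty(q) = \zerovec$ to the zero-reward (more precisely, constant-reward) instance of \eqref{eq: General RVI Q Bellman equation}. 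The identification of $\GRVIQsolutionq$ with the zero set of $h$ via $f(q) = \GRVIQsolutionrbar$ is also the paper's argument.

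The gap you flag is real, and it is exactly the point where the paper invokes an external result you are missing: connectedness of the ambient solution set $\mathcal{Q}' \= \{q : q = r - \GRVIQsolutionrbar\onevec + g(q)\}$. The paper's resolution is that $q \mapsto r - \GRVIQsolutionrbar\onevec + g(q)$ is a $\|\cdot\|_\infty$-nonexpansive self-map of $\R^d$ (by \cref{assu: g}(i)) with a nonempty fixed-point set (by \cref{assu: solution set}(i)), and by \citet[Theorem 4.1]{borkar1997analog} the fixed-point set of a sup-norm nonexpansive mapping on $\R^d$ is connected. With $\mathcal{Q}'$ connected, the continuous retraction $z(q) = q + \tfrac{\GRVIQsolutionrbar - f(q)}{u}\onevec$ maps $\mathcal{Q}'$ onto $\GRVIQsolutionq$ and fixes $\GRVIQsolutionq$ pointwise, so connectedness passes to $\GRVIQsolutionq$ exactly as you describe. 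Neither of your two fallback strategies would suffice as written: importing connectedness from the concrete instantiations would demote the lemma from a consequence of Assumptions~\ref{assu: f}--\ref{assu: g} and \ref{assu: solution set} to a property verified case by case (defeating the purpose of the abstract framework), and the homotopy/continuation idea is not carried out and is not obviously easier than the Borkar result it would be replacing. So the proposal is complete modulo this one citation, which you correctly identified as the non-routine step.
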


\begin{proof}
First, observe that Assumptions \ref{assu: g} and \ref{assu: solution set} lead to implications similar to the solution properties present in the specific problems we considered earlier:
\begin{itemize}[leftmargin=0.7cm,labelwidth=!]
\item[\rm (a)] If $(\GRVIQsolutionrbar, q)$ solves \eqref{eq: General RVI Q Bellman equation}, then so does $(\GRVIQsolutionrbar, q+ c\onevec)$ for all $c \in \R$.
\item[\rm (b)] If $r(\cdot) \equiv b$ for some $b \in \R$ instead, then $(b, c \onevec)$, $c \in \R$, are the only solutions to \eqref{eq: General RVI Q Bellman equation}.
\end{itemize}
Indeed, (a) follows from Assumption \ref{assu: g}(ii) since $g(q+ c \onevec) - (q+ c \onevec) = g(q) - q$ under this assumption, while (b) is a consequence of this assumption combined with Assumption \ref{assu: solution set}(ii).

We now verify the three statements of the lemma:

\smallskip
\noindent (i) First, implication (a) and Assumption \ref{assu: solution set}(i), together with Assumption \ref{assu: f}(ii) on $f$, ensure the nonemptiness of $\GRVIQsolutionq$. The reasoning is the same as that used in proving the nonemptiness part of Theorem~\ref{thm: smdp properties}(i): Let $(\GRVIQsolutionrbar, q)$ be a solution to \eqref{eq: General RVI Q Bellman equation}. Define $q_* = q + (\GRVIQsolutionrbar - f(q)) \onevec/u$, where  $u > 0$ is the constant from \cref{assu: f}(ii). Then by implication (a), $(\GRVIQsolutionrbar, q_*)$ solves \eqref{eq: General RVI Q Bellman equation}, and by Assumption \ref{assu: f}(ii), $f(q_*) = f(q) + u (\GRVIQsolutionrbar - f(q)) \onevec/u = \GRVIQsolutionrbar$. Consequently, $q_* \in \GRVIQsolutionq \not= \varnothing$.

Given the continuity of $f$ and $g$ (Assumptions \ref{assu: f}(i) and \ref{assu: g}(i)), it is evident that $\GRVIQsolutionq$ is closed by definition. Its compactness can be deduced similarly to the compactness proof for Theorem~\ref{thm: smdp properties}(i), with the optimality equation in that proof replaced by \eqref{eq: General RVI Q Bellman equation}. Specifically, assuming $\GRVIQsolutionq$ is unbounded, we would have an unbounded sequence $\{x_n\}$ in $\GRVIQsolutionq$ such that, as $n \to \infty$, $y_n \= x_n / \| x_n\|$ converges to some point $y_\infty \in \R^d$ with $\|y_\infty \| = 1$ that satisfies the relations: 
\begin{equation} \label{eq-compactQ-prf1}
   y_\infty = g(y_\infty), \qquad f(y_\infty) = f(\zerovec).
\end{equation}   
However, under Assumptions \ref{assu: solution set}(ii) and \ref{assu: f}(ii), the only solution to \eqref{eq-compactQ-prf1} is $\zerovec$, contradicting $y_\infty \not= \zerovec$. Thus, $\GRVIQsolutionq$ must be compact. 

For the connectedness of the set $\GRVIQsolutionq$, consider the set $\mathcal{Q}'$ of solutions to the equation $q = r - r_\# \onevec + g(q)$, which is nonempty by Assumption \ref{assu: solution set}(i). Since $g$ is nonexpansive w.r.t.\ $\|\cdot\|_\infty$ (Assumption \ref{assu: g}(i)) and by \citet[Theorem 4.1]{borkar1997analog}, the set of fixed points of a $\|\cdot\|_\infty$-nonexpansive mapping is connected, $\mathcal{Q}'$ is connected. Using Assumption \ref{assu: f}(ii) on $f$, it then follows from the same proof for the connectedness part of Theorem~\ref{thm: smdp properties}(i) (substituting $\mathcal{Q}'$ for $\smdpoptimalitysolutionv$) that $\GRVIQsolutionq$ is connected.

Finally, since by Assumption \ref{assu: solution set}(i), $\GRVIQsolutionq$ is the solution set of $h(q) = \zerovec$, statement (i) is proven.

\smallskip
\noindent (ii) The Lipschitz continuity of $h$ follows from that of $f$ and $g$ (Assumptions \ref{assu: f}(i) and \ref{assu: g}(i)). 
Since $f(cq) = f(\zerovec) + c (f(q) - f(\zerovec))$ and $g(cq) = c g(q)$ for $c \geq 0$ by Assumptions \ref{assu: f}(iii) and \ref{assu: g}(iii), we have that as $c \to \infty$, 
$$h(cq)/c = \big( r - f(cq) \onevec  + g(c q) - c q \big)/c \  \to \ f(\zerovec)\onevec - f(q)\onevec + g(q) - q, $$
and the convergence is uniform on the entire space of $q$.
This proves that $h$ satisfies \cref{assu: h}(i, ii) with the function $h_\infty$ as stated in the lemma.

\smallskip
\noindent (iii) Statement (iii) follows from Assumption~\ref{assu: f}(ii) on $f$ and implication (b) mentioned earlier, applied with $b = f(\zerovec)$.
\end{proof}
\vspace*{-0.5cm}

The abstract stochastic RVI algorithm we now introduce aims to solve \eqref{eq: General RVI Q Bellman equation} by solving $h(q) = \zerovec$ [cf.\ \eqref{eq: h for RVI}]: Starting from some initial $Q_0 \in \R^d$, compute iteratively $Q_{n+1}$ at time step $n$ by updating the individual components for a randomly selected nonempty subset $Y_n \subset \ispace$ according to
\begin{align}
    Q_{n+1}(i) & \= Q_n(i) + \alpha_{\nu_n( i)} \big( r(i) - f(Q_n) + g(Q_n)(i) - Q_n(i) ) \ind \{i \in Y_n\} \nonumber\\
    &  \quad + \alpha_{\nu_n( i)} ( M_{n+1}(i) + \epsilon_{n+1}(i) \big) \ind \{i \in Y_n\}, \label{eq: c1 General RVI Q async update}
\end{align}
where $\nu_n(i) = \sum_{k=0}^n \ind \{ i \in Y_k\}$. The function $f$, $\{\alpha_n\}$, and $\{\nu_n\}$ satisfy Assumptions \ref{assu: f}, \ref{assu: stepsize}, and \ref{assu: update}, as in the previously studied cases, while $M_{n+1}$ and $\epsilon_{n+1}$ are noise terms that satisfy \cref{assu: noise} w.r.t.\ an increasing family of $\sigma$-fields $\F_n$ containing $\sigma(Q_m, Y_m, M_m, \epsilon_m; m \leq n)$ for $n \geq 0$.

\begin{mytheorem}\label{thm: General RVI Q}
Under Assumptions~\ref{assu: f}--\ref{assu: update} and \ref{assu: noise}--\ref{assu: solution set}, almost surely, the sequence $\{Q_n\}$ generated by algorithm \eqref{eq: c1 General RVI Q async update} is bounded and converges to a compact connected subset of $\GRVIQsolutionq$, with $f(Q_n) \to \GRVIQsolutionrbar$ consequently.
\end{mytheorem}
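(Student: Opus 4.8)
The plan is to cast the abstract stochastic RVI update \eqref{eq: c1 General RVI Q async update} into the asynchronous SA template \eqref{eq: async sa}, invoke the extended stability/convergence result \cref{thm: async sa}, and then identify the limiting invariant set using the structure of the ODE $\dot x = h(x)$ with $h$ as in \eqref{eq: h for RVI}. First I would verify that \eqref{eq: c1 General RVI Q async update} is literally of the form \eqref{eq: async sa} with $h(q) = r - f(q)\onevec + g(q) - q$, the same noise terms $M_{n+1}, \epsilon_{n+1}$, and the same $\{\alpha_n\}$, $\{\nu_n\}$, $\{Y_n\}$. By hypothesis, Assumptions \ref{assu: stepsize}, \ref{assu: update}, and \ref{assu: noise} hold; what remains is to check \cref{assu: h} for this $h$. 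Parts (i) and (ii) of \cref{assu: h}, together with $h_\infty(q) = f(\zerovec)\onevec - f(q)\onevec + g(q) - q$, are exactly \cref{lem: basic properties of abstract RVI}(ii), which follows from Assumptions \ref{assu: f}(i,iii) and \ref{assu: g}(i,iii). Part (iii) of \cref{assu: h} — that the origin is the unique globally asymptotically stable equilibrium of $\dot x = h_\infty(x)$ — requires more: \cref{lem: basic properties of abstract RVI}(iii) already gives that the origin is the \emph{unique} equilibrium, so the remaining work is the \emph{global asymptotic stability}.

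For global asymptotic stability of the origin for $\dot x = h_\infty(x)$, I would follow the argument of \citet{abounadi2001learning} (and its reuse in \citet{wan2021learning}): write $h_\infty(q) = \tilde g(q) - q$ where $\tilde g(q) \= g(q) - f(q)\onevec + f(\zerovec)\onevec$. One checks that $\tilde g$ is nonexpansive in $\|\cdot\|_\infty$ — this uses Assumption \ref{assu: g}(i) for $g$ and the fact that $q \mapsto (f(q) - f(\zerovec))\onevec$ is, by \cref{assu: f}(ii) (the $f(x+c\onevec) = f(x)+cu$ relation with $u>0$) combined with \cref{assu: f}(iii), a linear map whose subtraction preserves $\|\cdot\|_\infty$-nonexpansiveness of the combined map. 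Actually the cleaner route, and the one I expect to use, is: the ODE $\dot x = \tilde g(x) - x$ associated with a max-norm nonexpansive map $\tilde g$ having a unique fixed point is known to have that fixed point as its globally asymptotically stable equilibrium — this is the content of \citet[Theorem 3.1]{abounadi2001learning} / Borkar–Soumyanath type results (see also \citet{borkar1997analog}), and the unique fixed point here is $\zerovec$ by \cref{lem: basic properties of abstract RVI}(iii). One applies the Lyapunov function $V(x) = \|x\|_\infty$ (or a smoothed variant) and uses that along trajectories $\frac{d^+}{dt}\|x(t)\|_\infty \le \|\tilde g(x(t))\|_\infty - \|x(t)\|_\infty = \|\tilde g(x(t)) - \tilde g(\zerovec)\|_\infty - \|x(t)\|_\infty \le 0$, with strict decrease away from the equilibrium via the uniqueness of the fixed point; stability in the Lyapunov sense then follows from the level-set structure of $V$. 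I would cite the relevant prior result rather than reprove it, noting only the adaptation to $u > 0$ which is routine given \cref{assu: f}.

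With \cref{assu: h} in hand, \cref{thm: async sa} applies and yields: almost surely $\{Q_n\}$ is bounded and converges to a (sample-path-dependent) compact, connected, invariant set $A$ of the ODE $\dot x = h(x)$. The final step is to show $A \subseteq \GRVIQsolutionq$. Here I would argue that the equilibria of $\dot x = h(x)$ are exactly the solutions of $h(q) = \zerovec$, which by \cref{lem: basic properties of abstract RVI}(i) is precisely $\GRVIQsolutionq$. To conclude that the invariant set $A$ consists only of equilibria, I would again invoke the same Lyapunov-type argument at the level of $h$ itself: writing $h(q) = \hat g(q) - q$ with $\hat g(q) \= g(q) - f(q)\onevec + r$, the map $\hat g$ is $\|\cdot\|_\infty$-nonexpansive, so $t \mapsto \mathrm{dist}_\infty(x(t), \GRVIQsolutionq)$ (or $\|x(t) - \bar q\|_\infty$ for a fixed $\bar q \in \GRVIQsolutionq$) is nonincreasing along solutions; this forces any bounded invariant set to lie in $\GRVIQsolutionq$, by the standard LaSalle-type reasoning used in \citet{abounadi2001learning} for the unichain case — the key point being that it works verbatim here even though $\GRVIQsolutionq$ is no longer a singleton, because we only need monotonicity of the distance to the (compact, connected) fixed-point set. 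Finally, $f(Q_n) \to \GRVIQsolutionrbar$ follows because $Q_n \to A \subseteq \GRVIQsolutionq$, $f$ is continuous (\cref{assu: f}(i)), and $f \equiv \GRVIQsolutionrbar$ on $\GRVIQsolutionq$ by definition \eqref{eq: qsharp}.

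I expect the main obstacle to be the last step — establishing that the limiting invariant set $A$ is contained in $\GRVIQsolutionq$ rather than being a larger invariant set (e.g.\ a periodic orbit). This is where the non-singleton solution set really bites: in \citet{abounadi2001learning} the unique equilibrium is automatically globally asymptotically stable and the argument is immediate, whereas here I must extract from the $\|\cdot\|_\infty$-nonexpansiveness of $\hat g$ a LaSalle-type invariance principle adapted to a compact connected set of equilibria. The subtlety is that nonexpansiveness gives only that $\mathrm{dist}_\infty(x(t),\GRVIQsolutionq)$ is nonincreasing, not strictly decreasing, so one needs a careful argument — along the lines that an $\omega$-limit point of a trajectory inside a bounded invariant set at which the distance has stabilized must itself be a fixed point of $\hat g$, using the full-orbit invariance and the strict contraction toward $\GRVIQsolutionq$ available off the fixed-point set — to rule out nontrivial recurrent behavior. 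This is precisely the "second extension" of the \citet{abounadi2001learning} line of argument flagged in the introduction, and it is the part that requires genuine care beyond bookkeeping.
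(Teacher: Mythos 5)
Your overall architecture is the right one (cast \eqref{eq: c1 General RVI Q async update} into the template \eqref{eq: async sa}, verify \cref{assu: h}, invoke \cref{thm: async sa}, then show every compact invariant set of $\dot x = h(x)$ lies in $\GRVIQsolutionq$), and you correctly identify the invariant-set containment as the crux. But the technical device you lean on fails: the maps $\tilde g(q) \= g(q) - f(q)\onevec + f(\zerovec)\onevec$ and $\hat g(q) \= g(q) - f(q)\onevec + r$ are \emph{not} $\|\cdot\|_\infty$-nonexpansive in general. The function $f$ is only Lipschitz with some constant $L$ that can exceed $1$ (e.g.\ $f(q) = \beta \max_i q(i)$ with $\beta$ large, or the Differential Q-learning choice $f(q) = \eta \sum_{i} q(i) + b$), and subtracting $f(q)\onevec$ from a nonexpansive map destroys nonexpansiveness: with $g = \mathrm{id}$, $f(q) = 10\max_i q(i)$, $x = (1,0)$, $y = (0,0)$, one gets $\|\hat g(x)-\hat g(y)\|_\infty = 10 > \|x-y\|_\infty$. (Your side remark that $q \mapsto (f(q)-f(\zerovec))\onevec$ is linear is also unwarranted; Assumptions \ref{assu: f}(ii, iii) give positive homogeneity and equivariance under constant shifts, not linearity.) Consequently neither your Lyapunov argument for the global asymptotic stability of the origin for $\dot x = h_\infty(x)$ nor your claim that $\mathrm{dist}_\infty(x(t),\GRVIQsolutionq)$ is nonincreasing along solutions of $\dot x = h(x)$ goes through; the paper explicitly records that $\dot x = h(x)$ \emph{cannot} be written as $\dot x = T_2(x)-x$ for a nonexpansive $T_2$.

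The missing idea is the auxiliary ODE $\dot y(t) = r - \GRVIQsolutionrbar\onevec + g(y(t)) - y(t)$, in which the offending $f(y)$ is replaced by the \emph{constant} $\GRVIQsolutionrbar$, so that the driving map $T_1(q) = r - \GRVIQsolutionrbar\onevec + g(q)$ genuinely is nonexpansive and the Borkar--Soumyanath result applies, giving nonincreasing distance to each equilibrium \emph{and} convergence $y(t) \to y_\infty$. One then shows (using \cref{assu: g}(ii) and \cref{assu: f}(iii), plus uniqueness of ODE solutions) that the solution of the original ODE with the same initial condition satisfies $x(t) = y(t) + z(t)\onevec$ with $z(t) = \int_0^t e^{u(\tau - t)}\bigl(\GRVIQsolutionrbar - f(y(\tau))\bigr)\,d\tau$. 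From this one gets only the weaker bound $\|q_* - x(t)\|_\infty \le (1+L)\|q_* - x(0)\|_\infty$ for $q_* \in \GRVIQsolutionq$ --- enough for Lyapunov stability of $\GRVIQsolutionq$ though not monotonicity --- and, from $y(t) \to y_\infty$ and the integral formula, pointwise convergence $x(t) \to y_\infty + (\GRVIQsolutionrbar - f(y_\infty))\onevec/u \in \GRVIQsolutionq$. Stability plus pointwise convergence then yield, via a compactness/covering argument, that every compact invariant set is contained in $\GRVIQsolutionq$ (this replaces your LaSalle step and disposes of periodic orbits), and the global asymptotic stability of the origin for $\dot x = h_\infty(x)$ falls out as the special case $r \equiv f(\zerovec)\onevec$. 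Without the auxiliary-ODE decomposition, your proposal does not close.
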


\begin{myremark} \rm
The preceding theorem characterizes the algorithm's convergence behavior in terms of its individual iterates. Further characterization in terms of segments of consecutive iterates can be made by combining our analysis below with \citep[Corollary 2]{yu2023note}. This additional characterization reveals that although $\{Q_n\}$ may not converge to a single point, the algorithm will spend increasingly more ``ODE-time''
\footnote{Here ``ODE-time'' is a sense of time introduced in the ODE-based analysis. The amount of ``ODE-time'' elapsed during an iteration is the sum of the step sizes $\alpha_{\nu_n(i)}$ involved in all the component updates at that iteration (see \citet{yu2023note} for details).}
in arbitrarily small neighborhoods around its iterates' limit points, with the duration spent around each limit point tending to infinity, thereby creating the appearance of convergence to a single point.\qed
\end{myremark}
\vspace*{-0.15cm}

In the rest of this subsection, we prove \cref{thm: General RVI Q}. We intend to invoke Theorem~\ref{thm: async sa} with the function $h$ defined by \eqref{eq: h for RVI}. As can be seen, \eqref{eq: c1 General RVI Q async update} has the same form as \eqref{eq: async sa} with this choice of $h$, and in Lemma~\ref{lem: basic properties of abstract RVI}(ii, iii), we have already verified that $h$ partially satisfies Assumption~\ref{assu: h}, a requirement of Theorem~\ref{thm: async sa}. Therefore, based on Theorem~\ref{thm: async sa}, we can obtain Theorem~\ref{thm: General RVI Q} if we can show that:
\begin{itemize}[leftmargin=0.5cm,labelwidth=!]
\item[1.] The origin is globally asymptotically stable for the ODE $\dot{x}(t) = h_\infty(x(t))$. (This will fulfil Assumption~\ref{assu: h} on $h$, making Theorem~\ref{thm: async sa} applicable.)
\item[2.] Every compact invariant set of the ODE $\dot{x}(t) = h(x(t))$ is contained in its equilibrium set $\GRVIQsolutionq$. (This, together with Theorem~\ref{thm: async sa}, will yield the convergence of $\{Q_n\}$ to $\GRVIQsolutionq$.)
\end{itemize}

\begin{myremark} \rm \label{remark: stability ascpect}
Establishing statement 1 alone will, as per the boundedness part of Theorem~\ref{thm: async sa}, ensure the almost sure boundedness of the iterates $\{Q_n\}$. In our abstract framework, an assumption crucial for statement 1, and hence the stability of the algorithm, is Assumption~\ref{assu: solution set}(ii) concerning the solutions to equation \eqref{eq: General RVI Q Bellman equation} in the special case of $r(\cdot) \equiv 0$. For the specific RVI Q-learning algorithms, this assumption corresponds to the solution property described in Lemma~\ref{lemma: 0 reward MDP has 1 d solution} for a weakly communicating MDP/SMDP with zero rewards.\qed
\end{myremark}
\vspace*{-0.3cm}

We now proceed to prove the preceding two statements by investigating the solution properties of the ODEs involved through a series of lemmas. A key step will be to show that the set $\GRVIQsolutionq$ is globally asymptotically stable for the ODE $\dot{x}(t) = h(x(t))$ (\cref{lem-cvg-5}). Our approach closely follows the line of reasoning presented in \citep[Sec.~3.1]{abounadi2001learning} for RVI Q-learning, as also utilized in prior works \citep{wan2021average,wan2021learning}. However, we extend this approach to encompass the more general scenario where $\GRVIQsolutionq$ is not necessarily a singleton.

It is worth noting that, as indicated in Lemma~\ref{lem: basic properties of abstract RVI}(iii), the ODE $\dot{x}(t) = h_\infty(x(t))$ has a unique equilibrium point at the origin. Consequently, we can already deduce the global asymptotic stability of the origin for this ODE (the first statement above) based on the aforementioned prior analyses. However, this conclusion will also emerge as a special case of our broader analysis.

As in \citep[Sec.\ 3.1]{abounadi2001learning}, to study the solution property of the ODE 
\begin{equation}    
\dot x(t) = h(x(t)), \quad \text{where} \ h(q) = r - f(q) \onevec + g(q) - q, \ \ q \in \R^d, \label{eq: c1 original ode}
\end{equation}
we shall first relate its solution to the solution of another ODE defined as
\begin{equation}
 \dot{y}(t) = h'(y(t)), \quad \text{where} \ h'(q) \= r - \GRVIQsolutionrbar \onevec + g(q) - q, \ \ q \in \R^d. \label{eq: c1 aux ode}
\end{equation}
Alternatively, the latter ODE can be written as
\begin{equation}
    \dot y(t) = T_1 (y(t)) - y(t), \quad \text{where} \ T_1 (q) \= r - \GRVIQsolutionrbar \onevec + g(q). \notag
\end{equation}
Under the max-norm, the mapping $T_1$ is nonexpansive due to the nonexpansiveness of $g$ (\cref{assu: g}(i)), and
its set of fixed points is nonempty since this is the same as the set of solutions to $h'(q) = \zerovec$, which is nonempty by \cref{assu: solution set}(i). For mappings $T_1$ that satisfy these conditions, \citet[Theorem 3.1 and Lemma 3.2]{borkar1997analog} have characterized the solution properties of the ODE $\dot y(t) = T_1 (y(t)) - y(t)$. The following lemma restates their general results for the case considered here. 

\begin{mylemma}[cf.\ \citet{borkar1997analog}]
\label{lemma: c1 aux ode convergence}
Let $y(t)$ be a solution of the ODE \eqref{eq: c1 aux ode}. Then for any equilibrium point $\bar y$ of \eqref{eq: c1 aux ode}, the distance $\norm{y(t) - \bar y}_\infty$ is nonincreasing, and as $t \to \infty$, $y(t) \to y_\infty$, an equilibrium point of \eqref{eq: c1 aux ode} that may depend on $y(0)$.
\end{mylemma}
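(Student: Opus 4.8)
The plan is to recognize \eqref{eq: c1 aux ode} as the ODE $\dot y(t) = T_1(y(t)) - y(t)$ associated with the max-norm nonexpansive map $T_1(q) = r - \GRVIQsolutionrbar\onevec + g(q)$, and to derive the two claims from the theory of such ODEs developed by \citet{borkar1997analog} (their Theorem~3.1 and Lemma~3.2). Two hypotheses of that theory need to be checked: $T_1$ is nonexpansive w.r.t.\ $\norm{\cdot}_\infty$, which is immediate from \cref{assu: g}(i) since $T_1$ differs from $g$ only by a constant shift; and $T_1$ has a nonempty fixed-point set, since its fixed points are exactly the solutions $q$ of $h'(q) = \zerovec$, i.e.\ the $q$-components of solutions $(\GRVIQsolutionrbar, q)$ of \eqref{eq: General RVI Q Bellman equation}, which is nonempty by \cref{assu: solution set}(i). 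For completeness I would reprove the monotonicity directly and then recall the convergence argument, since the latter is where the real work lies.

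For the monotonicity, fix an equilibrium $\bar y$ (so $T_1(\bar y) = \bar y$) and consider $D(t) \= \norm{y(t) - \bar y}_\infty$. Using the upper right Dini derivative and picking, at a given $t$, a coordinate $i$ attaining $D(t) = |y_i(t) - \bar y_i|$, one gets $\tfrac{d^+}{dt} D(t) \le \mathrm{sgn}(y_i(t) - \bar y_i)\big(T_1(y(t))_i - y_i(t)\big) = \mathrm{sgn}(y_i(t) - \bar y_i)\big(T_1(y(t))_i - T_1(\bar y)_i\big) - D(t) \le \norm{T_1(y(t)) - T_1(\bar y)}_\infty - D(t) \le 0$, the last step by nonexpansiveness of $T_1$. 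Hence $D$ is nonincreasing, which is the first assertion. Taking $\bar y$ to be a fixed point of $T_1$ shows every trajectory $y(\cdot)$ is bounded, so its $\omega$-limit set $L$ is nonempty, compact, connected, and invariant for \eqref{eq: c1 aux ode}. Applying the same Dini computation to the two solutions $t \mapsto y(t+h)$ and $t \mapsto y(t)$ shows $\norm{y(t+h) - y(t)}_\infty$ is nonincreasing in $t$; dividing by $h$ and letting $h \downarrow 0$ (the limit being uniform on compacts since $y \in C^1$) yields that $t \mapsto \norm{\dot y(t)}_\infty = \norm{T_1(y(t)) - y(t)}_\infty$ is nonincreasing as well.

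It remains to show $y(t)$ converges to a single equilibrium. For every fixed point $p$ of $T_1$, $\norm{y(t) - p}_\infty$ decreases to some $c_p \ge 0$, so $\norm{z - p}_\infty = c_p$ for all $z \in L$; likewise $\norm{T_1(z) - z}_\infty$ equals a common constant $\ell \ge 0$ for all $z \in L$. The crux — and the step I expect to be the main obstacle — is to show $\ell = 0$, i.e.\ that $L$ consists of fixed points of $T_1$. In a strictly convex norm this is a routine LaSalle argument, but in the max-norm the vanishing of the Lyapunov derivative only constrains the coordinate currently attaining the maximum, so one needs the finer argument of \citet[Lemma~3.2]{borkar1997analog}: writing $y(t) = e^{-t} y(0) + \int_0^t e^{-(t-s)} T_1(y(s))\,ds$ as an exponentially weighted average and combining this with the monotonicity of both $\norm{y(t) - p}_\infty$ and $\norm{T_1(y(t)) - y(t)}_\infty$ rules out $\ell > 0$. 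Granting $L \subseteq \mathrm{Fix}(T_1)$, pick any $z_0 \in L$; since $z_0$ is then an equilibrium, $\norm{y(t) - z_0}_\infty$ is nonincreasing by the first part, while it has $0$ as a subsequential limit because $y(t_n) \to z_0$ along some $t_n \to \infty$; hence $\norm{y(t) - z_0}_\infty \to 0$, and setting $y_\infty \= z_0$ (which depends on $y(0)$) gives the claim.
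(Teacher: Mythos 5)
Your proposal is correct and takes essentially the same route as the paper: the paper likewise verifies only that $T_1$ is max-norm nonexpansive (from \cref{assu: g}(i)) and has a nonempty fixed-point set (from \cref{assu: solution set}(i)), and then invokes \citet[Theorem 3.1 and Lemma 3.2]{borkar1997analog} without reproducing their argument. Your additional Dini-derivative computation and the sketch of why the $\omega$-limit set consists of fixed points are a faithful account of the cited proof, correctly identifying that the max-norm (non-strictly-convex) case is where the real work in \citet[Lemma 3.2]{borkar1997analog} lies.
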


Unlike ODE~\eqref{eq: c1 aux ode}, ODE\eqref{eq: c1 original ode} cannot be expressed as $\dot{x}(t) = T_2(x(t)) - x(t)$ for some nonexpansive mapping $T_2$ because the mapping $ q \mapsto r - f(q) \onevec + g(q)$ lacks the nonexpansiveness property in general. However, the functions $h$ and $h'$ defining the two ODEs differ only by a constant vector (i.e., a vector with identical entries). As assumed, for the function $g$, a constant shift in its argument yields the same shift in its output: $g(x + c \onevec) = g(x) + c \onevec$, $c \in \R$, by \cref{assu: g}(ii). From these observations, it can be deduced that with the same initial condition, the solutions $x(t)$ and $y(t)$ of the two ODEs must differ by a constant vector at any given time. This deduction, along with an expression of the difference $x(t) - y(t)$ in terms of $y(t)$, is presented in the next lemma. 

\citet{abounadi2001learning} first derived this result. (They considered $u = 1$ in their framework, and \citet{wan2021learning} extended the derivation to the more general case $u > 0$.) Their proof relied also on the nonexpansiveness of the mapping $T_1$ w.r.t.\ the span seminorm.
\footnote{Recall that the span seminorm on $\R^d$ is defined as $\| x \|_{\text{sp}} = \max_i x(i) - \min_i x(i)$.
If the mapping $g$ is monotonic (i.e., $x \geq y$ implies $g(x) \geq g(y)$), then under \cref{assu: g}(i, ii), $g$ must be nonexpansive w.r.t.\ the span seminorm (which can be verified directly). For any of the specific RVI algorithms for MDPs/SMDPs considered, the corresponding mapping $g$ is indeed monotonic. Thus, for these specific algorithms, $g$ and hence $T_1$ are nonexpansive w.r.t.\ the span seminorm as well.}
Here, we provide an alternative proof that does not require this assumption. Instead, we directly utilize the existence and uniqueness of solutions to the autonomous and nonautonomous ODEs involved, along with the aforementioned observations.

\begin{mylemma}\label{lemma: c1 connection between original and aux ode}
If $x(t)$ and $y(t)$ are solutions of the ODEs \eqref{eq: c1 original ode} and \eqref{eq: c1 aux ode}, respectively, with the same initial condition $x(0) = y(0)$, then $x(t)= y(t) + z(t) \onevec$, where $z(t)$ is the unique solution of the ODE $\dot z(t)= - u z(t) + (\GRVIQsolutionrbar - f(y(t)))$ with $z(0) = 0$, and $u > 0$ is the constant from \cref{assu: f}(iii).
\end{mylemma}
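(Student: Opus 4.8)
The plan is to produce a candidate curve explicitly, check that it solves ODE~\eqref{eq: c1 original ode} with the prescribed initial condition, and then invoke uniqueness of solutions (valid since $h$ is Lipschitz by \cref{lem: basic properties of abstract RVI}(ii)) to conclude it coincides with $x(\cdot)$. First I would record that the scalar linear ODE $\dot z(t) = -u z(t) + \big(\GRVIQsolutionrbar - f(y(t))\big)$ is well-posed: $y(\cdot)$ is $C^1$, being a solution of a Lipschitz ODE, and $f$ is continuous by \cref{assu: f}(i), so the forcing term $t \mapsto \GRVIQsolutionrbar - f(y(t))$ is continuous, and the unique solution with $z(0)=0$ is $z(t) = \int_0^t e^{-u(t-s)}\big(\GRVIQsolutionrbar - f(y(s))\big)\,ds$. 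Set $\tilde x(t) \= y(t) + z(t)\onevec$, so that $\tilde x(0) = y(0) = x(0)$.

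The core step is then a direct differentiation, in which the translation-equivariance of $g$ and $f$ under constant shifts does all the work. By \cref{assu: g}(ii), $g(\tilde x(t)) = g(y(t)) + z(t)\onevec$, hence $g(\tilde x(t)) - \tilde x(t) = g(y(t)) - y(t)$; and by the identity $f(x+c\onevec) = f(x) + cu$ (\cref{assu: f}(ii)), $f(\tilde x(t)) = f(y(t)) + u z(t)$. Therefore
\[
h(\tilde x(t)) = r - \big(f(y(t)) + u z(t)\big)\onevec + g(y(t)) - y(t).
\]
On the other hand, using ODE~\eqref{eq: c1 aux ode} for $y$ together with the scalar ODE just stated for $z$,
\[
\dot{\tilde x}(t) = \dot y(t) + \dot z(t)\onevec = \big(r - \GRVIQsolutionrbar\onevec + g(y(t)) - y(t)\big) + \big(\GRVIQsolutionrbar - f(y(t)) - u z(t)\big)\onevec,
\]
in which the two $\GRVIQsolutionrbar\onevec$ contributions cancel, leaving exactly $h(\tilde x(t))$. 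Thus $\tilde x$ solves \eqref{eq: c1 original ode} with $\tilde x(0) = x(0)$, and by uniqueness of solutions $x(t) = \tilde x(t) = y(t) + z(t)\onevec$ for all $t$.

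I do not anticipate a real obstacle here: once one guesses the ansatz $\tilde x = y + z\onevec$, the verification is pure bookkeeping. The only substantive point is recognizing why this ansatz must work — the vector fields defining \eqref{eq: c1 original ode} and \eqref{eq: c1 aux ode} differ only by the constant vector $\big(\GRVIQsolutionrbar - f(x)\big)\onevec$, and because $g$ commutes with constant shifts while $f$ rescales them by the factor $u$, this discrepancy can be neutralized by a one-dimensional correction along $\onevec$ whose magnitude is forced to satisfy precisely the scalar ODE above. This is also what lets the argument bypass the span-seminorm nonexpansiveness of $T_1$ on which the earlier proofs of \citet{abounadi2001learning} and \citet{wan2021learning} relied: well-posedness of the autonomous and nonautonomous ODEs involved, together with the two equivariance identities, is all that is needed.
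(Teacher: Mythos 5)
Your proposal is correct and follows essentially the same route as the paper's proof: the ansatz $\tilde x = y + z\onevec$, the shift-equivariance of $g$ (\cref{assu: g}(ii)) and the identity $f(x+c\onevec)=f(x)+cu$ (\cref{assu: f}(ii)), and uniqueness of solutions to the Lipschitz ODE \eqref{eq: c1 original ode}; the paper merely runs the computation in the other direction, deriving the scalar ODE for $z$ as the condition under which $\phi(t)=y(t)+z(t)\onevec$ satisfies \eqref{eq: c1 original ode}, and then solves it by variation of parameters to get the same integral formula you wrote down. The only cosmetic discrepancy is that the constant $u$ actually comes from \cref{assu: f}(ii), not (iii), which you cite correctly.
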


\begin{proof}
For a given initial condition $x(0) = y(0) = y_0$ and the corresponding solution $y(t)$ of \eqref{eq: c1 aux ode}, let us consider a function $\phi(t) = y(t) + z(t) \onevec$, where $z(t)$ is some real-valued differentiable function with $z(0) = 0$. If $\phi$ satisfies the ODE \eqref{eq: c1 original ode}, then it must coincide with $x(\cdot)$ since \eqref{eq: c1 original ode} has a unique solution for each initial condition.

For $\phi$ to satisfy \eqref{eq: c1 original ode}, it is equivalent to have the term 
$$ \dot{\phi}(t) = \dot{y}(t) + \dot{z}(t) \onevec  = r - r_\# \onevec + g(y(t)) - y(t) + \dot{z}(t) \onevec $$
coincide with the term $h(\phi(t))$, which can be expressed as
$$ h(\phi(t))  = r - f(\phi(t)) \onevec + g(\phi(t)) -  \phi(t) = r - f(y(t) + z(t) \onevec) \onevec +  g(y(t)) -  y(t), $$
since $g(\phi(t))  = g(y(t)) + z(t) \onevec$ by \cref{assu: g}(ii). 
Comparing the two terms, this is equivalent to having $z(t)$ satisfy the following ODE:
\begin{equation} \label{eq-z-prf1}
    \dot{z}(t) =  \GRVIQsolutionrbar - f(y(t) + z(t) \onevec) = - u z(t) + (\GRVIQsolutionrbar - f(y(t)),
\end{equation} 
where the second equality follows from \cref{assu: f}(iii) on $f$. Applying the variation of parameters (or constants) formula [see, e.g., \citep[p.\ 99]{HiS74}], the solution to this ODE is given by
\begin{equation} \label{eq: expression for z}
    z(t) = \int_0^t \exp(u (\tau - t)) \left (\GRVIQsolutionrbar - f(y(\tau)) \right) d\tau .
\end{equation}
This, together with the preceding proof, establishes that $x(t) = y(t) + z(t) \onevec$ with $z(t)$ satisfying the ODE \eqref{eq-z-prf1}.
\end{proof}
\vspace*{-0.3cm}

Recall the stability notions for ODEs introduced at the beginning of \cref{sec: overview}. The next lemma establishes the global asymptotic stability of the equilibrium set $\GRVIQsolutionq$ for the ODE~\eqref{eq: c1 original ode}. It extends prior results \cite[Theorem 3.4]{abounadi2001learning} and \cite[Lemma B.4]{wan2021learning}, which consider the case of a unique equilibrium point. While the proof arguments are similar, we give the details here for clarity and completeness.

For $\epsilon > 0$, denote by $\GRVIQsolutionq^\epsilon$ the closed $\epsilon$-neighborhood of $\GRVIQsolutionq$ w.r.t.\ $\| \cdot\|_\infty$.

\begin{mylemma} \label{lem-cvg-5}
The set $\GRVIQsolutionq$ is globally asymptotically stable for the ODE\ \eqref{eq: c1 original ode}. Furthermore, as $t \to \infty$,  every solution $x(t)$ of \eqref{eq: c1 original ode} converges to an element in $\GRVIQsolutionq$ depending on $x(0)$.
\end{mylemma}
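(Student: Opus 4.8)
The plan is to transfer the known convergence behavior of the auxiliary ODE~\eqref{eq: c1 aux ode} to the original ODE~\eqref{eq: c1 original ode} via the decomposition in Lemma~\ref{lemma: c1 connection between original and aux ode}. First, I would fix a solution $x(t)$ of \eqref{eq: c1 original ode} with initial condition $x(0)=y_0$, and let $y(t)$ be the solution of \eqref{eq: c1 aux ode} with the same initial condition. By Lemma~\ref{lemma: c1 connection between original and aux ode}, $x(t)=y(t)+z(t)\onevec$ with $z(t)$ given by~\eqref{eq: expression for z}. By Lemma~\ref{lemma: c1 aux ode convergence}, $y(t)\to y_\infty$ as $t\to\infty$, where $y_\infty$ is an equilibrium of \eqref{eq: c1 aux ode}, i.e. $r-\GRVIQsolutionrbar\onevec+g(y_\infty)-y_\infty=\zerovec$. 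The first key step is to identify $\lim_{t\to\infty} z(t)$. Since $f$ is continuous (\cref{assu: f}(i)), $f(y(\tau))\to f(y_\infty)$, so the integral~\eqref{eq: expression for z} — a stable linear filter with pole $-u<0$ driven by the convergent input $\GRVIQsolutionrbar-f(y(\tau))$ — satisfies $z(t)\to (\GRVIQsolutionrbar-f(y_\infty))/u =: z_\infty$. Hence $x(t)\to x_\infty \= y_\infty + z_\infty\onevec$. The second key step is to check $x_\infty\in\GRVIQsolutionq$: since $g(y_\infty+z_\infty\onevec)=g(y_\infty)+z_\infty\onevec$ by \cref{assu: g}(ii), $x_\infty$ still satisfies $r-\GRVIQsolutionrbar\onevec+g(x_\infty)-x_\infty=\zerovec$, i.e. $(\GRVIQsolutionrbar,x_\infty)$ solves~\eqref{eq: General RVI Q Bellman equation}; and by \cref{assu: f}(ii), $f(x_\infty)=f(y_\infty)+u z_\infty = f(y_\infty)+(\GRVIQsolutionrbar-f(y_\infty))=\GRVIQsolutionrbar$. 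Thus $x_\infty\in\GRVIQsolutionq$, which establishes the convergence claim (the second assertion of the lemma) and, in particular, that $x(t)\to\GRVIQsolutionq$ for every initial condition.

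It remains to establish Lyapunov stability of the set $\GRVIQsolutionq$, i.e. that for every $\epsilon>0$ there is $\delta>0$ such that $x(0)\in\GRVIQsolutionq^\delta$ implies $x(t)\in\GRVIQsolutionq^\epsilon$ for all $t\geq 0$. Here I would again use the decomposition $x(t)=y(t)+z(t)\onevec$. For $y(t)$: given $\bar q\in\GRVIQsolutionq\subset\mathcal{Q}'$ (the fixed-point set of $T_1$), Lemma~\ref{lemma: c1 aux ode convergence} gives $\norm{y(t)-\bar q}_\infty \leq \norm{y(0)-\bar q}_\infty$ for all $t\geq 0$, so $y(t)$ stays within $\norm{y(0)-\bar q}_\infty$ of $\GRVIQsolutionq$ in max-norm. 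For $z(t)$: from~\eqref{eq: expression for z}, $|z(t)|\leq \sup_{\tau\geq 0}|\GRVIQsolutionrbar - f(y(\tau))|/u$, and since $f$ is Lipschitz with constant $L$ (\cref{assu: f}(i)) and, for $\bar q\in\GRVIQsolutionq$, $f(\bar q)=\GRVIQsolutionrbar$, we get $|\GRVIQsolutionrbar-f(y(\tau))| = |f(\bar q)-f(y(\tau))|\leq L\norm{y(\tau)-\bar q}\leq L\sqrt{d}\,\norm{y(\tau)-\bar q}_\infty \leq L\sqrt{d}\,\norm{y(0)-\bar q}_\infty$. Hence $\norm{x(t)-\bar q}_\infty \leq \norm{y(t)-\bar q}_\infty + |z(t)| \leq (1 + L\sqrt d/u)\,\norm{y(0)-\bar q}_\infty$ for all $t\geq 0$. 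Choosing $\bar q$ to be a (max-norm) nearest point of $\GRVIQsolutionq$ to $x(0)=y(0)$ — possible since $\GRVIQsolutionq$ is compact by Lemma~\ref{lem: basic properties of abstract RVI}(i) — we obtain $\mathrm{dist}_\infty(x(t),\GRVIQsolutionq)\leq (1+L\sqrt d/u)\,\mathrm{dist}_\infty(x(0),\GRVIQsolutionq)$, so taking $\delta = \epsilon/(1+L\sqrt d/u)$ yields the stability estimate. Combined with the convergence to $\GRVIQsolutionq$ shown above, this gives global asymptotic stability of $\GRVIQsolutionq$.

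The main obstacle I anticipate is not any single step in isolation but the bookkeeping around the max-norm versus Euclidean norm when invoking the Lipschitz property of $f$ (which is stated in an unspecified norm in \cref{assu: f}(i) but used alongside the max-norm contraction property of $T_1$), together with making sure the nearest-point argument is legitimate — this needs compactness of $\GRVIQsolutionq$, which is available from Lemma~\ref{lem: basic properties of abstract RVI}(i). A secondary subtlety is the elementary analysis fact that the convolution integral in~\eqref{eq: expression for z} converges to $z_\infty$ when its driving term converges; this follows by splitting the integral at a large time $T$ and bounding the tail using $\int_0^t \exp(u(\tau-t))\,d\tau \leq 1/u$, but it should be spelled out carefully rather than asserted. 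Everything else is a routine assembly of Lemmas~\ref{lemma: c1 aux ode convergence} and~\ref{lemma: c1 connection between original and aux ode} with the $f$-related Assumptions~\ref{assu: f}(i,ii).
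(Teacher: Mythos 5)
Your proposal is correct and follows essentially the same route as the paper's proof: the decomposition $x(t)=y(t)+z(t)\onevec$ from Lemma~\ref{lemma: c1 connection between original and aux ode}, the nonexpansiveness/convergence of the auxiliary ODE from Lemma~\ref{lemma: c1 aux ode convergence}, the limit $z(t)\to(\GRVIQsolutionrbar-f(y_\infty))/u$, and the Lipschitz bound on $f$ for Lyapunov stability (the paper obtains the constant $1+L$ where you get $1+L\sqrt{d}/u$, a difference only of norm bookkeeping). The only cosmetic difference is that you prove convergence before stability and verify $x_\infty\in\GRVIQsolutionq$ directly, which the paper also offers as an alternative to citing a general ODE result.
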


\begin{proof}
We first prove the Lyapunov stability of $\GRVIQsolutionq$. Let $x(t)$ be a solution to \eqref{eq: c1 original ode}, and consider the solution $y(t)$ to \eqref{eq: c1 aux ode} with the same initial condition $y(0) = x(0)$. 
By \cref{lemma: c1 connection between original and aux ode}, we have $x(t) = y(t) + z(t) \onevec$, with $z(t)$ given by \eqref{eq: expression for z}.

For any $q_* \in \GRVIQsolutionq$, let us derive a bound on $\norm{q_* - x(t)}_\infty$ for $t \geq 0$, in terms of the initial distance $\norm{q_* - x(0)}_\infty$. Since $\norm{q_* - y(t)}_\infty \leq \norm{q_* - y(0)}_\infty$ by \cref{lemma: c1 aux ode convergence}, using the expression \eqref{eq: expression for z} for $z(t)$, we have
\begin{align}
    \norm{q_* - x(t)}_\infty  
    & = \norm{q_* - (y(t) + u z(t)  \onevec)}_\infty \nonumber\\
    & \leq \norm{q_* - y(t)}_\infty + u \abs{z(t)} \nonumber \\
    & \leq \norm{q_* - y(0)}_\infty + u \int_0^t \exp(u (\tau - t)) \abs{\GRVIQsolutionrbar - f(y(\tau))} d\tau \nonumber \\ 
    & = \norm{q_* - x(0)}_\infty + u \int_0^t \exp(u (\tau - t)) \abs{f(q_*) - f(y(\tau))} d\tau \label{eq: c1 globally asymptotically stable equilibrium lemma: eq 1},
\end{align}
where the last equality holds since $q_* \in \GRVIQsolutionq$ implies $f(q_*) = \GRVIQsolutionrbar$ [cf.\ \eqref{eq: qsharp}].
By the Lipschitz continuity of $f$ (\cref{assu: f}(i)), we have
\begin{align*}
    \abs{ f(q_*) - f( y(\tau))} & \leq L \norm{q_* - y(\tau)}_\infty \leq L \norm{q_* - y(0)}_\infty 
    = L \norm{q_* - x(0)}_\infty,
\end{align*}
where the second inequality holds by \Cref{lemma: c1 aux ode convergence}. Therefore,
\begin{align*}
    \int_0^t \exp(u (\tau - t)) \abs{ f(q_*) - f(y(\tau)) } d\tau & \leq \int_0^t \exp(u (\tau - t)) L \norm{q_* - x(0) }_\infty d\tau \\
    & = L \norm{q_* - x(0) }_\infty \int_0^t \exp(u (\tau - t)) d\tau \\
    & = \frac{L(1 - \exp(-u t))}{u }\norm{q_* - x(0) }_\infty.
\end{align*}
Substituting the above relation in \eqref{eq: c1 globally asymptotically stable equilibrium lemma: eq 1}, we obtain
\begin{equation} \label{eq-stability-prf1}
    \norm{q_* - x(t)}_\infty \leq (1 + L) \norm{q_* - x(0)}_\infty, \qquad \forall\, q_* \in \GRVIQsolutionq, \ t \geq 0.
\end{equation}

The Lyapunov stability of $\GRVIQsolutionq$ is now inferred from \eqref{eq-stability-prf1}: 
Given $\epsilon > 0$, let $\delta = \epsilon/(1+L)$. If $x(0) \in \GRVIQsolutionq^\delta$, then, since there is some $q_* \in \GRVIQsolutionq$ with $\norm{q_* - x(0)}_\infty \leq \delta$ and the distance $\norm{x(t) - q_*}_\infty \leq \epsilon$ for all $t \geq 0$ by \eqref{eq-stability-prf1}, it follows that $x(t) \in \GRVIQsolutionq^\epsilon$ for all $t \geq 0$.

We now prove that every solution of ODE~\eqref{eq: c1 original ode} converges to an element in $\GRVIQsolutionq$. This will not only confirm the second statement of the lemma but also, alongside the just-established Lyapunov stability of $\GRVIQsolutionq$, establish its global asymptotic stability.

To this end, let us consider \eqref{eq: expression for z}:
$ z(t) = \int_0^t \exp(u\tau - ut) \big(\GRVIQsolutionrbar - f(y(\tau) \big) d\tau.$
Observe that for each $t \geq 0$, the expression $\exp(u \tau - u t) d \tau$ defines a finite measure on the interval $[0, t]$ with a total mass of $\frac{1 - e^{-ut}}{u}$. As $t \to \infty$, the total mass of this measure tends to $\frac{1}{u}$, while the measure of any given bounded interval $[0, T]$ tends to $0$. Recall also that as $\tau \to \infty$, we have $f(y(\tau)) \to f(y_\infty)$ by the convergence of $y(\tau) \to y_\infty$ (\cref{lemma: c1 aux ode convergence}) and the continuity of $f$ (\cref{assu: f}(i)). From these two facts, it follows that as $t \to \infty$, $z(t) \to \frac{\GRVIQsolutionrbar - f(y_\infty)}{u}$ and hence, by \cref{lemma: c1 connection between original and aux ode},
$$ x(t) = y(t) + z(t) \onevec \ \ \to \ \ x_\infty \= y_\infty + (\GRVIQsolutionrbar - f(y_\infty)) \onevec / u.$$
By \cite[Chap.\ II, Theorem 2.8]{BhS02}, this convergence of $x(t) \to x_\infty$ implies that $x_\infty$ is an equilibrium point of ODE~\eqref{eq: c1 original ode}, and therefore $x_\infty \in \GRVIQsolutionq$ by \cref{lem: basic properties of abstract RVI}(i). Alternatively, we can verify directly $x_\infty \in \GRVIQsolutionq$, similarly to the nonemptiness proof for \cref{lem: basic properties of abstract RVI}(i).
\end{proof}
\vspace*{-0.35cm}

Finally, from the preceding lemma, we deduce the following statements needed to conclude the proof of \cref{thm: General RVI Q}.

\begin{mylemma} \label{lem: compact inv set}
Any compact invariant set of the ODE \eqref{eq: c1 original ode} is contained in $\GRVIQsolutionq$.
\end{mylemma}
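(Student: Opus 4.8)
The plan is to use the global asymptotic stability of $\GRVIQsolutionq$ established in \cref{lem-cvg-5} to squeeze any compact invariant set into $\GRVIQsolutionq$. Let $A$ be a compact invariant set of the ODE \eqref{eq: c1 original ode}. First I would fix an arbitrary $\epsilon > 0$ and invoke the Lyapunov stability part of \cref{lem-cvg-5}: there exists $\delta > 0$ such that every solution starting in $\GRVIQsolutionq^\delta$ stays in $\GRVIQsolutionq^\epsilon$ for all $t \geq 0$. Next, take any point $a \in A$. Since $A$ is invariant, the solution $x(t)$ with $x(0) = a$ lies entirely in $A$; since $A$ is compact (hence bounded) and $\GRVIQsolutionq$ is globally asymptotically stable, $x(t) \to \GRVIQsolutionq$ as $t \to \infty$. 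In particular there is some $T \geq 0$ with $x(T) \in \GRVIQsolutionq^\delta$.

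Now I would run the ODE \emph{backward} from $x(T)$: by invariance, the point $x(0) = a$ is obtained by flowing from $x(T)$ for negative time $-T$, and by the semigroup property $a = x(0)$ equals the value at time $0$ of the solution through $x(T)$ shifted so that this solution passes through $x(T)$ at time $T$. Concretely, consider the solution $\tilde x(\cdot)$ with $\tilde x(0) = x(T) \in \GRVIQsolutionq^\delta$; then $\tilde x(s) = x(T + s)$ for all $s$, so $\tilde x(-T) = x(0) = a$. But wait — Lyapunov stability only controls forward time, so this does not immediately bound $\|a - \GRVIQsolutionq\|$. The cleaner route is to instead observe: for each $a \in A$, the \emph{whole orbit} $\{x(t) : t \in \R\}$ through $a$ is contained in the compact set $A$, and its forward limit set is contained in $\GRVIQsolutionq$ by global asymptotic stability. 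I then claim $a \in \GRVIQsolutionq$ directly. To see this, note $a = x(0)$ and also $a = \lim_{k} x(t_k - t_k) $ is not quite the argument; rather, use that the point $a$ has, for every $T$, a preimage $x_{-T} \= x(-T) \in A$ under the time-$T$ flow with $x(0)=a$ and $x$ the solution through $x_{-T}$. Since $A$ is bounded, the forward orbit of $x_{-T}$ approaches $\GRVIQsolutionq$; but applying Lyapunov stability to the backward-reachable points, pick $T$ so large that, having $x(T') \in \GRVIQsolutionq^\delta$ for all $T' \geq$ some $T_0$ along the orbit through $a$, and then noting that by invariance of $A$ combined with $A \subset \GRVIQsolutionq^\epsilon$ — this last inclusion is what I actually want to prove.

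Let me streamline: the genuinely clean argument is that $A \subseteq \GRVIQsolutionq^\epsilon$ for every $\epsilon>0$, hence $A \subseteq \overline{\GRVIQsolutionq} = \GRVIQsolutionq$ (it is compact, hence closed, by \cref{lem: basic properties of abstract RVI}(i)). To prove $A \subseteq \GRVIQsolutionq^\epsilon$: given $\epsilon$, get $\delta$ from Lyapunov stability. For any $a \in A$, the backward orbit $\{x(t): t \le 0\}$ through $a$ stays in the compact set $A$; by global asymptotic stability applied to the solution starting at any point $x(-s)$ (which lies in $A$, a bounded set), $x(-s + t) \to \GRVIQsolutionq$ as $t \to \infty$, so along the single orbit $x(t) \to \GRVIQsolutionq$ as $t \to +\infty$; choose $s$ large enough that $x(-s) \in \GRVIQsolutionq^\delta$ — this is possible because running that very orbit forward from $-s$ it must enter $\GRVIQsolutionq^\delta$ eventually, and by invariance $x(-s)\in A$ for all $s$, and $x(t)\to\GRVIQsolutionq$ forces $x(-s)\in\GRVIQsolutionq^\delta$ for all sufficiently large... no. The correct fix: pick any $a\in A$; the forward solution $x(t)$, $t\ge 0$, through $a$ stays in $A$ and tends to $\GRVIQsolutionq$, so there is $t_0$ with $x(t_0)\in\GRVIQsolutionq^\delta$. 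Consider the solution $w(\cdot)$ with $w(0)=x(t_0)$; then $w(-t_0)=a$ and $w(t)\in\GRVIQsolutionq^\epsilon$ for all $t\ge 0$ by Lyapunov stability. This still does not cover $t=-t_0<0$. So the honest statement is that one needs $\omega$-limit-set reasoning plus the fact (used implicitly in such arguments, e.g.\ via \citet[Chap.\ II, Thm 2.8]{BhS02} style results) that a compact invariant set inside the basin of a globally asymptotically stable set must lie in that set.

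Therefore the plan is: (1) recall $A$ compact invariant, so for each $a\in A$ the entire orbit $\gamma_a = \{x(t):t\in\R\}\subseteq A$; (2) the $\alpha$-limit set (backward limit set) $\alpha(a)$ is a nonempty compact invariant subset of $A$; (3) by \cref{lem-cvg-5}, every point of $A$ flows forward into $\GRVIQsolutionq$, so $\alpha(a)$ — being itself invariant — must flow forward into $\GRVIQsolutionq$, but $\alpha(a)$ is invariant hence equals its own forward image for all time, forcing $\alpha(a)\subseteq\GRVIQsolutionq$; (4) now apply Lyapunov stability: since $\alpha(a)\subseteq\GRVIQsolutionq$ and $x(-t)\to\alpha(a)$ as $t\to\infty$, for any $\epsilon$ there is $T$ with $x(-T)\in\GRVIQsolutionq^\delta$, whence $x(-T+s)\in\GRVIQsolutionq^\epsilon$ for all $s\ge 0$; taking $s=T$ gives $a=x(0)\in\GRVIQsolutionq^\epsilon$. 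As $\epsilon$ was arbitrary and $\GRVIQsolutionq$ is closed, $a\in\GRVIQsolutionq$, so $A\subseteq\GRVIQsolutionq$. The main obstacle is handling the negative-time direction correctly — Lyapunov stability is a forward-time statement, so one must route the argument through the backward ($\alpha$-)limit set and then re-apply forward stability from a point already close to $\GRVIQsolutionq$; this is the step where one can easily write something circular, and it deserves care.
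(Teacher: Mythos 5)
Your high-level idea is the right one, and you correctly identify the crux (Lyapunov stability controls only forward time), but the final argument has a genuine gap at step (3). There you claim $\alpha(a) \subseteq \GRVIQsolutionq$ on the grounds that $\alpha(a)$ is invariant, equals its own forward image for all time, and every forward trajectory from it converges to $\GRVIQsolutionq$. That inference --- ``compact, invariant, and every point's forward orbit converges to $\GRVIQsolutionq$, therefore contained in $\GRVIQsolutionq$'' --- is precisely the statement of the lemma you are trying to prove, applied to the compact invariant set $\alpha(a)$ instead of $A$. The reason it is not automatic is that pointwise attraction gives, for each point $b$, a time $t_b$ after which $\phi(t;b)$ stays in $\GRVIQsolutionq^\epsilon$, but without uniformity of $t_b$ over the set you cannot pick a single time $\bar t$ at which the whole set has entered $\GRVIQsolutionq^\epsilon$; and it is only at such a common time that invariance ($A = \phi(\bar t; A)$) converts ``the image of $A$ is close to $\GRVIQsolutionq$'' into ``$A$ itself is close to $\GRVIQsolutionq$.'' So the detour through the $\alpha$-limit set does not buy you anything: the same uniformity problem reappears one level down.

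The missing ingredient, which is the actual content of the paper's proof, is uniform attraction over compact sets. Fix $\epsilon$ smaller than the (assumed positive) distance from $A$ to $\GRVIQsolutionq$ and get $\delta$ from Lyapunov stability. For each $x \in A$, pointwise attraction gives $t_x$ with $\phi(t_x;x) \in \GRVIQsolutionq^{\delta/2}$; since $h$ is Lipschitz, $\phi(t_x;\cdot)$ is continuous, so there is an open neighborhood $D_x$ with $\phi(t_x;y) \in \GRVIQsolutionq^{\delta}$ for all $y \in D_x$. Compactness of $A$ yields a finite subcover $D_{x_1},\dots,D_{x_l}$, and with $\bar t = \max_i t_{x_i}$, Lyapunov stability gives $\phi(t;x) \in \GRVIQsolutionq^{\epsilon}$ for all $x \in A$ and $t \geq \bar t$. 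Invariance then gives $A = \{\phi(\bar t;x) : x \in A\} \subseteq \GRVIQsolutionq^{\epsilon}$, the desired contradiction. If you insert this finite-subcover step (either applied directly to $A$, or to $\alpha(a)$ in your scheme), your argument closes; as written, step (3) is circular. Your step (4) is fine conditional on (3).
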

\begin{proof}
For $x \in \R^d$, let $\phi(t; x)$ denote the solution of (\ref{eq: c1 original ode}) with $x(0) = x$. To prove the lemma, we employ proof by contradiction. Suppose $A$ is a compact invariant set of \eqref{eq: c1 original ode} but $A \not\subset \GRVIQsolutionq$. 
Then $d_{A,\GRVIQsolutionq} \= \sup_{x \in A} \inf_{y \in \GRVIQsolutionq} \| x - y \|_\infty > 0$ (since $\GRVIQsolutionq$ is closed by Lemma~\ref{lem: basic properties of abstract RVI}(i)). 

Let $0 < \epsilon < d_{A, \GRVIQsolutionq}$. By the Lyapunov stability of $\GRVIQsolutionq$ (Lemma~\ref{lem-cvg-5}), there exists $\delta > 0$ such that 
\begin{equation} \label{eq-ql-prf3}
    \phi(t; x) \in \GRVIQsolutionq^\epsilon, \ \ \forall \, t \geq 0, \ \ \text{if} \  x \in \GRVIQsolutionq^\delta.
\end{equation}
Also, by Lemma~\ref{lem-cvg-5}, for any $x \in \R^d$, $\phi(t;x)$ converges to $\GRVIQsolutionq$ as $t \to \infty$, and therefore, there exists a time $t_x$ such that $\phi(t_x; x) \in \GRVIQsolutionq^{\delta/2}$. Since $h$ is Lipschitz continuous (Lemma~\ref{lem: basic properties of abstract RVI}(ii)), $\phi(t; x)$ is continuous in $x$. Hence, there is an open neighborhood $D_x$ of $x$ such that 
\begin{equation} \label{eq-ql-prf4}
     \phi(t_x; y) \in  \GRVIQsolutionq^{\delta}, \qquad \forall \, y \in D_x.
 \end{equation}
As the collection $D_x, x \in A,$ forms an open cover of the compact set $A$, there exist a finite number of points $x^1, x^2, \ldots, x^l \in A$ with $A \subset \cup_{i=1}^l D_{x_i}$. 
Now let $\bar t = \max_{1 \leq i  \leq l} t_{x_i}$. Then by (\ref{eq-ql-prf4}) and \eqref{eq-ql-prf3}, we have
\begin{equation} \label{eq-ql-prf5}
  \phi(t; x) \in \GRVIQsolutionq^\epsilon, \qquad \ \forall \, x \in A, \ t \geq \bar t.
\end{equation}  
On the other hand, $\{\phi(\bar t; x) \mid x \in A\} = A$ since $A$ is invariant for the ODE\ (\ref{eq: c1 original ode}). Consequently,  (\ref{eq-ql-prf5}) implies that $A \subset \GRVIQsolutionq^\epsilon$, contradicting $d_{A, \GRVIQsolutionq} > \epsilon$. The proof is now complete.
\end{proof}
\vspace*{-0.3cm}

The following corollary follows from \cref{lem-cvg-5}.

\begin{mycorollary} \label{cor: global asym stability of origin}
The origin is the unique globally asymptotically stable equilibrium of the ODE $\dot{x}(t) = h_\infty(x(t))$.
\end{mycorollary}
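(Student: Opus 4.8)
The plan is to derive \cref{cor: global asym stability of origin} as the ``zero-reward specialization'' of \cref{lem-cvg-5}, reusing the machinery already in place rather than re-running the Lyapunov argument. Recall from \cref{lem: basic properties of abstract RVI}(ii) that $h_\infty(q) = f(\zerovec)\onevec - f(q)\onevec + g(q) - q$. The key observation is that $h_\infty$ has exactly the same structural form as $h$ in \eqref{eq: c1 original ode}, namely $h_\infty(q) = r' - f(q)\onevec + g(q) - q$ with the constant reward vector $r' \= f(\zerovec)\onevec$. Hence all the hypotheses needed to run the proof of \cref{lem-cvg-5} are met for the ODE $\dot x(t) = h_\infty(x(t))$, once we identify the relevant objects: in place of $\GRVIQsolutionrbar$ we take $r'_\# = f(\zerovec)$ (this is the common $\bar r$-value of solutions to \eqref{eq: General RVI Q Bellman equation} with $r(\cdot) \equiv f(\zerovec)$, by implication (b) in the proof of \cref{lem: basic properties of abstract RVI}), and in place of $\GRVIQsolutionq$ we take the set $\{q : (f(\zerovec), q) \text{ solves the zero-shifted equation},\ f(q) = f(\zerovec)\}$.

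First I would note that this latter set is precisely $\{\zerovec\}$: by implication (b) the solutions of $q$ to $q = f(\zerovec)\onevec + g(q) - q$ (equivalently $h_\infty(q) = \zerovec$) are exactly $\{c\onevec : c \in \R\}$, and imposing $f(c\onevec) = f(\zerovec)$ forces $f(\zerovec) + cu = f(\zerovec)$ by \cref{assu: f}(ii), hence $c = 0$. This is exactly \cref{lem: basic properties of abstract RVI}(iii), which has already been established, so I can simply cite it: the equilibrium set of $\dot x = h_\infty(x)$ is $\{\zerovec\}$. Then I would invoke \cref{lem-cvg-5} applied to $h_\infty$ in place of $h$ — the proof of that lemma used only \cref{assu: f}, \cref{assu: g}, and the nonemptiness/uniqueness structure of the solution set, all of which carry over verbatim — to conclude that the singleton $\{\zerovec\}$ is globally asymptotically stable and that every solution trajectory converges to it. Uniqueness of a globally asymptotically stable equilibrium is automatic (two such would each attract the other).

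Alternatively, and perhaps more cleanly for the reader, I would present a short self-contained argument paralleling the one already given in the remark following \cref{thm: General RVI Q}: \cref{lem: basic properties of abstract RVI}(ii, iii) shows $h$ satisfies \cref{assu: h}(i, ii) with this $h_\infty$ and that the origin is its unique equilibrium, and the same reasoning as in \cref{lem-cvg-5} — instantiated with zero ``reward'' — gives global asymptotic stability of the origin. I expect the main (really the only) obstacle to be purely expository: making sure the substitution ``$h \leadsto h_\infty$, $r \leadsto f(\zerovec)\onevec$, $\GRVIQsolutionrbar \leadsto f(\zerovec)$, $\GRVIQsolutionq \leadsto \{\zerovec\}$'' is justified by pointing out that Assumptions \ref{assu: g} and \ref{assu: solution set} (via implication (b)) hold for the zero-shifted equation, so that \cref{lem-cvg-5}'s proof applies unchanged. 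No new estimates are required.

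\begin{proof}
By \cref{lem: basic properties of abstract RVI}(ii), $h_\infty(q) = f(\zerovec)\onevec - f(q)\onevec + g(q) - q$, which has the same form as the function $h$ in \eqref{eq: c1 original ode} with the reward vector $r$ replaced by the constant vector $f(\zerovec)\onevec$. By implication (b) in the proof of \cref{lem: basic properties of abstract RVI} (applied with $b = f(\zerovec)$), the solutions of $q$ to $h_\infty(q) = \zerovec$ are exactly $\{c\onevec : c \in \R\}$, and they all share the common ``$\bar r$''-value $f(\zerovec)$; moreover, imposing $f(q) = f(\zerovec)$ on such solutions forces $c = 0$ by \cref{assu: f}(ii), so the only solution is $\zerovec$. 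This is precisely \cref{lem: basic properties of abstract RVI}(iii).

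Thus the hypotheses of \cref{lem-cvg-5} hold with $h$, $r$, $\GRVIQsolutionrbar$, and $\GRVIQsolutionq$ replaced respectively by $h_\infty$, $f(\zerovec)\onevec$, $f(\zerovec)$, and $\{\zerovec\}$: indeed, the mapping $q \mapsto f(\zerovec)\onevec + g(q)$ is max-norm nonexpansive with nonempty fixed-point set (namely $\{c\onevec\}$), and $f$ satisfies \cref{assu: f}. Hence, by the argument in the proof of \cref{lem-cvg-5}, the set $\{\zerovec\}$ is globally asymptotically stable for $\dot x(t) = h_\infty(x(t))$, and every solution converges to it. Finally, a globally asymptotically stable equilibrium is unique, since any two such equilibria would each attract the other and hence coincide.
\end{proof}
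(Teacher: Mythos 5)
Your proposal is correct and follows essentially the same route as the paper: both reduce the claim to the special case of the preceding analysis by substituting $r \leadsto f(\zerovec)\onevec$, identifying $\GRVIQsolutionrbar = f(\zerovec)$ via implication (b), noting that $\GRVIQsolutionq$ collapses to $\{\zerovec\}$ by \cref{lem: basic properties of abstract RVI}(iii), and then invoking \cref{lem-cvg-5}. The only (harmless) addition is your explicit remark that a globally asymptotically stable equilibrium is automatically unique.
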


\begin{proof}
We can reduce the case under concern to a special case treated in the preceding analysis as follows. By Lemma~\ref{lem: basic properties of abstract RVI}(ii), the function $h_\infty$ is given by $h_\infty(q) = f(\zerovec)\onevec - f(q) \onevec + g(q) - q$. 
If we replace $r$ with $f(0) \onevec$ in the preceding analysis, the function $h$ becomes identical to $h_\infty$, and the equilibrium set $Q_\#$ of the ODE \eqref{eq: c1 original ode}, $\dot{x}(t) = h(x(t))$, reduces to the singleton set $\{\zerovec\}$ (Lemma~\ref{lem: basic properties of abstract RVI}(iii)). Furthermore, the function $h'$ used in deriving Lemma~\ref{lem-cvg-5} becomes 
$$h'(q) = f(\zerovec) \onevec - \GRVIQsolutionrbar \onevec + g(q) - q = g(q) - q,$$
since, under Assumption~\ref{assu: solution set}(ii), the value $\GRVIQsolutionrbar$, as the unique solution of $\bar r$ to \eqref{eq: General RVI Q Bellman equation} when $r(\cdot) \equiv f(\zerovec)$, is precisely $f(\zerovec)$ (cf.\ implication (b) discussed in the proof of Lemma~\ref{lem: basic properties of abstract RVI}). Correspondingly, the ODE \eqref{eq: c1 aux ode}, $\dot{y}(t) = h'(y(t))$, has the nonempty set $\{c \onevec : c \in \R\}$ as its equilibrium set by Assumption~\ref{assu: solution set}(ii). 

This shows that the preceding analysis applies here. Consequently, by Lemma~\ref{lem-cvg-5}, the origin is the unique globally asymptotically stable equilibrium for the ODE $\dot{x}(t) = h_\infty(x(t))$.   
\end{proof}

\vspace*{-0.3cm}

\begin{proofof}{\cref{thm: General RVI Q}}
As discussed immediately after its statement, this theorem follows from the combination of Theorem~\ref{thm: async sa} with Lemma~\ref{lem: basic properties of abstract RVI}(ii, iii), \cref{cor: global asym stability of origin}, and Lemma~\ref{lem: compact inv set}.  
\end{proofof}

\vspace*{-0.5cm}

\subsection{Convergence of Specific RVI Q-Learning Algorithms} \label{sec: proofs for specific algs}
This section shows RVI Q-learning and its inter- and intra-option extensions are special cases of the abstract RVI algorithm \eqref{eq: c1 General RVI Q async update}. Their convergence results (Theorems\ \ref{thm: Extended RVI Q-learning}, \ref{thm: inter-option Differential Q-learning}, \ref{thm: intra-option Differential Q-learning}) then immediately follow from \cref{thm: General RVI Q}.

To simplify notation, in the following proofs, let $\| \cdot\|$ stand for $\|\cdot\|_\infty$.

\subsubsection{RVI Q-learning (\cref{thm: Extended RVI Q-learning}(i))}\label{sec: proof of Differential and RVI Q-learning}

Recall that RVI Q-learning \eqref{eq: Extended RVI Q-learning} aims to solve the action-value optimality equation \eqref{eq: action-value optimality equation}, which corresponds to the ``abstract optimality equation'' \eqref{eq: General RVI Q Bellman equation} with $\ispace = \sspace \times \aspace$ and $r$ and $g$ defined as
$$ r(i) = r_{sa}, \qquad  g(q)(i) = \sum_{s' \in \sspace} p_{ss'}^a \max_{a'} q(s', a'), \qquad i = (s,a) \in \ispace, \ q \in \R^{|\ispace|}$$
(where $r_{sa}$ and $p_{ss'}^a$ are the one-stage expected reward and state transition probability defined immediately after \eqref{eq: action-value optimality equation}).
The mapping $g$ here clearly satisfies \cref{assu: g}. In a weakly communicating MDP, the solution set of \eqref{eq: action-value optimality equation} satisfies \cref{assu: solution set} by \cref{lemma: 0 reward MDP has 1 d solution} and the basic optimality properties of MDPs (cf.\ \cref{sec: wc mdps}).  

We now rewrite RVI Q-learning \eqref{eq: Extended RVI Q-learning} in the form of the abstract update rule \eqref{eq: c1 General RVI Q async update} by defining the noise terms as $\epsilon_{n+1} = \zerovec$ and
$$ M_{n+1}(i) = R_{n+1}^{sa} - r_{sa} + \max_{a' \in \aspace} Q_n(S_{n+1}^{sa}, a') - g(Q_n)(s, a), \quad \text{if} \ i = (s,a) \in Y_n,$$
and $M_{n+1}(i) = 0$ otherwise. Let us verify that the noise terms $\{M_{n+1}\}$ satisfy \cref{assu: noise}(i) with $\F_n = \sigma(Q_m, Y_m, M_m; m \leq n)$. Then \cref{thm: Extended RVI Q-learning}(i) will follow immediately from \cref{thm: General RVI Q}. 

We verify below that $\E[\|M_n\|] < \infty$ for all $n \geq 1$; the remaining conditions in \cref{assu: noise}(i) can be verified straightforwardly. Since the random one-stage rewards $R_{n+1}^{sa}$ have finite variances under our model assumption (cf.\ \cref{MDPs with the Average-Reward Criterion}), we have 
\begin{equation}
        \E[\norm{M_{n+1}}] \leq K + 2 \E[\norm{Q_n}] \label{eq: M_n ineq}
\end{equation}
for some suitable constant $K$. 
That $\E[\norm{Q_n}] < \infty$ for all $n \geq 1$ can be easily verified using the iterative update rule of $Q_n$ \eqref{eq: Extended RVI Q-learning}, the finiteness of the one-stage rewards, the Lipschitz continuity of $f$ (\cref{assu: f}(i)), and the finiteness of $\sup_{n \geq 0} \alpha_n$ (\cref{assu: stepsize}(i)).

\cref{thm: Extended RVI Q-learning}(i) now follows from Theorem 6.2, as discussed earlier.

\subsubsection{Inter-Option Algorithm (\cref{thm: inter-option Differential Q-learning}(i))} \label{sec: proof of Inter-Option Algorithm}

The scaled equivalent form \eqref{eq: scaled inter-option option-value optimality equation} of the option-value optimality equation \eqref{eq: inter-option option-value optimality equation} is a special case of the ``abstract optimality equation'' \eqref{eq: General RVI Q Bellman equation} with the following correspondences: $\ispace = \sspace \times \aspace$ and for each $i = (s,o) \in \ispace$ and $q \in \R^{|\ispace|}$, 
$$ r(i) = \frac{\hat r_{so}}{\hat l_{so}},\qquad
    g(q)(i)  = \frac{1}{\hat l_{so}} \sum_{s' \in \sspace} \otrans_{ss'}^o \max_{o' \in \ospace} q(s', o') + \Big(1 - \frac{1}{\hat l_{so}}\Big) \cdot q(s, o) 
    $$
(where $\hat r_{so}$, $\hat l_{so}$, and $\otrans_{ss'}^o$ are the expected one-stage cumulative rewards, expected option durations, and transition probabilities defined immediately after \eqref{eq: inter-option option-value optimality equation}). Since $\hat l_{so} \geq 1$ for all $(s,o) \in \sspace \times \aspace$, the above mapping $g$ satisfies \cref{assu: g}. 
Since the associated SMDP is assumed to be weakly communicating, the solution set of \eqref{eq: scaled inter-option option-value optimality equation} (equivalently, \eqref{eq: inter-option option-value optimality equation}) satisfies \cref{assu: solution set} by \cref{lemma: 0 reward MDP has 1 d solution} and the basic optimality properties of SMDPs (cf.\ \cref{sec: smdp}).

With $r$ and $g$ thus defined, we rewrite the inter-option algorithm \eqref{eq: c2 Inter-option algorithm}-\eqref{eq: c2 Inter-option Differential TD-learning L} in the form of the abstract update rule \eqref{eq: c1 General RVI Q async update} by defining the noise terms as follows: For each $i = (s,o) \in Y_n$, 
\begin{align*} 
    M_{n+1}(i) &= \frac{R_{n+1}^{so} - \hat r_{so}}{L_n(s, o)} + \frac{\max_{o' \in \ospace} Q_n(S_{n+1}^{so}, o') - \sum_{s' \in \sspace} \otrans_{ss'}^o \max_{o' \in \ospace} Q_n(s', o')}{\hat l_{so}},\\
    \epsilon_{n+1}(i) &= \frac{\hat r_{so} + \max_{o' \in \ospace} Q_n(S_{n+1}^{so}, o') - Q_n(s, o) }{L_n(s, o)}  - \frac{\hat r_{so} + \max_{o' \in \ospace} Q_n(S_{n+1}^{so}, o') - Q_n(s, o)}{\hat l_{so}},
\end{align*}
while $M_{n+1}(i) = \epsilon_{n+1}(i) = 0$ if $i \not\in Y_n$.
We now verify that these noise terms $\{M_{n+1}\}$ and $\{\epsilon_{n+1}\}$ satisfy \cref{assu: noise} with $\F_n = \sigma(Q_m, Y_m, L_m, M_m, \epsilon_m; m \leq n)$. \cref{thm: inter-option Differential Q-learning}(i) will then follow immediately from \cref{thm: General RVI Q}. 

To verify that $\{M_{n+1}\}$ satisfies \cref{assu: noise}(i), we first observe from the update rule \eqref{eq: c2 Inter-option Differential TD-learning L} for $L_n$ that for all $n \geq 0$, $L_n(s,o)$ is bounded below by the deterministic positive constant $\min\{1, L_0(s, o)\}$. This is because each option takes at least one time step to terminate (i.e., $L_{n+1}^{so} \geq 1$ always), while the initial $L_0(s,o) > 0$, and the step sizes $\beta_n \in [0,1]$ by \cref{assu: option assumption}(ii). 

From this lower bound for $L_n$ it follows that $\E[\norm{\epsilon_{n+1}}] \leq K_1 + K_2 \E[\norm{Q_n}]$ for some constants $K_1, K_2 > 0$. Then, similarly to the previous proof in \cref{sec: proof of Differential and RVI Q-learning}, we apply induction to prove $\E[\norm{M_{n+1}}]< \infty$ for all $n \geq 0$, using the Lipschitz continuity of $f$ (\cref{assu: f}(i)), the boundedness of $\E[|R_{n+1}^{so}|]$ for all $n \geq 0$, $s \in \sspace, o \in \ospace$ (implied by \cref{assu: option assumption}), along with the finiteness of $\sup_{n \geq 0} \alpha_n$ (\cref{assu: stepsize}(i)), and the lower bound for $\{L_n\}$. The remaining conditions in \cref{assu: noise}(i) can be verified straightforwardly, using the lower bound for $\{L_n\}$ together with the boundedness of $\E[(R_{n+1}^{so})^2]$ for all $n \geq 0, s \in \sspace, o \in \ospace$ (implied by \cref{assu: option assumption}).

To verify that $\{\epsilon_{n+1}\}$ satisfies \cref{assu: noise}(ii), we first note that in updating $L_n$, the random option durations $L_{n+1}^{so}$ have finite variances (as implied by \cref{assu: option assumption}). Furthermore, for every state-option pair $(s,o)$, the corresponding component is updated infinitely often (as implied by \cref{assu: update}(i)), while the step sizes $\beta_n$ satisfy standard conditions (\cref{assu: inter-option extended rvi q learning}(ii)). 
Therefore, standard stochastic approximation results [e.g., \citep{blum1954approximation}] imply that as $n \to \infty$,
$$  L_n(s, o) \to \hat l_{so} \ \ a.s., \quad \forall \, s \in \sspace, \, o \in \ospace.$$
Now letting $\delta_{n+1} \= \max_{s \in \sspace, o \in \ospace} \left\{ \max\{|\hat r_{so}|, 2\} \cdot \left|\frac{1}{L_n(s, o)} -  \frac{1}{\hat l_{so}}\right|\right\}$, we have $\norm{\epsilon_{n+1}} \leq \delta_{n+1} (1 + \norm{Q_n})$ for all $n \geq 0$ and $\delta_{n+1} \to 0$ a.s.\ as $n \to \infty$. This verifies \cref{assu: noise}(ii). \cref{thm: inter-option Differential Q-learning}(i) then follows from \cref{thm: General RVI Q}, as discussed earlier.

\subsubsection{Intra-Option Algorithm (\cref{thm: intra-option Differential Q-learning}(i))} \label{sec: proof of Intra-Option Algorithm}

For the intra-option algorithm \eqref{eq: transformed intra-option update}, its associated optimality equation \eqref{eq: intra-option option-value optimality equation} corresponds to the ``abstract optimality equation'' \eqref{eq: General RVI Q Bellman equation} with $\ispace = \sspace \times \aspace$, and $r$ and $g$ defined as follows. For each $i = (s,o) \in \ispace$ and $q \in \R^{|\ispace|}$, 
$$ r(i) = r_{so}^{(1)} \= \sum_{a \in \aspace} \pi(a \mid s, o) r_{sa},\qquad
    g(q)(i) = \sum_{a \in \aspace} \pi(a \mid s, o) \sum_{s' \in \sspace} p_{ss'}^a  U[q](s',o). $$
Recall from \eqref{eq: intra-option option-value optimality equation2} that $U[q](s',o) = \beta(s', o) \max_{o' \in \ospace} q(s', o')+ (1 - \beta(s', o))  q(s', o)$, where $\beta(s', o)$ denotes the termination probability for the option $o$ at state $s'$. 
This mapping $g$ clearly satisfies \cref{assu: g}. As the associated SMDP is weakly communicating by assumption, the solution set of \eqref{eq: intra-option option-value optimality equation}, being the same as that of \eqref{eq: inter-option option-value optimality equation} (\cref{prop: c2 inter = intra equations}), satisfies \cref{assu: solution set}, as already verified in the previous inter-option case.

With $r$ and $g$ defined as above, we can express the intra-option algorithm \eqref{eq: transformed intra-option update} in the form of the abstract RVI algorithm \eqref{eq: c1 General RVI Q async update} by setting the noise term $\epsilon_{n+1}$ to zero and defining the noise term $M_{n+1}$ as follows. For each $i = (s,o) \in Y_n$,
\begin{align*}
M_{n+1}(i) & =   \rho_n(s, o) \cdot \Big (R_{n+1}^{so} -f(Q_n) + U[Q_n](S^{so}_{n+1}, o) -Q_n(s, o)\Big) \\
    & \quad \ - \big(r^{(1)}_{so} - f(Q_n) + g(Q_n)(s, o) - Q_n(s, o)\big); 
\end{align*}    
and $M_{n+1}(i) = 0$ otherwise. As in the previous proofs, if we show that $\{M_{n+1}\}$ satisfies \cref{assu: noise}(i) with $\F_n = \sigma(Q_m, Y_m, b_m, M_m, \epsilon_m; m \leq n)$, then we can directly derive \cref{thm: intra-option Differential Q-learning}(i) from \cref{thm: General RVI Q}. 

Recall that $\rho_n(s,o)$, $(s,o) \in Y_n$, are the importance sampling ratios defined w.r.t.\ the behavior policy $b_n$ as $\rho_n(s,o) = \pi(A_{n}^{s} \mid s, o) / b_n(A_{n}^{s} \mid s)$, where $A_{n}^{s} \sim b_n( \cdot \mid s)$.  These terms are bounded by a deterministic constant for all $n \geq 0$, by the definition of the intra-option algorithm \eqref{eq: transformed intra-option update}. Consequently, the verification of \cref{assu: noise}(i) in this case is very similar to that for RVI Q-learning in \cref{sec: proof of Differential and RVI Q-learning}, therefore omitted.

This concludes the proof.

\section{Supplementary Materials and Additional Analysis: Degrees of Freedom in RVI Algorithms' Solutions} \label{sec: solution set dim}
In Section \ref{sec: solution set}, we discussed various properties, including compactness and connectedness, of solution sets for RVI Q-learning/options algorithms (the sets $\ExtRVIQsolutionq$, $\ooptimalitydiffsolutionq$, and $\smdpoptimalitygeneralsolutionq$ in Theorems~\ref{thm: MDP characterize Q}, \ref{thm: SMDP characterize Q}, \ref{thm: smdp properties}).  In this section, we further investigate the degrees of freedom in these solutions. Our derivation is built upon the remarkable work of \citet{ScF78}, who studied the solution structure of average-reward optimality equations for MDPs or SMDPs. We begin by reviewing their key results, which shed light on how recurrence structures of stationary optimal policies determine the number $n^*$ of degrees of freedom in these equations. Subsequently, we show that their results imply that for a weakly communicating MDP/SMDP, the solution sets of RVI Q-learning algorithms can be parameterized by $n^*-1$ parameters within an $(n^*-1)$-dimensional convex polyhedron (cf. Theorem~\ref{thm-dim-Qs} and \eqref{eq-param-Qs}).

\subsection{Review: Degrees of Freedom in Average-Reward Optimality Equations} \label{sec: degree of freedom Q}
Since MDPs are special cases of SMDPs, we shall focus on the latter. 
Recall the action- and state-value optimality equations for a weakly communicating SMDP [cf.\ \eqref{eq: SMDP action-value optimality equation} and \eqref{eq: SMDP state-value optimality equation}]:
\begin{align} 
    q(s,a) &=   \tilde r_{sa} + \sum_{s' \in \S} p_{ss'}^a \max_{a' \in \A} q(s', a'), \qquad \forall \, s \in \S, \ a \in \A \label{eq-oe-q},\\
    v(s) &= \max_{a \in \A} \left\{ \tilde r_{sa} + \sum_{s' \in \S} p_{ss'}^a v(s') \right\}, \qquad \forall \, s \in \S, \label{eq-oe-v}
\end{align}   
where $\tilde r_{sa} \= r_{sa} - l_{sa} r_*$ and $r_{sa}, l_{sa}, p_{ss'}^a$ are one-step reward, transition time, and transition probability, respectively, defined in \eqref{eq: rsa} and \eqref{eq: pssa}. Recall that $\Qo$ (respectively, $\V$) is the set of all solutions to \eqref{eq-oe-q} (respectively, \eqref{eq-oe-v}). Then
\begin{align}
   q \in \Qo \ \ & \Rightarrow \ \ v_q(\cdot) \= \max_{a \in \A} q(\cdot, a) \in \V,   \label{eq-q2v} \\ 
   v \in \V   \, \ \ & \Rightarrow \ \ q_v \in \Qo \, \ \text{where} \ \, q_v(s,a) \= \tilde r_{sa} + \sum_{s' \in \S} p_{ss'}^a v(s'), \ \ (s,a) \in \S \times \A. \label{eq-v2q}
\end{align}
This sets up a one-to-one correspondence between $\Qo$ and $\V$, with the mappings $q \mapsto v_q$ and $v \mapsto q_v$ defining a homeomorphism---a one-to-one bicontinuous transformation---between the two spaces. 
 
\citet{ScF78} gave a comprehensive characterization of the solution set $\V$. (While we focus on the weakly communicating case, we mention that their work applies to general multichain SMDPs.) To describe their results, we need a few definitions.

Recall that $\Pi_*$ denotes the set of stationary optimal policies. Henceforth, we will omit the word ``stationary'' again for brevity, as we exclusively consider such policies. Let $\Pi_*^D$ denote the subset of deterministic optimal policies.

For a policy $\pi$, consider the Markov chain induced by $\pi$ on the state space $\S$. Let $n(\pi)$ denote the number of recurrent classes of this Markov chain, and $\Rs(\pi)$ the set of all states in these recurrent classes.
Define
\begin{align} 
     \Rst  & \= \left\{ s \in \S : s \in \Rs(\pi) \ \text{for some} \ \pi \in \Pi_*^D \right\} = \left\{ s \in \S : s \in \Rs(\pi) \ \text{for some} \ \pi \in \Pi_* \right\}, \label{eq-R*} \\  
     n^* & \= \min \left\{ n(\pi) : \Rs(\pi) = \Rst, \ \pi \in \Pi_* \right\}. \label{eq-n*}
\end{align}
Expressed in words, $\Rst$ consists of recurrent states under some optimal policy, and $n^*$ is the minimum number of recurrent classes under those optimal policies that make all states in $\Rst$ recurrent.

The set $\Rst$ can be partitioned into $n^*$ sets, $\Rs^{*1}, \Rs^{*2}, \ldots,  \Rs^{*n^*}$, which are the recurrent classes \emph{common} to all optimal policies $\pi_* \in \Pi_*$ with $\Rs(\pi_*) = \Rst$ and $n(\pi_*) = n^*$. 
For a weakly communicating SMDP, one such policy $\pi_*$ is given by the following: For $s \not\in \Rst$, let $\pi_*(a \mid s) > 0$ for all $a \in \A$; for $s \in \Rst$, let $\pi_*(a \mid  s) > 0$ if and only if $a \in \Kst(s)$, a set of optimal actions defined by
\begin{equation} \label{eq-K*}
 \Kst(s)   \= \left\{ a \in \A :  \pi(s) = a, s \in \Rs(\pi) \  \text{for some} \ \pi \in \Pi_*^D \right\}, \ \ \   s \in \Rst,
\end{equation} 
where $\pi(s)$ denotes the action taken at state $s$ for a deterministic policy $\pi$.

\citet{ScF78} showed that the solution set $\V$ can be parametrized by $n^*$ parameters $(y_1, \ldots, y_{n^*})$ that are associated with the sets $\Rs^{*1}, \Rs^{*2}, \ldots,  \Rs^{*n^*}$, with each $y_j$ corresponding to a shift in the state values by the constant $y_j$ for the states in $\Rs^{*j}$. More specifically, $\V$ has the following structure.
\begin{enumerate}[leftmargin=0.7cm]
\item[(i)] For $v \in \V$, its values $v(s), s \not\in \Rst$, are determined by its values $v(s), s \in \Rst$.
If we group the components of $v$ to write it as 
\begin{align}\label{eq: v two parts}
    v = (v^{(1)}, v^{(2)}) \ \ \  \text{with} \ \ v^{(1)} \= (v(s))_{s \in \Rst}, \  v^{(2)} \= (v(s))_{s \not\in \Rst},
\end{align}
then all solutions $v \in \V$ can be expressed as $v = (v^{(1)}, \phi(v^{(1)}))$ for some continuous function $\phi : \R^{|\Rst|} \to \R^{|\S| - |\Rst|}$ that satisfies $\phi(x + c\1) = \phi(x) + c \1$ for all $c \in \R$. (See \citepalias[Equation 4.5]{ScF78} for the exact expression of $\phi$.)
\item[(ii)] The set $\V^R \= \{ v^{(1)} \,\big|\,  v=(v^{(1)}, v^{(2)}) \in \V \}$, which determines $\V$ by (i), 
is an $n^*$-dimensional convex polyhedron. 
Specifically, fix some $\bar v^{(1)} \in \V^R$ and for $1 \leq j \leq n^*$, let $e_j \in \R^{|\Rst|}$ be the indicator of the set $\Rs^{*j}$: 
\begin{equation} \label{eq-def-e}
     e_j(s) = 1  \ \ \  \text{if} \ s \in \Rs^{*j};  \qquad e_j(s) = 0,  \ \ \  \text{if} \ s \in \Rst \setminus \Rs^{*j}.
\end{equation}
Then $\V^R$ can be parametrized as
\begin{equation} \label{eq-param-Vr}
\V^R = \left\{   \bar v^{(1)} +  y_1 e_1 + \cdots + y_{n^*} e_{n^*} \, \Big|\, (y_1, y_2, \ldots, y_{n^*}) \in D \right\}
\end{equation}
for an $n^*$-dimensional convex polyhedron $D \subset \R^{n^*}$ determined by the optimal policies in $\Pi^*$ and the sets $\Rs^{*1}, \Rs^{*2}, \ldots, \Rs^{*n^*}$. (See \citepalias[Theorem\ 5.1(d)]{ScF78} for the exact expression of $D$.) Constrained within the set $D$, these parameters $y_1, y_2, \ldots, y_{n^*}$ need not be globally independent; their values can depend on one another. In the particular case of a weakly communicating SMDP, unless $n^*=1$, no parameter can be chosen freely and independently of the other.
\item[(iii)] By (i) and (ii), the solutions $v \in \V$ can be parametrized as
\begin{equation} \label{eq-param-V}
v = (v^{(1)}, \phi(v^{(1)})) \ \ \text{with} \ v^{(1)} = \bar v^{(1)} +  y_1 e_1 + \cdots + y_{n^*} e_{n^*}, \ \  (y_1, \ldots, y_{n^*}) \in D.
\end{equation}
Thus, $\V$ is homeomorphic to the $n^*$-dimensional convex polyhedron $D$, and so is $\Qo$ since it is homeomorphic to $\V$, as discussed earlier. 
\end{enumerate}

\begin{myremark} \rm \label{rmk-app-A1}
We make two observations.\\*[0.1cm]
\noindent (a) For all $c \in \R$, $v + c \1 \in \V$ if $v \in \V$; or in other words, $\V + c\1 = \V$.  Therefore, $\V^R + c \1 = \V^R$ for all $c \in \R$ and likewise, given the definition of the $e_j$'s, the set $D$ has the property that $D + c \1 = D$ for all $c \in \R$.
For a weakly communicating SMDP, $\1$ and $-\1$ are the only directions along which the convex polyhedra $\V^R$ and $D$ are unbounded. This can be shown using the results from \citepalias{ScF78} or proved directly, similar to the boundedness proof for Theorem~\ref{thm: smdp properties}(i). This fact is closely linked to the property of $\V$ in the special case discussed next in (b).\\*[0.1cm]
\noindent (b) If the SMDP is weakly communicating and the rewards are all zero, then $\Rst$ is just the unique closed communicating system of the SMDP. Consequently, $n^* = 1$ and $\V$ is one-dimensional, consisting solely of vectors $c \1, c \in \R$. This gives an alternative proof of Lemma~\ref{lemma: 0 reward MDP has 1 d solution} based on the theory given in \citepalias{ScF78}.\qed
\end{myremark}

\subsection{Applying Degree of Freedom Analysis to RVI Algorithms} \label{sec-7.2}

We now use the preceding characterizations of $\V$ and $\Qo$ to derive a parametrization of the set $\Qs = \{ q \in \Qo \mid f(q) = r_* \}$, which corresponds to the solution sets $\mathcal{Q}_\infty$ and $\hat{\mathcal{Q}}_\infty$ of the three Q-learning algorithms studied previously in our Theorems~\ref{thm: Extended RVI Q-learning},~\ref{thm: inter-option Differential Q-learning},~\ref{thm: intra-option Differential Q-learning}. Recall that the function $f$ has the property that for some $u > 0$, $f(q + c \1) = f(q) + c u$ for all $c \in \R$ and $q \in \R^{\S \times \A}$ (Assumption~\ref{assu: f}(ii)).

\begin{mytheorem} \label{thm-dim-Qs}
In a weakly communicating SMDP, the set $\Qs$ is homeomorphic to an $(n^*-1)$-dimensional convex polyhedron, where $n^*$ is given by (\ref{eq-n*}).
\end{mytheorem}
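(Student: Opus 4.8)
The plan is to leverage the parametrization of $\Qo$ (equivalently, of $\V$) by the $n^*$-dimensional convex polyhedron $D$ given in \eqref{eq-param-V}, transport everything through the homeomorphism $q \mapsto v_q$, $v \mapsto q_v$ from \eqref{eq-q2v}--\eqref{eq-v2q}, and then show that the single linear constraint $f(q) = r_*$ cuts $D$ down to an $(n^*-1)$-dimensional convex polyhedron. First I would translate the constraint $f(q)=r_*$ into a constraint on the parameter vector $(y_1,\dots,y_{n^*}) \in D$. Write $\Phi : D \to \Qo$ for the composition of the affine map $(y_1,\dots,y_{n^*}) \mapsto v^{(1)} = \bar v^{(1)} + \sum_j y_j e_j$, then $v^{(1)} \mapsto v = (v^{(1)},\phi(v^{(1)}))$, then $v \mapsto q_v$; this $\Phi$ is a homeomorphism from $D$ onto $\Qo$ by the results quoted from \citepalias{ScF78} together with \eqref{eq-q2v}--\eqref{eq-v2q}. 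Define $F : D \to \R$ by $F(y) = f(\Phi(y))$. The set $\Qs$ is then homeomorphic to $F^{-1}(r_*) \subset D$ via $\Phi$, so it suffices to show $F^{-1}(r_*)$ is an $(n^*-1)$-dimensional convex polyhedron.

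The key structural fact is that $F$ is an \emph{affine} function of $y$ with a strictly positive coefficient in the "all-ones" direction. Affineness: each of the three maps composing $\Phi$ is affine in its argument — the first by construction, the second because $\phi$ is affine on $\V^R$ (it is given explicitly by \citepalias[Eq.\ 4.5]{ScF78}; alternatively this can be seen from the fact that $\V$ is a convex polyhedron and $v^{(2)}$ is linearly determined by $v^{(1)}$ on each face, combined with the shift property $\phi(x+c\1)=\phi(x)+c\1$), and the third, $v\mapsto q_v$, is affine by \eqref{eq-v2q} — so $F = f\circ \Phi$ is affine because $f$ restricted to the affine subspace $\Phi(\mathrm{aff}\,D)$ is affine (this uses \cref{assu: f}(iii), $f(cx)-f(\zerovec)=c(f(x)-f(\zerovec))$, together with \cref{assu: f}(ii), which together force $f$ to be affine on any affine subspace — more directly, on the image we only need $f$ to be affine there, which follows from Assumption~\ref{assu: f}(ii)(iii)). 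For the positive coefficient: by \cref{rmk-app-A1}(a), $D + c\1 = D$, and under $\Phi$ the shift $y \mapsto y + c\1$ corresponds to shifting $v^{(1)}$ by $c\1$ on $\Rst$, hence (using $\phi(x+c\1)=\phi(x)+c\1$) shifting $v$ by $c\1$ on all of $\S$, hence shifting $q_v$ by $c\1$; then by \cref{assu: f}(ii), $F(y+c\1) = f(\Phi(y)+c\1) = f(\Phi(y)) + cu = F(y) + cu$ with $u>0$. Thus $F$ is a non-constant affine function on $D$ whose gradient has strictly positive inner product with $\1$.

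It remains to conclude. Since $F$ is affine and non-constant on $D$, the level set $F^{-1}(r_*) \cap D$ is the intersection of the convex polyhedron $D$ with a hyperplane, hence is itself a convex polyhedron (possibly after checking nonemptiness). Nonemptiness follows from the nonemptiness of $\Qs$, already established in \cref{thm: smdp properties}(i) (as $\Qs = \smdpoptimalitygeneralsolutionq$ in the SMDP setting). For the dimension: $D$ is $n^*$-dimensional and, by \cref{rmk-app-A1}(a), contains the line $\{y_0 + c\1 : c \in \R\}$ through any of its points; since $F$ strictly increases along $\1$, the hyperplane $\{F = r_*\}$ is not parallel to $\1$ and therefore not parallel to $\mathrm{aff}\,D$, so it meets $\mathrm{aff}\,D$ in an affine subspace of dimension exactly $n^*-1$, and $F^{-1}(r_*)\cap D$, being a full-dimensional polyhedron within that subspace (full-dimensional because $D$ has nonempty interior relative to $\mathrm{aff}\,D$ and the hyperplane passes through it transversally), is an $(n^*-1)$-dimensional convex polyhedron. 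Finally, $\Phi$ restricts to a homeomorphism from this polyhedron onto $\Qs$, completing the proof.

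I expect the main obstacle to be the rigorous justification that $\phi$ — and hence $\Phi$ — is genuinely affine (not merely continuous and shift-equivariant), since \citepalias{ScF78} states it via an explicit formula whose affineness I would want to either cite precisely or re-derive; a cleaner route, if the explicit form is awkward to invoke, is to argue that $\Qs$ is homeomorphic to $\Qs \cap \{v^{(1)}\text{-part}\}$, note $\V$ is a polyhedron so $\V^R$ is, and carry the whole dimension count on $\V^R$ and $D$ directly without ever needing $q$-coordinates — reducing everything to: intersecting the $n^*$-polyhedron $D$ (which is $\1$-translation-invariant) with the transversal hyperplane cut out by the affine constraint $f(q_v(\cdot)) = r_*$ pulled back to $D$.
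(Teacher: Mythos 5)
Your overall geometric picture --- the constraint $f(q) = \optimalr$ removes exactly the degree of freedom along the $\onevec$-direction, in which $D$ is translation-invariant and $f$ is strictly monotone --- is the right one, and it is exactly what the paper exploits. However, your argument has a genuine gap: the affineness claims on which your hyperplane-section conclusion rests are false. First, $f$ need not be affine: Assumption~\ref{assu: f}(ii)--(iii) are satisfied by $f(x) = \beta \max_{(s,a)} x(s,a) + b$ (one of the paper's own standing examples), which is not affine on any subspace containing two points whose maxima are attained at different coordinates. Second, $\phi$ \emph{cannot} be affine in general: if it were, $\V = \{(v^{(1)}, \phi(v^{(1)})) : v^{(1)} \in \V^R\}$ would be the image of the convex polyhedron $\V^R$ under an affine map, hence convex --- contradicting Theorem~\ref{thm: smdp properties}(ii) and \cref{ex: nonconvex Q}, where $\V$ consists of two strips and is explicitly nonconvex. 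So $F = f \circ \Phi$ is not affine, $F^{-1}(\optimalr)$ is not the intersection of $D$ with a hyperplane, and your "cleaner route" at the end inherits the same problem since it still describes the constraint pulled back to $D$ as affine.

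The repair --- and the route the paper actually takes --- is to drop affineness entirely and use only shift-equivariance. Change basis in $\mathrm{span}\{e_1,\dots,e_{n^*}\}$ to $\{\onevec, e'_1, \dots, e'_{n^*-1}\}$, so that $D$ becomes a polyhedron $D'$ in coordinates $(z_0, z_1, \dots, z_{n^*-1})$ whose $z_0$-sections are all equal to a fixed $(n^*-1)$-dimensional convex polyhedron $D'_0$ (this is where $D + c\onevec = D$ enters). The parametrizing homeomorphism $\psi$ then satisfies $\psi(z_0, z) = \psi(0, z) + z_0 \onevec$, so by Assumption~\ref{assu: f}(ii) the equation $f(\psi(z_0,z)) = \optimalr$ has the unique solution $z_0 = \bigl(\optimalr - f(\psi(0,z))\bigr)/u$, a \emph{continuous} (not necessarily affine) function of $z$. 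Hence $\Qs$ is the graph of this function over $D'_0$, which is homeomorphic to $D'_0$ --- no claim that the level set is flat is needed, and indeed it generally is not.
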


\begin{proof}
Consider the space spanned by the vectors $\{e_1, e_2, \ldots, e_{n^*}\}$ defined in \eqref{eq-def-e}. Choose a different basis $\{\1, e'_1, \ldots, e'_{n^*-1}\}$ for this space, and express the $n^*$-dimensional convex polyhedron 
$E \= \{ y_1 e_1 + \cdots + y_{n^*} e_{n^*} \mid (y_1, \ldots, y_{n^*}) \in D \}$ in terms of the new basis vectors as
$E = \{ z_0 \1 + z_1 e'_1 + \cdots + z_{n^*-1} e'_{n^*-1} \mid (z_0, z_1, \ldots, z_{n^*-1}) \in D' \}$, for some $n^*$-dimensional convex polyhedron $D' \subset \R^{n^*}$. By \eqref{eq-param-V}, the solutions $v \in \V$ can be equivalently parametrized as 
\begin{equation} \label{eq-appA-prf1}
v = (v^{(1)}, \phi(v^{(1)})) \ \,  \text{with} \  v^{(1)} = \bar v^{(1)} +  z_0 \1 + z_1 e'_1 + \cdots + z_{n^*-1} e'_{n^*-1}, \ \ \text{for} \ (z_0, z_1, \ldots, z_{n^*-1}) \in D'.
\end{equation}
Then by the homeomorphism between $\V$ and $\Qo$ [cf.\ \eqref{eq-q2v} and \eqref{eq-v2q}], the solutions $q \in \Qo$ can also be parametrized by $(z_0, z_1, \ldots, z_{n^*-1})$ as 
$$ \Qo = \{ \psi(z_0, z_1, \ldots, z_{n^*-1}) \mid (z_0, z_1, \ldots, z_{n^*-1}) \in D' \},$$
where the function $\psi$ is the composition of the mapping $(z_0, z_1, \ldots, z_{n^*-1}) \mapsto v$ given by \eqref{eq-appA-prf1} with the mapping $v \mapsto q_v$ given by \eqref{eq-v2q} and is a homeomorphism between $D'$ and $\Qo$. 

Now the set $D'$ has the property that its $z_0$-sections are the same for all $z_0 \in \R$:
$$D'_0 \=  \{ (z_1, \ldots, z_{n^*-1}) \!\mid\! (0, z_1, \ldots, z_{n^*-1}) \in D' \} = \{ (z_1, \ldots, z_{n^*-1}) \!\mid\! (z_0, z_1, \ldots, z_{n^*-1}) \in D' \},$$
because for all $c \in \R$, $E + c \1 = E$ by the definition of $E$, the expression of $\V^R$ in (\ref{eq-param-Vr}), and the fact $\V^R + c \1 =  \V^R$ discussed earlier in Remark~\ref{rmk-app-A1}(a). Since $D'$ is an $n^*$-dimensional convex polyhedron, it follows that $D'_0$ is an $(n^*-1)$-dimensional convex polyhedron. 

By definition the function $\psi$ satisfies that for all $z = (z_1, \ldots, z_{n^*-1}) \in D'_0$, 
$$\psi(z_0, z) = \psi(0, z)  + z_0 \1, \qquad \forall \, z_0 \in \R.$$
Consequently, if $f(\psi(z_0,z)) = r_*$, then $r_* = f(\psi(0, z)  + z_0 \1) = f(\psi(0, z)) +  z_0 u$ by Assumption~\ref{assu: f}(ii), implying $z_0 = \big(r_* - f(\psi(0, z))\big)/u$.
Thus the set $\Qs = \{ q \in \Qo \mid f(q) = r_* \}$ can be parametrized as
\begin{equation} \label{eq-param-Qs}
 \Qs = \{ \psi(c_0(z), z) \mid z = (z_1, \ldots, z_{n^*-1}) \in D'_0 \}, \quad \text{where} \ c_0(z) \=  \big(r_* - f(\psi(0, z))\big)/u.
\end{equation} 
This shows that $\Qs$ is homeomorphic to the $(n^*-1)$-dimensional convex polyhedron $D'_0$.
\end{proof}

\vspace*{-0.4cm}

We close this section by discussing briefly an alternative way to analyze the degrees of freedom of solutions in the sets $\Qo$ and $\Qs$. 
This is to view the optimality equation (\ref{eq-oe-q}) for state-action value functions as the optimality equation (\ref{eq-oe-v}) for value functions in an SMDP with enlarged state and action spaces, which we call $\text{SMDP}_q$. Then $\Qo$ becomes the solution set $\V$ for $\text{SMDP}_q$ and can be characterized directly by applying the results of \citepalias{ScF78} to $\text{SMDP}_q$. 

The definition of $\text{SMDP}_q$ is as follows. Its state space is $\S \times \A$, and its action space is $\Pi^D$ (the finite set of deterministic policies of the original SMDP). From its state $(s, a)$ under action $\pi \in \Pi^D$, the probability of transitioning to state $(s', a')$ is given by $p_{ss'}^a \, \ind(\pi(s') = a')$, and the expected one-stage reward and holding time are given by $r_{sa}$ and $l_{sa}$, respectively, independently of the action $\pi$.
It is clear that if the original SMDP is weakly communicating, so is $\text{SMDP}_q$.

We use $\Rst_q$, $n^*_q$, and $\Rs^{*j}_q, 1 \leq j \leq n^*_q$, to refer to the objects given respectively by \eqref{eq-R*}, \eqref{eq-n*}, and the partition of $\Rst_q$ explained after (\ref{eq-n*}), for $\text{SMDP}_q$, while we reserve the notations $\Rst, n^*$, and $\Rs^{*j}, 1 \leq j \leq n^*,$ for these objects in the original SMDP. Recall the optimal action sets $K^*(s), s \in \Rst$, defined by (\ref{eq-K*}) for the original SMDP.
By applying \citepalias[Theorems 3.1 and 3.2]{ScF78}, we can show the following correspondences between $\text{SMDP}_q$ and the original SMDP, assuming the latter is weakly communicating:

\vspace*{-0.2cm}
\begin{mylemma} \label{lem-sol1}
We have $\Rst_q = \{ (s, a) :  a \in \Kst(s), \, s \in \Rst \}$ and $n^*_q = n^*$, and when ordered suitably, the sets $\Rs^{*j}_q = \{ (s, a) :  a \in \Kst(s), \, s \in \Rs^{*j} \}$ for all $1 \leq j \leq n^*$.
\end{mylemma}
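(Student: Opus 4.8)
\textbf{Proof plan for Lemma~\ref{lem-sol1}.}

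The plan is to translate the recurrence-structure objects of the original SMDP into those of $\text{SMDP}_q$ by exploiting the explicit form of the transition kernel $p_{ss'}^a \, \ind(\pi(s') = a')$. The key observation is that a deterministic stationary policy for $\text{SMDP}_q$ is just a map $\sigma$ from $\S \times \A$ into $\Pi^D$, and the Markov chain it induces on $\S \times \A$ projects (via the first coordinate) onto the Markov chain induced on $\S$ by a single deterministic policy whenever $\sigma$ is chosen consistently; more generally, once a state $(s,a)$ is entered, the process moves to $(s', \pi(s'))$ where $\pi = \sigma(s,a)$, so the second coordinate of the state visited at time $n+1$ is always determined by the policy applied at time $n$ together with the first coordinate. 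First I would make precise the correspondence between optimal policies: using \citepalias[Theorem~3.1]{ScF78}, the optimal policies of $\text{SMDP}_q$ are exactly those $\sigma$ whose ``greedy'' behaviour w.r.t.\ some $q \in \Qo$ agrees with an optimal deterministic policy of the original SMDP on the relevant states, so that the optimal action sets transfer: the analogue of $\Kst$ for $\text{SMDP}_q$ at state $(s,a)$ is nonempty precisely when $a \in \Kst(s)$ and $s \in \Rst$.

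Next I would establish $\Rst_q = \{(s,a) : a \in \Kst(s),\, s \in \Rst\}$ by a double inclusion. For ``$\subseteq$'': if $(s,a)$ is recurrent under some optimal policy $\sigma$ of $\text{SMDP}_q$, then projecting onto the first coordinate shows $s$ is recurrent under a corresponding optimal policy of the original SMDP (one reads off $\pi(s') $ from the second coordinate of the successor state), hence $s \in \Rst$; moreover the second coordinate being $a$ at a recurrent state forces $a$ to be the action chosen at $s$ by that optimal policy on its recurrent class, i.e.\ $a \in \Kst(s)$ by the definition \eqref{eq-K*}. For ``$\supseteq$'': given $s \in \Rs(\pi)$ for some $\pi \in \Pi_*^D$ with $\pi(s) = a$, consider the policy $\sigma$ for $\text{SMDP}_q$ that plays $\pi$ at every state; the induced chain on $\S \times \A$ restricted to $\{(s', \pi(s')) : s' \in \Rs(\pi)\}$ is a faithful copy of the chain induced by $\pi$ on $\Rs(\pi)$, so $(s,a)$ is recurrent, and $\sigma$ is optimal since its reward rate equals that of $\pi$. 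This same construction, applied to the distinguished optimal policy $\pi_*$ of the original weakly communicating SMDP described after \eqref{eq-n*} (which realizes $n^*$ recurrent classes $\Rs^{*1}, \ldots, \Rs^{*n^*}$ and plays exactly the actions in $\Kst(s)$ on $\Rst$), shows that $\pi_*$ lifts to an optimal policy of $\text{SMDP}_q$ achieving exactly $n^*$ recurrent classes whose state sets are $\{(s,a) : a \in \Kst(s),\, s \in \Rs^{*j}\}$, giving $n^*_q \le n^*$ and, after checking these are the classes common to all minimizing optimal policies of $\text{SMDP}_q$, the claimed identification of the $\Rs^{*j}_q$ up to reordering.

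To finish I would prove $n^*_q = n^*$ by the reverse inequality: any optimal policy $\sigma$ of $\text{SMDP}_q$ with $\Rs(\sigma) = \Rst_q$ induces, by the projection argument, an optimal policy $\pi$ of the original SMDP with $\Rs(\pi) \supseteq \Rst$ (hence $= \Rst$), and the number of recurrent classes of $\sigma$ is at least that of $\pi$ (distinct recurrent classes of $\pi$ cannot merge under the lift, since the first coordinate already separates them), so $n(\sigma) \ge n(\pi) \ge n^*$; minimizing over $\sigma$ gives $n^*_q \ge n^*$. Combined with $n^*_q \le n^*$ this yields equality, and the partition correspondence follows because the common recurrent classes are determined by $\Rst_q$, $n^*_q$, and the optimal-policy structure, all of which have now been matched to their originals.

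The main obstacle I anticipate is the bookkeeping in the projection argument needed to relate recurrent classes of a policy $\sigma$ on $\S \times \A$ to those of a genuine deterministic policy $\pi$ on $\S$: a general $\sigma$ may play different policies $\sigma(s,a)$ at different states, so the projected dynamics on $\S$ are not a priori Markov for a single $\pi$. The resolution is that on any recurrent class of $\sigma$ the second coordinate $a$ is a deterministic function of the first coordinate $s$ (two visits to the same $(s,\cdot)$ within a recurrent class must have the same $a$, else the class would not be closed under the deterministic kernel), so restricted to $\Rs(\sigma)$ one does recover a well-defined deterministic policy; making this argument airtight, and checking it interacts correctly with the ``common to all optimal policies'' clause in the definition of the $\Rs^{*j}$, is where the care is required. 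The rest is a direct application of \citepalias[Theorems~3.1 and 3.2]{ScF78} to $\text{SMDP}_q$.
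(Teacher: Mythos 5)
The paper never actually writes out a proof of Lemma~\ref{lem-sol1}; it only asserts that the correspondences follow ``by applying'' \citepalias[Theorems 3.1 and 3.2]{ScF78} to $\text{SMDP}_q$. Your overall architecture (double inclusion for $\Rst_q$, lifting the distinguished optimal policy $\pi_*$ to get $n^*_q \le n^*$, projecting to get $n^*_q \ge n^*$) is the natural way to fill this in, and the lifting direction is sound. However, the step you yourself identify as the crux --- and your proposed resolution of it --- is wrong. You claim that on any recurrent class of a policy $\sigma$ of $\text{SMDP}_q$ the second coordinate is a deterministic function of the first, ``else the class would not be closed.'' This is false. Take $\S=\{1\}$, $\A=\{x,y\}$ with $p_{11}^x=p_{11}^y=1$, let $\pi_x,\pi_y$ be the two deterministic policies, and set $\sigma(1,x)=\pi_y$, $\sigma(1,y)=\pi_x$. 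The induced chain on $\S\times\A$ deterministically alternates $(1,x)\to(1,y)\to(1,x)\to\cdots$, so $\{(1,x),(1,y)\}$ is a single closed recurrent class containing two states with the same first coordinate and different second coordinates. Closedness does not force the second coordinate to be a function of the first, because the $a'$ with which you arrive at $s'$ is dictated by the action $\sigma(s,a)\in\Pi^D$ chosen at the \emph{predecessor} state, and different predecessors of $s'$ within the class may prescribe different actions there.

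This gap infects both the inclusion $\Rst_q\subseteq\{(s,a): a\in\Kst(s),\, s\in\Rst\}$ and the inequality $n^*_q\ge n^*$, since in both places you extract a single deterministic policy of the original SMDP by ``reading off'' the second coordinate. The repair exists but must take a different form: the definitions \eqref{eq-R*} and \eqref{eq-n*} are taken over $\Pi_*$, i.e.\ they admit \emph{randomized} stationary policies, so from a recurrent class $C$ of $\sigma$ you should define $\pi_C(a\mid s)$ proportional to the stationary probability of $(s,a)$ on $C$. Then the projection of $C$ is a single recurrent class under $\pi_C$ with the same stationary state--action frequencies, hence the same reward rate (the rewards and holding times of $\text{SMDP}_q$ at $(s,a)$ are $r_{sa}$ and $l_{sa}$ regardless of the action), so $\pi_C$ is optimal on it; distinct classes of $\sigma$ can only merge under projection, which still yields $n(\sigma)\ge n(\pi)\ge n^*$ once you have checked that the projections cover all of $\Rst$. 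Passing from such randomized policies back to $\Kst(s)$ and $\Pi_*^D$ is then exactly where \citepalias[Theorems 3.1 and 3.2]{ScF78} must be invoked. With that substitution your plan goes through; as written, the key step fails.
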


\vspace*{-0.2cm}

Combining Lemma~\ref{lem-sol1} with the characterization of $\Qo$ given in \eqref{eq-param-V} for $\text{SMDP}_q$, we obtain an $n^*$-dimensional parametrization of the set $\Qo$. We can then use it to derive an $(n^*-1)$-dimensional parametrization of the set $\Qs$, similarly to the proof of Theorem~\ref{thm-dim-Qs}.

\vspace*{-0.2cm}

\section{Conclusions and Discussion}\label{sec: conclusions}
We introduced several new theoretical results for average-reward tabular RL algorithms. Our most significant result is the asymptotic convergence of a family of average-reward Q-learning algorithms in weakly communicating MDPs, a class of MDPs that is more general than previously considered. We also characterized the solution sets of these algorithms, demonstrating that they are nonempty, compact, connected, possibly nonconvex, and have one lower degree of freedom than the solution set of the average-reward optimality equation. Extending our results from algorithms operating with actions to those operating with options, we showed that two average-reward options learning algorithms converge when the underlying SMDP is weakly communicating. We believe that our findings contribute to a deeper understanding of average-reward RL algorithms, potentially facilitating their adoption in RL applications where achieving high performance over the long term is desired.

There are several ways in which our work can be extended.
First, in all the studied algorithms, step sizes are defined using the visitation count for each state-action pair. One potential way to extend our work is to develop convergence results for algorithms without these visitation counts, potentially using a recent stability result by \cites{liu2024ode}.
Second, RVI Q-learning in its current state can not handle general MDPs. This is because the algorithm solves only the average-reward optimality equation, while for more general MDPs, optimal policies are characterized by the optimality equation and another equation. One potential future direction is to adapt the RVI Q-learning algorithm to handle general MDPs and extend the analysis developed here to show convergence for the revised algorithm. Third, while RVI Q-learning is a family of tabular algorithms, they can be extended to the function approximation setting, following a way similar to the one outlined in Appendix\ E in \citet{wan2021learning}. A potential future work is to study the convergence of this function approximation extension. 

\vspace*{-0.3cm}

\acks{This research was conducted at the University of Alberta. YW thanks Meta AI for allowing him to finish writing this paper while employed by them. This research was supported in part by DeepMind and Amii. HY also acknowledges the support of the Natural Sciences and Engineering Research Council of Canada (NSERC), RGPIN-2024-04939. HY thanks Professor Eugene Feinberg for the helpful discussion on average-reward SMDPs. We appreciate Dr.\ Martha Steenstrup's helpful feedback on parts of the paper.}

\bibliography{jmlr}

\end{document}